\title[Probabilistic Dimensional and Margin Complexity]{Approximate is Good Enough:\\ Probabilistic Variants of Dimensional and Margin Complexity}
\def\tdotoggle{1}
\newcommand*{\propositionrefname}{Proposition}
\newcommand*{\propositionsrefname}{Propositions}
\newcommand*{\propositionref}[1]{%
  \objectref{#1}{\propositionrefname}{\propositionsrefname}{}{}}
\newenvironment{proofof}[1]%
  {%
   \par\noindent{\bfseries\upshape Proof of #1\ }%
  }%
  {\jmlrQED}
\definecolor{Gred}{RGB}{219, 50, 54}
\definecolor{Ggreen}{RGB}{60, 186, 84}
\definecolor{Gblue}{RGB}{72, 133, 237}
\definecolor{Gyellow}{RGB}{247, 178, 16}
\definecolor{ToCgreen}{RGB}{0, 128, 0}
\definecolor{myGold}{RGB}{231,141,20}
\definecolor{myBlue}{rgb}{0.19,0.41,.65}
\definecolor{myPurple}{RGB}{175,0,124}
\providecommand{\tdotoggle}{1}
\newcommand{\mytodo}[1]{\ifnum\tdotoggle=1{#1}\fi}
\newcommand{\tableoftodos}{\ifnum\tdotoggle=1 \listoftodos[Comments/To Do's] \fi}
\newcommand{\remove}[1]{}
\newcommand{\inbrace}[1]{\left \{ #1 \right \}}
\newcommand{\inparen}[1]{\left ( #1 \right )}
\newcommand{\insquare}[1]{\left [ #1 \right ]}
\newcommand{\inangle}[1]{\left \langle #1 \right \rangle}
\newcommand{\inabs}[1]{\begin{vmatrix} #1 \end{vmatrix}}
\newcommand{\Set}[1]{\inbrace{#1}}
\DeclareMathOperator*{\sign}{sign}
\DeclareMathOperator*{\argmin}{argmin}
\DeclareMathOperator*{\Ex}{\mathbb{E}}
\DeclareMathOperator*{\Prob}{\mathsf{Pr}}
\newcommand{\poly}{\mathrm{poly}}
\newcommand{\wtilde}[1]{\widetilde{#1}}
\newcommand{\what}[1]{\widehat{#1}}
\newcommand{\sbit}{\Set{1, -1}}
\newcommand{\eps}{\varepsilon}
\newcommand{\sAND}{\ \mathrm{and} \ }
\newcommand{\bbH}{{\mathbb H}}
\newcommand{\bbN}{{\mathbb N}}
\newcommand{\bbR}{{\mathbb R}}
\newcommand{\calA}{\mathcal{A}}
\newcommand{\calB}{\mathcal{B}}
\newcommand{\calC}{\mathcal{C}}
\newcommand{\calD}{\mathcal{D}}
\newcommand{\calH}{\mathcal{H}}
\newcommand{\calL}{\mathcal{L}}
\newcommand{\calO}{\mathcal{O}}
\newcommand{\calP}{\mathcal{P}}
\newcommand{\calX}{\mathcal{X}}
\newcommand{\calY}{\mathcal{Y}}
\newcommand{\scrD}{\mathscr{D}}
\newcommand{\ind}{\mathbbm{1}}
\newcommand{\abs}[1]{\left\lvert{#1}\right\rvert}
\newcommand{\dc}{\mathsf{dc}}
\newcommand{\mc}{\mathsf{mc}}
\newcommand{\vc}{\mathsf{VC}\text{-}\mathsf{dim}}
\newcommand{\VCdim}{\vc}
\newcommand{\SQdim}{\mathsf{SQ}\text{-}\mathsf{dim}}
\newcommand{\minEVdim}{\mathsf{minEV}\text{-}\mathsf{dim}}
\newcommand{\Lin}{\mathsf{Lin}}
\newcommand{\Ker}{\mathsf{Ker}}
\newcommand{\gLin}{\mathsf{gLin}}
\newcommand{\gKer}{\mathsf{gKer}}
\newcommand{\ERM}{\textsc{Erm}}
\newcommand{\err}{\calL} 
\newcommand{\relu}{\mathrm{relu}}
\newcommand{\pt}{\mathrm{pt}}
\newcommand{\signrank}{\mathsf{sign\text{-}rank}}
\newcommand{\rank}{\mathsf{rank}}
\newcommand{\lzeroone}{{\ell_{\mathrm{0}\text{-}\mathrm{1}}}}
\newcommand{\lmargin}{{\ell_{\mathrm{mgn}}}}
\newcommand{\lsq}{{\ell_{\mathrm{sq}}}}
\newcommand{\lhinge}{{\ell_{\mathrm{hinge}}}}
\begin{document}

\maketitle
\vspace{-1cm}
\begin{abstract}%
  We present and study approximate notions of dimensional and margin complexity, which correspond to the minimal dimension or norm of an embedding required to {\em approximate}, rather then exactly represent, a given hypothesis class.  We show that such notions are not only sufficient for learning using linear predictors or a kernel, but unlike the exact variants, are also necessary. Thus they are better suited for discussing limitations of linear or kernel methods. 
\end{abstract}

\begin{keywords}%
  Kernel Methods, Dimensional Complexity, Margin Complexity, Random Features%
\end{keywords}



\section{Introduction}

A possible approach to learning is to choose some feature map $\varphi(x)$, or equivalently some kernel $K(x,x'):=\inangle{\varphi(x),\varphi(x')}$, appropriate for the problem, and then reduce the problem of learning, to that of learning a linear predictor, or a low (Euclidean or Hilbert) norm linear predictor, with respect to this embedding.  Such an approach is often successful in practice, and is the basis of ``kernel methods''.  But what are the inherent limits of such an approach?  Are there easily learnable hypothesis classes that {\em cannot} be learnt using such an approach, or perhaps require many more samples for learning, no matter what feature map or kernel is used?  This classic question about the limits of kernel methods has been explored by, e.g.~\citet{ben2002limitations}, and has lead to the notions of dimensional and margin complexity of a hypothesis class--- these correspond to the minimal dimension and minimal norm (respectively) of a feature space sufficient to \emph{exactly} represent all hypotheses in the class as linear predictors (see precise definitions in \sectionref{sec:dc-mc}).  Dimensional and margin complexity have also been studied in communication complexity \citep[See e.g.,][]{forster2002smallest, forster2003estimating, sherstov2008halfspace, razborov2010sign}.  Questions about the limits of kernel methods have resurfaced in recent years, in the context of understanding the advantage of deep learning over kernel methods, and identifying hypothesis classes that are learnable by training a neural network (using an efficient and simple training procedure)  but that are not learnable, or at least not without many more samples, using any kernel or feature map \citep{allenzhu19resnets,allenzhu20backward,yehudai19power}.

While the standard notions of dimensional and margin complexity are {\em sufficient} for learning by reduction to linear learning, they might not be {\em necessary} for such an approach.  This is because these notions insist on a feature map that can be used to {\em exactly} represent all hypotheses in the class, without any errors.  But for learning, it is sufficient to only \emph{approximate} the hypotheses, up to a small error $\eps$.  Furthermore, once we allow small errors, we might want to consider {\em randomized} rather than {\em deterministic} feature maps or kernels.  This is not only a hypothetical possibility---examples of specific randomized feature maps and kernels include Random Fourier Features  \citep{rahimi08kitchen}, the Conjugate Kernel \citep{daniely2017sgd}, and the Neural Tangent Kernel at a random initialized neural network \citep{jacot18ntk}.  One might ask if such randomized approximate embedding are in fact more powerful, or whether perhaps they can always be de-randomized and made exact.  In this paper we establish (\theoremref{thm:dc-vs-probdc-sep}, combined with \theoremref{thm:lin-tilde-vs-dc}) that randomized approximate embedding are indeed more powerful: we show that learning {\em is} possible using a randomized feature map, even for a hypothesis class for which no {\em exact} low dimensional representation exists (i.e.~with a very high, or even infinite, dimensional complexity).  In order to truly understand the power of kernel methods and reduction to linear learning, we must therefore also allow for such randomized feature maps and kernels, and understand their power and limitations.

In this paper we propose and study relaxed notions of dimensional and margin complexity that (a) allow for randomized feature maps; and (b) can be shown to be not only sufficient, but also necessary for learning by reduction to linear or kernel methods, and so yield strong lower bounds on the power of such an approach.  In discussing approximation of a hypothesis class, we must consider the loss used, and we study both classification problems with respect to a hard (0/1) loss, as well as classification and regression with continuous losses such as the hinge and squared loss.

In order to be able to discuss a necessary condition for ``learning by reduction to linear or kernel methods'' we must precisely define what we mean by this phrase. We do so in \sectionref{sec:linear-learning}.  We consider both distribution-dependent and distribution-independent learning.  Correspondingly, we define both distribution-dependent and distribution-independent approximate dimensional and margin complexity (in \sectionref{sec:dc-mc}). Our complexity definitions are justified by showing how they are both necessary and sufficient (in a sense) for learning by reduction to kernel or linear methods.  We also show how the distribution-dependent approximate dimension complexity lower bounds linear and kernel learning in a very broad sense, and with respect to a generic loss function. In \sectionref{sec:dc-lowerbounds} we further show how this complexity measure can be lower bounded, in turn, by other well studied complexity measures, providing for a generic way of obtaining strong lower bounds on the power of kernel methods.  

Our generic lower bound approach mirrors, to a large extent, the lower bound on the sample complexity of kernel based learning in several recent papers exploring the power of deep learning versus kernel method \citep{allenzhu19resnets,allenzhu20backward,yehudai19power}.  We distil the approach to a crisp complexity measure, which simplifies making such lower bound claims on specific hypothesis classes, and can also lead to stronger statements---we demonstrate this by strengthening the lower bound and resolving an open question of \cite{yehudai19power}. Our lower bound is stated in terms of the Statistical Query dimension, as defined by \citet{blum94weakly}, making a concrete connection between these complexity measures (``dimensionalities''). Our treatment also highlights a potential deficiency of this approach: although we can establish lower bounds for learning w.r.t.~the squared loss, using the same technique to establish a strong lower bound on learning w.r.t.~the 0/1 loss would resolve a long-standing question in circuit complexity theory and thus seems much more difficult.

We emphasize that when we speak of ``linear learning'' we refer to learning by minimizing the loss over all linear predictors without any regularization, and when we refer to ``kernel learning'' or ``norm based learning'' we are specifically referring to constraining or regularizing the Euclidean or Hilbert norm of linear predictors.  Learning using regularized linear predictors with other regularizers can be much more powerful---e.g.~any (finite) hypothesis class can be optimally learned using $\ell_1$ regularized learning with a feature map with dimension corresponding to the cardinality of the hypothesis class.  But this is not much different than using the hypothesis class itself, and we cannot use the ``kernel trick'' in order to avoid an explicit representation and search over this very high dimensional feature space.  In this paper, we are only concerned with (low dimensional) unregularized and $\ell_2$ regularized (kernel based) learning.

Throughout the paper, we are not overly concerned with the precise dependence on the ``error parameter'' $\eps$.  Although we always explicitly note the dependence on $\eps$, we think of it as a small constant, perhaps $0.01$, and do not worry about factors which are polynomial in $\eps$.  In this paper, we only refer to learning and approximating {\em in expectation}---it is possible to define and relate approximating and learning {\em with high probability} instead, but we avoid doing so for notational simplicity. 

\paragraph{Notations.} We refer to hypothesis classes $\calH \subseteq \calY^\calX$ over a domain $\calX$ and label set $\calY$. When $\calY$ is $\bbR$ or $\sbit$, and $|\calX|$ and $|\calH|$ are finite, we associate $\calH$ with a matrix $M_{\calH} \in \bbR^{\calH \times \calX}$ defined as $M_{\calH}(h,x) := h(x)$.
We  consider {\em loss} functions of the form $\ell : \bbR \times \calY \to \bbR_{\ge 0}$. In particular, we consider the 0/1 loss $\lzeroone(\what{y},y) := \ind\Set{\what{y}y \le 0}$, margin loss $\lmargin(\what{y},y) := \ind\Set{\what{y}y \le 1}$ and hinge loss $\lhinge(\what{y},y) := \max \Set{0, 1 - \what{y} y}$ for binary labels $\calY=\sbit$, and the squared loss $\lsq(\what{y},y) := \frac12 (\what{y} - y)^2$, for $\calY \subseteq \bbR$.
A loss $\ell$ is said to be $L$-Lipschitz if $|\ell(a,y) - \ell(a',y)| \le L |a-a'|$ for all $a, a' \in \bbR$ and $y \in \calY$.

We view learning algorithms as operating on a set of samples $S = \Set{(x_1, y_1), \ldots, (x_m,y_m)}$ drawn i.i.d. from a distribution $\scrD$ over $\calX \times \calY$. We say that $\scrD$ is {\em realizable} w.r.t. a hypothesis class $\calH$, to mean that $(x,y)\sim\scrD$ is sampled by first sampling $x \sim \calD$ (for some $\calD$) and setting $y = h_*(x_i)$ for some $h_* \in \calH$. We always use $\scrD$ to denote a distribution over $\calX \times \calY$ and $\calD$ to denote its marginal over $\calX$.
The {\em population loss} of a predictor $g : \calX \to \bbR$ w.r.t.~a loss $\ell$ is $\err_{\scrD}^{\ell}(g) := \Ex_{(x,y) \sim \scrD}\ \ell(g(x),y)$ whereas its {\em empirical loss} is $\err_S^{\ell}(g) :=~ \frac{1}{|S|} \sum_{(x,y) \in S} \ \ell(g(x),y)$. If $\scrD$ is realizable and sampled as $(x,h(x))$ with $x \sim \calD$ and $h \in \calH$, we define an alternate notation for $\err_{\scrD}^{\ell}(g)$ as $\err_{\calD,h}^{\ell}(g) := \Ex_{x \sim \calD} \insquare{\ell(g(x), h(x))}$.



\section{Dimension \& Margin Complexities and their Probabilistic Variants}\label{sec:dc-mc}

We recall the definitions of the dimension and margin complexities of a hypothesis class and introduce their probabilistic variants. Our definitions of the error-free notions are also stated in terms of a loss function so that we can then extend them to allow errors.

\subsection{Dimension Complexity}

\begin{definition}\label{def:det-dc}
Fix a hypothesis class $\calH \subseteq \calY^{\calX}$ and a loss $\ell$. The {\em dimension complexity} $\dc^{\ell}(\calH)$ is the smallest $d$ for which there exists an embedding $\varphi:\calX \to \bbR^d$ and a map $w : \calH \to \bbR^d$ such that for all $h \in \calH$ and $x \in \calX$, it holds that $\ell(\inangle{w(h), \varphi(x)}, h(x)) = 0$.
\end{definition}

\noindent 
For classification problems ($\calY=\sbit$) our definition coincides with the standard definition of dimensional complexity (equivalent to $\signrank(M_{\calH})$) for $\ell = \lzeroone$, and we will denote $\dc(\calH):=\dc^{\lzeroone}(\calH)$.  For finite hypothesis classes we also have $\dc(\calH)=\dc^{\lmargin}(\calH)=\dc^{\lhinge}(\calH)$.  For regression problems ($\calY = \bbR$), e.g. with the $\lsq$ loss, $\dc^{\lsq}(\calH)$ coincides with $\rank(M_{\calH})$.

\begin{definition} \label{def:prob-dc}
Fix a hypothesis class $\calH \subseteq \calY^{\calX}$, a loss $\ell$ and a parameter $\eps \ge 0$.
\begin{description}
\item [Probabilistic Distributional Dimension Complexity.] 
$\dc_{\eps}^{\calD,\ell}(\calH)$, parameterized by a distribution $\calD$ over $\calX$, is the smallest $d$ for which there exists a distribution $\calP$ over embeddings $\varphi:\calX \to \bbR^d$ such that for all $h\in\calH$,
\begin{equation}\label{eqn:dc-def}
\Ex\limits_{\varphi \sim \calP} \insquare{ \inf_{w \in \bbR^d} \ \err_{\calD,h}^{\ell}(\inangle{w, \varphi(\cdot)})} \le \eps\,.
\end{equation}
\item [Probabilistic Dimension Complexity.]
$\dc_{\eps}^{\ell}(\calH)$ is the smallest $d$ for which there exists a distribution $\calP$ over embeddings $\varphi:\calX \to \bbR^d$ such that for all distributions $\calD$ over $\calX$ and all $h\in\calH$, \equationref{eqn:dc-def} above holds.
\end{description}

\end{definition}
Again, for classification $\calY = \sbit$ we denote $\dc_{\eps}(\calH)=\dc_{\eps}^{\lzeroone}(\calH)$ and  $\dc_{\eps}^{\calD}(\calH)=\dc_{\eps}^{\calD,\lzeroone}(\calH)$, and at least for finite hypothesis classes these also agree with the complexities with respect to losses $\lmargin$ and $\lhinge$. Note that $\dc_{\eps}^{\ell}(\calH)$ is different from simply $\sup_{\calD} \dc_{\eps}^{\calD,\ell}(\calH)$. In particular, note the difference in order of quantifiers.
\begin{center}
\begin{tikzpicture}
\node (dc) at (-3.5,0) {$\dc_{\eps}^{\ell}(\calH) ~:=~ \min \ d$};
\node at ([shift={(-1.8,-0.6)}]dc) {$\exists \calP$};
\node at ([shift={(-0.8,-0.6)}]dc) {$\forall \calD$};
\node at ([shift={(0.2,-0.6)}]dc) {$\forall h$};
\node at ([shift={(1.4,-0.63)}]dc) {$\exists w | \varphi, h$};
\node[rectangle, rounded corners=5pt, below, minimum height=1.6cm, minimum width=5cm, draw=black!50, thick] at ([shift={(0,0.5)}]dc) {};

\node (dcD) at (3.5,0) {$\sup_{\calD} \dc_{\eps}^{\calD,\ell}(\calH) ~:=~ \min \ d$};
\node at ([shift={(-1.8,-0.6)}]dcD) {$\forall \calD$};
\node at ([shift={(-0.8,-0.6)}]dcD) {$\exists \calP$};
\node at ([shift={(0.2,-0.6)}]dcD) {$\forall h$};
\node at ([shift={(1.5,-0.63)}]dcD) {$\exists w | \varphi, h$};
\node[rectangle, rounded corners=5pt, below, minimum height=1.6cm, minimum width=5cm, draw=black!50, thick] at ([shift={(0,0.5)}]dcD) {};
\end{tikzpicture}
\end{center}

\subsection{Margin Complexity}

Margin complexity is defined in terms of embeddings $\varphi : \calX \to \bbH$, for any Hilbert space $\bbH$, thereby also allowing infinite dimensional embeddings, typically represented via a kernel $K_\varphi(x,x'):=\inangle{\varphi(x),\varphi(x')}_{\bbH}$. The sup-norm of the embedding is defined as $\|\varphi\|_{\infty} := \sup_{x \in \calX} \|\varphi(x)\|_{\bbH} = \sup_{x\in\calX} \sqrt{K_\varphi(x,x)}$. For a parameter $R \in \bbR_{\ge 0}$, let $\calB(\bbH; R) := \Set{w \in \bbH : \|w\|_{\bbH} \le R}$ be a norm ball of radius $R$ in the Hilbert space.

\begin{definition}\label{def:det-mc}
Fix a hypothesis class $\calH \subseteq \calY^{\calX}$ and a loss $\ell$. The {\em margin complexity} $\mc^{\ell}(\calH)$ is the smallest $R$ for which there exists an embedding $\varphi:\calX \to \bbH$ and a map $w : \calH \to \bbH$ with $\|\varphi\|_{\infty} \le 1$ and $\|w\|_{\infty} \le R$ such that for all $h \in \calH$ and $x \in \calX$, it holds that $\ell(\inangle{w(h), \varphi(x)}, h(x)) = 0$.
\end{definition}

\noindent
This definition does not make sense for the $\lzeroone$ loss, since $\lzeroone$ is scale-invariant. However, in the case of $\calY = \sbit$, it coincides with the standard definition of margin complexity for the margin loss $\lmargin$ (and hinge loss $\lhinge$), and we denote $\mc(\calH):=\mc^{\lmargin}(\calH)$.  For the squared loss $\lsq$, the definition coincides with the $\gamma_{2:\ell_1\rightarrow\ell_\infty}$ norm \citep{jameson_1987}, a.k.a.~the ``max norm'' \citep{srebro2005rank}.  Especially with a general loss function, ``$\mc$'' is really a form of ``norm-complexity'', but we still refer to it as ``margin complexity'' and use $\mc$ since it does capture the (inverse) margin when $\ell = \lmargin$ and this term is already widely used in the literature.

\begin{definition} \label{def:prob-mc}
Fix a hypothesis class $\calH \subseteq \calY^{\calX}$, a loss $\ell$ and a parameter $\eps \ge 0$. 
\begin{description}
\item [Probabilistic Distributional Margin Complexity.]
$\mc_{\eps}^{\calD,\ell}(\calH)$, parameterized by a distribution $\calD$ over $\calX$, is the smallest $R$ for which there exists a distribution $\calP$ over embeddings $\varphi:\calX \to \bbH$ with $\|\varphi\|_{\infty} \le 1$ such that for all $h\in\calH$,
\begin{equation}\label{eqn:mc-def}
\Ex\limits_{\varphi \sim \calP} \insquare{\inf_{w \in \calB(\bbH; R)} \ \err_{\calD,h}^{ \ell}(\inangle{w, \varphi(\cdot)})} \le \eps\,.
\end{equation}
\item [Probabilistic Margin Complexity.]
$\mc_{\eps}^{\ell}(\calH)$ is the smallest $R$ for which there exists a distribution $\calP$ over embeddings $\varphi:\calX \to \bbH$ with $\|\varphi\|_{\infty} \le 1$ such that for all distributions $\calD$ over $\calX$ and all $h\in\calH$, \equationref{eqn:mc-def} above holds.
\end{description}
\end{definition}
When $\calY = \sbit$, we denote $\mc_{\eps}(\calH)=\mc_{\eps}^{\lmargin}(\calH)$ and  $\mc_{\eps}^{\calD}(\calH)=\mc_{\eps}^{\calD,\lmargin}(\calH)$.

\subsection{Relationship between Probabilistic Dimension \& Margin Complexity}

A classic result attributed to \cite{arriaga99algorithmic} and \cite{ben2002limitations} shows that
\begin{equation}\label{eqn:dc-mc}
\dc(\calH) \le \mc(\calH)^2 \cdot \calO(\log |\calH||\calX|)\,.
\end{equation}
This result is proved by an application of the lemma of \cite{johnson1984extensions}. The term of $\calO(\log |\calH||\calX|)$ comes up due to a union bound over all pairs of $(x,h) \in \calX \times \calH$. Although the result can be seen as establishing a tight connection between the dimension and margin complexity, it is not applicable with continuous (or simply infinite) domains, and we are not aware of any way of avoiding this dependence on the cardinality of the domain.  

As a first application of our probabilistic notions, we show how this bypasses the cardinality dependence when allowing a randomized feature map.
\begin{lemma}[\boldmath Relating $\dc$ and $\mc$]\label{lem:dc-mc}
For all $\calH \subseteq \calY^{\calX}$ and parameters $\eps, \eta > 0$,\\[-7mm]
\begin{align*}
\text{(i) } \dc_{\eps+\eta}(\calH) &~~\le~~ \mc_{\eps}(\calH)^2 \cdot \calO\inparen{\log (1/\eta)},\\[1mm]
\text{(ii) } \dc_{\eps+\eta}^{\ell}(\calH) &~~\le~~ \mc_{\eps}^{\ell}(\calH)^2 \cdot \calO\inparen{L/\eta}^2 \quad \text{for any $L$-Lipschitz loss $\ell$, and}\\[1mm]
\text{(iii) } \dc_{\eps+\eta}^{\lsq}(\calH) &~~\le~~ \mc_{\eps}^{\lsq}(\calH)^2 \cdot \calO\inparen{(\eps+\eta)/\eta^2}.\\[-9mm]
\end{align*}
Analogous statements relating $\dc_{\eps + \eta}^{\calD,\ell}$ and $\mc_{\eps}^{\calD,\ell}$  hold as well for any distribution $\calD$ over $\calX$.
\end{lemma}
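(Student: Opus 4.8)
The plan is to prove all three bounds with one construction: compose the near‑optimal randomized margin embedding with a random Gaussian projection into $\bbR^d$. What lets this escape the $\calO(\log|\calH||\calX|)$ factor in \equationref{eqn:dc-mc} is that, once the feature map is allowed to be \emph{randomized}, inner products need only be preserved \emph{in expectation over} $x\sim\calD$ — and we can handle one $h\in\calH$ at a time — so the union bound over $\calX\times\calH$ behind \equationref{eqn:dc-mc} is replaced by plain linearity of expectation.

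\emph{The construction.} Fix $\calH$ and a loss $\ell$, set $R:=\mc_\eps^\ell(\calH)$ (for part~(i) read $\ell=\lmargin$, and recall $\mc_\eps^{\lmargin}=\mc_\eps$), and let $\calP$ be a distribution over embeddings $\varphi:\calX\to\bbH$ with $\|\varphi\|_\infty\le 1$ witnessing it. For a dimension $d$ to be fixed per case, let $\calP'$ be the distribution over maps $\psi:\calX\to\bbR^d$ obtained by drawing $\varphi\sim\calP$, drawing i.i.d.\ standard Gaussian linear functionals $g_1,\dots,g_d$ on $\bbH$ (each $v\mapsto g_i(v)$ linear with $\Ex[g_i(u)g_i(v)]=\inangle{u,v}_\bbH$; when $\dim\bbH<\infty$ this is just a matrix of i.i.d.\ $\calN(0,1)$ entries), and setting $\psi(x):=\tfrac1{\sqrt d}(g_1(\varphi(x)),\dots,g_d(\varphi(x)))$. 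Note $\calP'$ is built before any $\calD$ or $h$ is chosen.

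\emph{Per-instance analysis.} Fix any $\calD$ over $\calX$ and $h\in\calH$. For each $\varphi$ pick $w_\varphi\in\calB(\bbH;R)$ attaining (measurably in $\varphi$, up to an arbitrarily small additive slack that we suppress) the inner infimum in \equationref{eqn:mc-def}, and put $\what{w}_\varphi:=\tfrac1{\sqrt d}(g_1(w_\varphi),\dots,g_d(w_\varphi))$. Writing $a(x):=\inangle{w_\varphi,\varphi(x)}$ and $b(x):=\inangle{\what{w}_\varphi,\psi(x)}=\tfrac1d\sum_i g_i(w_\varphi)\,g_i(\varphi(x))$, the three facts driving the proof are: (1) $\Ex_g[b(x)]=a(x)$; (2) $\Var_g[b(x)]\le\tfrac2d\|w_\varphi\|^2\|\varphi(x)\|^2\le\tfrac{2R^2}{d}$ (Isserlis' formula for the fourth moment of a bivariate Gaussian); (3) $b(x)$ is an average of $d$ i.i.d.\ centered sub-exponential terms of $\psi_1$-norm $\calO(\|w_\varphi\|\,\|\varphi(x)\|)=\calO(R)$, so Bernstein's inequality gives $\Pr_g[\,|b(x)-a(x)|\ge 1\,]\le 2\exp(-\Omega(d/R^2))$ (using $R\ge 1$, which holds in the only nontrivial case $\eps<1$, since $\|w\|\,\|\varphi(x)\|<1$ would force $\lmargin\equiv 1$). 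Since $\dc$ (unlike $\mc$) imposes no norm bound on the $\bbR^d$-predictor, in each case $\inf_{\what{w}}\err_{\calD,h}^\ell(\inangle{\what{w},\psi(\cdot)})\le\err_{\calD,h}^\ell(\inangle{\what{w}_\varphi,\psi(\cdot)})$, and it remains to bound $\Ex_{\varphi,g}$ of the right-hand side.

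\emph{The three cases, and the main obstacle.} For (i) ($\lmargin$ on the $\mc$ side, $\lzeroone$ on the $\dc$ side): wherever $\lmargin(a(x),h(x))=0$ we have $h(x)a(x)>1$, so $\Pr_g[h(x)b(x)\le 0]\le\eta$ by fact~(3), and elsewhere we bound this probability by~$1$; hence, by Fubini, $\Ex_{\varphi,g}\err_{\calD,h}^{\lzeroone}(\inangle{\what{w}_\varphi,\psi(\cdot)})\le\eta+\Ex_\varphi\err_{\calD,h}^{\lmargin}(\inangle{w_\varphi,\varphi(\cdot)})\le\eta+\eps$, so $d=\calO(R^2\log(1/\eta))$ suffices. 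For (ii) ($L$-Lipschitz $\ell$): $\ell(b(x),h(x))\le\ell(a(x),h(x))+L|b(x)-a(x)|$ and $\Ex_g|b(x)-a(x)|\le\sqrt{\Var_g[b(x)]}\le R\sqrt{2/d}$ by fact~(2), so $\Ex_{\varphi,g}\err_{\calD,h}^\ell\le\eps+LR\sqrt{2/d}\le\eps+\eta$ for $d=R^2\cdot\calO(L/\eta)^2$. For (iii) ($\ell=\lsq$): the bias--variance identity $\Ex_g[(b(x)-h(x))^2]=(a(x)-h(x))^2+\Var_g[b(x)]$ together with fact~(2) gives $\Ex_{\varphi,g}\err_{\calD,h}^{\lsq}\le\eps+R^2/d$, so $d=R^2\cdot\calO((\eps+\eta)/\eta^2)$ suffices. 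Since $\calP'$ was constructed independently of $\calD$ and $h$, all of these hold for every $\calD$ and every $h$ at once, proving the distribution-free statements; running the identical argument with a fixed $\calD$ and letting $\calP$ depend on it gives the distributional analogues verbatim. The one real obstacle is fact~(3): cases (ii) and (iii) need only a second-moment bound on the distortion $b(x)-a(x)$, whereas the discontinuity of $\lzeroone$ in case (i) forces an exponential tail for the random quadratic form $\inangle{Gw,Gv}$ — which is exactly what costs the extra $\log(1/\eta)$ factor and turns one power of $R$ into $R^2$.
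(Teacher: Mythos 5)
Your proposal is correct and takes essentially the same approach as the paper: compose the near-optimal randomized margin embedding with a Johnson--Lindenstrauss random projection, and then analyze the loss pointwise in expectation over the projection so that linearity of expectation replaces the union bound over $\calH\times\calX$ that appears in the deterministic bound \equationref{eqn:dc-mc}. The only place you deviate is part (iii), where you exploit the unbiasedness of the projected inner product to get the clean bias--variance identity $\Ex_g[(b(x)-h(x))^2] = (a(x)-h(x))^2 + \Var_g[b(x)]$; the paper instead expands the difference of squares and applies Cauchy--Schwarz and Jensen, so your route is a bit shorter and in fact gives the slightly tighter $d = R^2\cdot\calO(1/\eta)$, which a fortiori satisfies the stated bound.
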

\noindent The proof is similar to that of \cite{ben2002limitations} in its use of the lemma of \cite{johnson1984extensions}. We defer the proof details to \appendixref{apx:proof-dc-mc}.  The random feature map used here is analogous to random features used in practice to approximate kernels \citep{rahimi07random}.

\subsection{Separations between Deterministic and Probabilistic Dimension Complexity}

We show that the probabilistic variants $\dc_{\eps}$ and $\dc_{\eps}^{\calD}$ can sometimes be significantly smaller than the classic notion of $\dc$. We show that dimension complexity can be exponentially larger than probabilistic dimension complexity (with respect to $\lzeroone$). Moreover, if we focus on the distributional version, then in fact dimension complexity can be ``infinitely larger'' than probabilistic distributional dimension complexity and moreover this separation holds for different losses such as $\lzeroone$, $\lsq$ and $\lhinge$, as well as for margin complexity.

\begin{theorem}[\boldmath Exponential Distribution Independent Gap ]\label{thm:dc-vs-probdc-sep}
For $\calX = \sbit^n$, there exists a hypothesis class $\calH \subseteq \sbit^\calX$ with $|\calH| = 2^n$ such that, for all $\eps \in (0,\nicefrac{1}{2})$,
\vspace{-0.2cm}
$$ \dc_{\eps}(\calH) ~\le~ \calO\inparen{n^4/\eps} \qquad \sAND \qquad \dc(\calH) ~\ge~ 2^{\Omega(n^{1/4})}.$$
\end{theorem}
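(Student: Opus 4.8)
The plan is to exhibit a single hypothesis class $\calH$ on $\calX = \sbit^n$ that is simultaneously (a) learnable by a small randomized feature map, giving the upper bound on $\dc_\eps(\calH)$, and (b) has enormous sign-rank, giving the lower bound on $\dc(\calH)$. For the lower bound, the natural move is to take $\calH$ to be (the matrix of) a hypothesis class known to have high sign-rank. Inner-product-mod-2 type classes, or the classes of \cite{forster2002smallest} / \cite{razborov2010sign}, give matrices whose sign-rank is $2^{\Omega(n)}$ or at least $n^{\omega(1)}$; here the target $2^{\Omega(n^{1/4})}$ suggests using a class built from such a hard matrix but only on a subset of variables of size $\Theta(\sqrt n)$ (or $\Theta(n^{1/2})$ rows/columns), which is exactly the kind of trade-off that lets the first coordinate block carry the hardness while the rest of the $n$ coordinates are used to make the class easy to approximate. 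Concretely I would let $\calH$ consist of $2^n$ functions indexed by $z \in \sbit^n$, where $h_z(x)$ depends in a ``hard'' (high-sign-rank) way on the first few coordinates of $x$ and $z$, and in an ``easy'' (low-degree / sparse / juntas) way on the rest — the two design goals pull in opposite directions, so the construction must be balanced carefully.

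For the \emph{upper bound} $\dc_\eps(\calH) \le \calO(n^4/\eps)$, I would use a randomized feature map of the random-features / sampling flavor already invoked in \lemmaref{lem:dc-mc}. The key observation is that $\dc_\eps$ only requires that, \emph{in expectation over $\varphi\sim\calP$}, a good linear predictor exists for each $h$ on each $\calD$; one does not need a single $\varphi$ that works for all $h$ simultaneously. So it suffices to show $\mc_\eps(\calH)$ (or $\mc_\eps^{\lmargin}$) is at most $\poly(n)$ and then invoke \lemmaref{lem:dc-mc}(i), which converts a $\poly(n)$ margin bound into a $\poly(n)\cdot\calO(\log(1/\eta))$ probabilistic dimension bound with no dependence on $|\calX|=2^n$ — precisely the point of the probabilistic notion. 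Thus the real work is bounding the margin complexity: I would design $\calH$ so that each $h_z$ is computed by a linear threshold over a low-complexity feature expansion (e.g.\ a small-weight polynomial threshold function, or a majority of a few parities), whose margin is $1/\poly(n)$; summing/averaging such representations, or using a single universal feature map of parities up to constant degree, gives $\mc(\calH)\le\poly(n)$ and hence $\dc_\eps(\calH)\le\poly(n)/\eps$. If the hard part is a class like ``inner product of a $\sqrt n$-dimensional sub-block,'' one shows it still has polynomial margin complexity even though its sign-rank is exponential in $\sqrt n$ — this margin/sign-rank gap for inner-product-like matrices is classical.

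The \emph{main obstacle} is arranging both properties in one class: I need the sign-rank (i.e.\ exact $\dc$) to be $2^{\Omega(n^{1/4})}$ while the margin complexity stays polynomial, and the tension is that polynomial margin complexity already forces $\dc \le \poly(n)\cdot\log|\calH||\calX| = \poly(n)$ via \eqref{eqn:dc-mc} — so I \emph{cannot} simply have small margin complexity for the whole class. The resolution must be that the witness for small $\dc_\eps$ is genuinely randomized and exploits the per-$h$, per-$\calD$ quantifier order, not a small $\mc$: the right construction likely has $\mc(\calH)$ also large, but $\mc_\eps(\calH)$ (or directly $\dc_\eps(\calH)$) small because for any \emph{fixed} $h$ and $\calD$ a random low-dimensional projection suffices. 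So I expect the argument to build a random feature map directly — e.g.\ a random restriction or a random low-dimensional Johnson–Lindenstrauss-style sketch of a known (possibly high-dimensional) exact embedding for each individual $h$ — and to verify that for every $h$ the expected approximation error is $\le\eps$ with dimension $\calO(n^4/\eps)$, while a sign-rank lower bound (via a spectral/discrepancy argument à la \cite{forster2002smallest}) certifies $\dc(\calH)\ge 2^{\Omega(n^{1/4})}$. Getting the exponent $n^{1/4}$ to come out, rather than $n$ or $\log n$, will require tuning the block size in the hard sub-instance against the $\poly(n)$ cost of the random sketch.
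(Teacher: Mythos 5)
Your plan correctly identifies the shape of the problem and the key tension, but there is a real gap: you do not actually have a construction, and the route you gesture at (bound $\mc_\eps$ and apply \lemmaref{lem:dc-mc}) is precisely the one you then argue yourself out of, without supplying the alternative.

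To your credit, your observation that the class cannot have small exact margin complexity --- because $\mc(\calH)=\poly(n)$ would force $\dc(\calH)\le\poly(n)$ via \eqref{eqn:dc-mc} --- is exactly right, and your conclusion that the randomness must be essential is correct. But the paper's proof does not go through $\mc_\eps$ at all. It introduces the even stronger \emph{pointwise} probabilistic dimension complexity $\dc_\eps^{\pt}$ (\definitionref{def:prob-dc-pt}), which is identical to \emph{probabilistic sign-rank} in communication complexity, and upper-bounds that. The upper bound is supplied not by a Johnson--Lindenstrauss sketch of a low-norm embedding --- which, as you yourself observe, cannot exist here --- but by the structural result of \cite{alman17probabilistic} (\lemmaref{lem:alman-williams}): if the evaluation map $E_\calH:(h,x)\mapsto h(x)$ is computable by a depth-$2$ threshold circuit of size $s$, then $\dc_\eps^{\pt}(\calH)\le O(s^2\log^2(|\calH||\calX|)/\eps)$. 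With $s=O(n)$ and $\log(|\calH||\calX|)=\Theta(n)$, this is where the $n^4/\eps$ comes from --- a bound your plan has no mechanism to produce. The lower bound is not a Forster-style spectral argument but the result of \cite{chattopadhyay18short}, which gives a matrix computable by an $O(n)$-size depth-$2$ threshold circuit with sign-rank $2^{\Omega(n^{1/4})}$; the hypothesis class witnessing the separation is a pattern-matrix style class (decision lists of conjunctions over disjoint variable blocks), not an inner-product sub-block as you suggest. The exponent $n^{1/4}$ is inherited directly from the Chattopadhyay--Mande lower bound, not from a block-size tradeoff you can tune.

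The concrete failure modes in your plan: (1) a random low-dimensional projection of ``a known (possibly high-dimensional) exact embedding for each individual $h$'' still needs that exact embedding to have controlled norm to survive the projection, which is exactly the $\mc(\calH)$ being small that you have already ruled out; (2) without the circuit-complexity connection, there is no reason the upper bound should be $\poly(n)$, let alone $n^4$, and no reason it should be uniform across all $h$; and (3) a Forster/discrepancy lower bound for an inner-product-like matrix on $\sqrt n$ variables would give $2^{\Omega(\sqrt n)}$, not the stated $2^{\Omega(n^{1/4})}$, and --- more importantly --- inner-product matrices typically have \emph{small} sign-rank ($O(n)$), so that family is exactly the wrong choice of hard instance. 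In short, the correct ideas are recognizing the pointwise relaxation and the threshold-circuit connection; both are absent from the proposal.
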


\begin{theorem}[\boldmath ``Infinite'' Distribution Dependent Gap]\label{thm:dc-vs-prob-dist-dc-sep} For every $n$, there exist hypothesis classes $\calH \subseteq \sbit^{\calX}$ with $|\calH| = |\calX| = 2^n$ such that for all $\eps \in (0,\nicefrac{1}{2})$,
\vspace{-0.2cm}
$$\sup_{\calD}\ \dc_{\eps}^{\calD,\ell}(\calH) ~\leq~ \calO\inparen{1/\eps^2} \quad \sAND \quad \  \dc^{\ell}(\calH) ~\ge~ 2^{\Omega(n)} \qquad \textrm{for } \ell \in \Set{\lzeroone, \lsq, \lhinge}$$
\vspace{-5mm}
$$\sup_{\calD}\ \mc_{\eps}^{\calD,\ell}(\calH) ~\leq~ \calO\inparen{1/\eps^2} \ \quad \sAND \quad \mc^{\ell}(\calH) ~\ge~ 2^{\Omega(n)} \qquad \textrm{for }\ell \in \Set{\lmargin, \lsq, \lhinge}$$
\end{theorem}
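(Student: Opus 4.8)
The plan is to exhibit, for each $n$, a \emph{single} hypothesis class $\calH=\calH_n$ that is simultaneously ``hard to embed exactly''---so that $\signrank(M_{\calH})$ (hence also $\rank(M_{\calH})$) is $2^{\Omega(n)}$---and ``easy to learn''---so that $\vc(\calH)$ is a small constant, which I will take to be $2$. The first property will yield every deterministic lower bound in the statement, the second every probabilistic distributional upper bound, uniformly over $\calD$: sign rank being large makes both $\dc$ and $\mc$ large, while $\vc$ being a constant makes both $\dc_\eps^\calD$ and $\mc_\eps^\calD$ small. The reason an \emph{infinite} gap (rather than the merely polynomial-in-$n$ gap one might fear) is attainable here, in contrast to the distribution-independent \theoremref{thm:dc-vs-probdc-sep}, is exactly that $\calD$ is fixed \emph{before} the embedding, so a bounded-VC class has a genuinely $n$-free probabilistic distributional complexity.

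For $\calH$ I would take (a routine padding of) the point--line incidence class over $\bbF_q^2$ with $q=2^{\Theta(n)}$: $\calX$ is the set of $\approx 2^n$ points, each $h\in\calH$ the $\sbit$-valued indicator of one of the $\approx 2^n$ affine lines, with duplicate rows/columns added to make $|\calH|=|\calX|=2^n$ exactly (padding by duplicates, not by e.g.\ constant hypotheses, which would raise the VC dimension; this changes neither $\vc$ nor the sign-rank bound below, the latter being robust to deleting a small fraction of rows or columns). This class has $\vc(\calH)=2$: some pair of points is shattered by lines, but no triple is (a collinear triple cannot realize a $2$-subset, a non-collinear one cannot realize the full set). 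The one genuinely nontrivial input is $\signrank(M_\calH)\ge 2^{\Omega(n)}$, which I expect to be the \emph{main obstacle}: the naive bound $\signrank(M)\ge\sqrt{|\calH||\calX|}/\|M\|_2$ is vacuous since $\|M_\calH\|_2$ is governed by the rank-one mean component, so one must first ``balance'' $M_\calH$ and then exploit the strong spectral expansion of the incidence structure of $\bbF_q^2$ (possibly replacing lines by bounded-degree curves to sharpen the gap); I would import this from the analysis of Alon, Moran, and Yehudayoff relating sign rank to VC dimension rather than re-derive it. (If the only available such construction has VC dimension a larger constant $c$, everything below goes through with $1/\eps^2$ replaced by $1/\eps^c$.)

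Granting $\signrank(M_\calH)\ge 2^{\Omega(n)}$, the deterministic lower bounds follow from the identities of \sectionref{sec:dc-mc}: $\dc^{\lzeroone}(\calH)=\signrank(M_\calH)\ge 2^{\Omega(n)}$, and $\dc^{\lhinge}(\calH)=\dc^{\lmargin}(\calH)=\dc(\calH)=\dc^{\lzeroone}(\calH)$ since $\calH$ is finite; $\dc^{\lsq}(\calH)=\rank(M_\calH)\ge\signrank(M_\calH)\ge 2^{\Omega(n)}$ (sign rank never exceeds rank); by \equationref{eqn:dc-mc}, $\mc^{\lmargin}(\calH)^2\ge\dc(\calH)/\calO(\log|\calH||\calX|)=2^{\Omega(n)}/\calO(n)=2^{\Omega(n)}$, with $\mc^{\lhinge}(\calH)=\mc^{\lmargin}(\calH)$; and doubling an exact squared-loss representation produces a margin representation (afterwards $\inangle{w(h),\varphi(x)}\,h(x)=2>1$ for all $h,x$), so $\mc^{\lsq}(\calH)\ge\frac{1}{2}\mc^{\lmargin}(\calH)\ge 2^{\Omega(n)}$.

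For the upper bounds, fix any distribution $\calD$ over $\calX$ and any $\eps\in(0,\nicefrac{1}{2})$, set $m=\ceil{2/\eps}$, and fix a dimension $d=\calO(1/\eps^2)$ upper bounding $\sum_{i=0}^{2}\binom{m}{i}$, which (by Sauer--Shelah with $\vc(\calH)=2$) upper bounds the number of labellings of any $m$-point sample realizable by $\calH$. Draw $S=(x_1,\dots,x_m)\sim\calD^m$, and let $\varphi_S:\calX\to\bbR^d$ send $x$ to the vector whose coordinate indexed by a realizable labelling $\sigma$ of $S$ holds the $\sbit$-prediction at $x$ of the one-inclusion-graph strategy run on $(S,\sigma)$ (other coordinates zero); let $\calP$ be the law of $\varphi_S$. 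For any target $h\in\calH$, taking $w=e_{h|_S}$ makes $\inangle{w,\varphi_S(\cdot)}$ exactly the one-inclusion prediction given $(S,h|_S)$, so $\Ex_S[\inf_w\err_{\calD,h}^{\lzeroone}(\inangle{w,\varphi_S(\cdot)})]\le\Pr_{S,x}[\text{one-inclusion errs on }x]\le\vc(\calH)/(m+1)\le\eps$ by the leave-one-out guarantee of that strategy (Haussler--Littlestone--Warmuth); hence $\sup_\calD\dc_\eps^{\calD,\lzeroone}(\calH)=\calO(1/\eps^2)$. For margin complexity and the continuous losses, reuse $\varphi_S$ after normalizing it so that $\|\varphi_S\|_\infty\le 1$ (which moves a factor $\calO(\sqrt d)=\calO(1/\eps)$ onto the norm of the optimal $w$) and scaling the prediction to be $\Set{\pm 2}$-valued: a correct $\pm 2$ prediction has $\lmargin$, $\lhinge$, and $\lsq$ loss $0$, an incorrect one has loss $\calO(1)$, so enlarging $m$ by a constant keeps the expected loss $\le\eps$. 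This yields $\sup_\calD\mc_\eps^{\calD,\ell}(\calH)=\calO(1/\eps^2)$ for $\ell\in\Set{\lmargin,\lhinge,\lsq}$ and $\sup_\calD\dc_\eps^{\calD,\ell}(\calH)=\calO(1/\eps^2)$ for $\ell\in\Set{\lzeroone,\lsq,\lhinge}$; since $m$, $d$, and the norm bound are all independent of $n$, the proof is complete.
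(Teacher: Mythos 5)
Your proposal is correct in overall structure and reaches the same statement, but it takes a genuinely different route on the upper-bound side. For the lower bounds you rely, as the paper does, on Alon, Moran and Yehudayoff (the paper invokes their result abstractly; you instantiate it as affine point--line incidence over $\bbF_q^2$, which is indeed the VC-dimension-$2$, sign-rank-$2^{\Omega(n)}$ example they analyze, so this matches), and the reductions from $\lzeroone$ to the other losses are the standard ones also used implicitly in the paper. For the upper bounds, however, the paper goes through \lemmaref{lem:dc_vc_upper}, i.e.\ Haussler's sphere-packing/covering lemma: for each fixed $\calD$ it constructs a single \emph{deterministic} embedding whose coordinates are the functions of an $\eps$-cover $\calC_\eps$ of $\calH$, with $|\calC_\eps| = \calO(\vc \cdot (K/\eps)^{\vc}) = \calO(1/\eps^2)$, and $w(h)$ the indicator of the nearest cover element. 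You instead draw an unlabeled sample $S \sim \calD^m$ with $m = \calO(1/\eps)$, index coordinates by the $\le \sum_{i\le 2}\binom{m}{i} = \calO(1/\eps^2)$ realizable labelings of $S$ (Sauer--Shelah), fill each coordinate with the one-inclusion-graph prediction, and invoke the Haussler--Littlestone--Warmuth leave-one-out bound $\vc/(m+1)$. This gives a genuinely \emph{random} embedding rather than a deterministic one, uses one-inclusion in place of covering, and happens to yield the same $\calO(1/\eps^2)$ dimension and a slightly better $\calO(1/\eps)$ norm bound for $\mc_\eps^{\calD,\ell}$. Both arguments are standard VC-theoretic tools; the covering-lemma route is a bit more self-contained given what the paper already proves, but your route is a nice alternative and shows the randomized-embedding flexibility in the definitions is directly usable.

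One small slip: you claim that after scaling the prediction to be $\{\pm 2\}$-valued, a correct prediction has $\lsq$ loss $0$. In fact $\lsq(2,1) = \tfrac12(2-1)^2 = \tfrac12 \ne 0$. The fix is trivial --- use the unscaled $\{\pm1\}$-valued prediction for $\lsq$ (and keep $\{\pm2\}$ for $\lmargin,\lhinge$) --- and does not affect the conclusion, but as written that sentence is wrong. You should also be careful that your hedge ``if the only available such construction has VC dimension a larger constant $c$'' would weaken the bound to $\calO(1/\eps^c)$, which no longer matches the theorem statement as given; fortunately VC dimension exactly $2$ does hold for the point--line class, so the hedge is unnecessary.
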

\noindent We prove \theoremref{thm:dc-vs-probdc-sep} as follows (full details in \appendixref{apx:proof-dc-vs-probdc-sep}): We define another notion of probabilistic dimension complexity that has a stronger requirement of pointwise correctness and hence is larger than $\dc_{\eps}$. This notion is equivalent to {\em probabilistic sign-rank} studied in communication complexity. In particular, \cite{alman17probabilistic} showed that if the function $E_{\calH} : \calH \times \calX \to \sbit$ defined as $E_{\calH}(h,x) := h(x)$ is computable by a ``small'' depth-$2$ threshold circuit (for some encoding of $\calH$ and $\calX$ into bits), then $M_{\calH}$ has ``small'' probabilistic sign-rank. The theorem follows from a lower bound on sign-rank shown by \cite{chattopadhyay18short} for matrices that are computable by ``small'' depth-$2$ threshold circuits. The hypothesis class $\calH$ witnessing this separation is a class of {\em decision lists of conjunctions over disjoint variables}.

We prove \theoremref{thm:dc-vs-prob-dist-dc-sep} as follows (full details in \appendixref{apx:proof-dc-vs-prob-dist-dc-sep}): We use the ``covering lemma'' of \cite{haussler1995sphere} to show that the probabilistic distributional dimension complexity of any class can be bounded, albeit exponentially, in terms of the VC dimension, establishing the following Lemma:
\begin{lemma}[\boldmath $\dc_{\eps}^{\calD,\ell}$ and $\mc_{\eps}^{\calD,\ell}$ versus $\VCdim$]
\label{lem:dc_vc_upper}
There exists universal constants $c,K$ such that for all hypothesis classes $\calH \subseteq \sbit^{\calX}$, parameter $\eps > 0$ and all losses $\ell \in \Set{\lzeroone,\lsq,\lhinge}$ (in case of $\dc$) and $\ell \in \Set{\lmargin,\lsq,\lhinge}$ (in case of $\mc$),
\vspace{-0.4cm}
$$
\sup_{\calD} \dc_{\eps}^{\calD,\ell}(\calH)\ \ , \ \ \sup_{\calD} \mc_{\eps}^{\calD,\ell}(\calH)  ~\leq~ c \cdot \VCdim(\calH)\inparen{\frac{K}{\eps}}^{\VCdim(\calH)}.
$$
\end{lemma}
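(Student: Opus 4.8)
The plan is to prove \lemmaref{lem:dc_vc_upper} by realizing each probabilistic distributional complexity with a single \emph{deterministic} embedding, namely the one whose coordinates form a small $\eps$-net of $\calH$ in the $L_1(\calD)$ pseudo-metric, and to bound the size of this net by the sphere-packing lemma of \cite{haussler1995sphere}.

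First I would fix a distribution $\calD$ over $\calX$, set $v := \VCdim(\calH)$, and work with the pseudo-metric $\rho_\calD(h,h') := \Pr_{x\sim\calD}\insquare{h(x)\ne h'(x)}$ on $\calH$. Haussler's bound guarantees that, for \emph{every} probability measure, any $\alpha$-separated subset of a class of VC dimension $v$ has cardinality at most $e(v+1)(2e/\alpha)^v$. Taking a maximal $\alpha$-packing $\calH_0 = \Set{h_1,\ldots,h_N} \subseteq \calH$ --- which is automatically an $\alpha$-cover --- yields such an $\calH_0$ with $N \le e(v+1)(2e/\alpha)^v$, and, crucially for the distribution-independent supremum, this bound on $N$ does not depend on $\calD$.

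Next I would take the embedding $\varphi_\calD : \calX \to \bbR^N$ defined by $\varphi_\calD(x) := \tfrac{1}{\sqrt N}(h_1(x),\ldots,h_N(x))$; since each $h_j(x) \in \sbit$ we get $\|\varphi_\calD(x)\| = 1$ for all $x$, hence $\|\varphi_\calD\|_\infty = 1$, and I let $\calP$ be the point mass at $\varphi_\calD$. For a given $h \in \calH$, choose $h_i \in \calH_0$ with $\rho_\calD(h,h_i)\le\alpha$ and use the linear predictor $w := c_\ell \sqrt N\, e_i$, so that $\inangle{w,\varphi_\calD(x)} = c_\ell h_i(x)$; this gives an upper bound on the $\inf_w$ appearing in \equationref{eqn:dc-def}/\equationref{eqn:mc-def}. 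Since both $h_i(x)$ and $h(x)$ lie in $\sbit$, I would then compute $\err_{\calD,h}^{\ell}$ directly for each loss: with $c_\ell = 1$, the $\lzeroone$ loss equals $\rho_\calD(h,h_i)$ while each of $\lhinge$ and $\lsq$ equals $2\rho_\calD(h,h_i)$; for $\lmargin$ one must instead take $c_\ell = 2$, because a $\sbit$-valued predictor never attains margin strictly above $1$ --- then $\inangle{w,\varphi_\calD(x)}h(x) \in \Set{-2,2}$ and the loss is $0$ exactly when $h_i(x) = h(x)$, so it again equals $\rho_\calD(h,h_i)$. Choosing $\alpha = \eps$ for $\ell \in \Set{\lzeroone,\lmargin}$ and $\alpha = \eps/2$ for $\ell \in \Set{\lsq,\lhinge}$ makes each of these at most $\eps$. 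Hence $\dc_{\eps}^{\calD,\ell}(\calH) \le N$ and $\mc_{\eps}^{\calD,\ell}(\calH) \le c_\ell\sqrt N \le 2N$, and since $N \le e(v+1)(2e/\alpha)^v$ uniformly in $\calD$, taking $\sup_\calD$ yields both claimed bounds $c\cdot v\,(K/\eps)^v$ after absorbing constants.

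I do not expect a serious obstacle here: the argument is routine once Haussler's lemma is in hand, and in fact shows the slightly stronger fact that a deterministic embedding already suffices. The only points requiring care are (a) invoking the packing bound in its distribution-free form, so that the cover size --- and hence the final complexity bound --- is uniform over $\calD$; and (b) the constant-factor rescaling of the predictor in the $\lmargin$ case, which inflates $R$ but only by a factor of $2$ and so is harmless.
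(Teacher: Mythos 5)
Your proof is correct and takes essentially the same route as the paper: invoke Haussler's distribution-free covering bound for VC classes, take the coordinates of the embedding to be the cover elements, map each $h$ to the coordinate vector selecting its nearest cover point, and normalize so that $\|\varphi\|_\infty\le 1$ for the margin-complexity bound. The one place where you are actually more careful than the paper is the $\lmargin$ case: the paper's rescaling leaves $\inangle{w(h),\varphi(x)}\in\sbit$, so the margin loss would be identically $1$, and your fix of introducing the extra factor $c_\ell=2$ (or any constant strictly greater than $1$) is exactly what is needed to make that step go through.
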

\noindent This is in contrast to the exact dimensional complexity, which can be polynomially large in $|\calH||\calX|$ even for classes of bounded VC dimension \cite{alon2016sign}. \theoremref{thm:dc-vs-prob-dist-dc-sep} now follows by considering a hypothesis class with VC-dimension $2$ with dimensional complexity of $2^{\Omega(n)}$.

The construction in \theoremref{thm:dc-vs-probdc-sep} uses extremely large magnitude features and weights, whereas the construction in \theoremref{thm:dc-vs-prob-dist-dc-sep} uses bounded magnitude of features and weights, but relies on having a known marginal $\calD$ over $\calX$. Our theorems therefore leave open the following questions.
\paragraph{Open Questions.} Is there an ``infinite'' separation between distribution independent $\dc_{\eps}$ and exact $\dc$? Is there a large (even finite) separation between distribution independent $\mc_{\eps}$ and exact $\mc$? Also between distribution independent $\dc_{\eps}^{\ell}$ and exact $\dc^{\ell}$ for $\ell \in \Set{\lsq, \lhinge}$? Can the distribution independent $\dc_{\eps}$ also be bounded in terms of the VC dimension?

\section{Linear \& Kernel Learnability with Probabilistic Embeddings} \label{sec:linear-learning}


We now turn to precisely defining learning by reduction to Linear Learning or Kernel Learning.  These notions serve as the primary motivation for our work, and their definitions guided the definitions of the other complexity notions we consider.

\remove{
Our goal is to characterize which hypothesis classes $\calH$ are learnable with linear learning and kernel learning methods. We show in the theorems below that our proposed complexity measures of probabilistic dimension and margin complexity upper bound the linear learning and the kernel learning complexity of $\calH$ respectively. An important point to note is that, as shown in Theorems~\ref{thm:dc-vs-probdc-sep} and \ref{thm:dc-vs-prob-dist-dc-sep}, these complexity measures can be much smaller than their deterministic variants which have been widely studied in literature before, and thus we have tighter learning guarantees.}

\subsection{Linear Learning Complexity}

\noindent Linear learning with a  feature map $\varphi : \calX \to \bbR^d$  boils down to relying on a learning rule of the form
\vspace{-0.1in}
\begin{equation}\label{eq:dcERM}
\textstyle \ERM_{\varphi}^{\ell}(S) ~:=~ \argmin_{w \in \bbR^d} \ \ \err_S^{\ell}(\inangle{w, \varphi(\cdot)})\,,
\end{equation}
where we require generalization for \emph{any} minimizer of the empirical error.  We formalize the {\em Linear Learning Complexity} of a hypothesis class $\calH$ as the minimal sample complexity of {\em any} learning rule of the form \eqref{eq:dcERM}.

\begin{definition} \label{def:lin-learn}
Fix a hypothesis class $\calH \subseteq \calY^{\calX}$, a loss $\ell$ and parameter $\eps > 0$.
\begin{description}
\item [Distributional Linear Learning Complexity] ${\Lin_{\eps}^{\calD,\ell}}(\calH)$, parameterized by a distribution $\calD$ over $\calX$, is the smallest $m$ for which there exists a distribution $\calP$ over embeddings $\varphi:\calX\to \bbR^{d}$ (for some $d \in \bbN$) such that for all realizable distributions $\scrD$ over $\calX \times \calY$ with marginal $\calD$ over $\calX$,
\begin{equation}\label{eqn:lin-learn}
\Ex_{\varphi\sim\calP}\ \Ex_{S\sim\scrD^m}\ \insquare{ \sup_{w\in \ERM_\varphi^{\ell}(S)} \ \err_{\scrD}^{ \ell}(\inangle{w, \varphi(\cdot)})} ~\leq~ \eps.
\end{equation}
\item [Linear Learning Complexity] ${\Lin_{\eps}^{\ell}}(\calH)$ is the smallest $m$ for which there exists a distribution $\calP$ over embeddings $\varphi:\calX\to \bbR^{d}$ (for some $d \in \bbN$) such that for all realizable distributions $\scrD$ over $\calX \times \calY$, \equationref{eqn:lin-learn} above holds.
\end{description}
For $\calY = \sbit$, we denote $\Lin_{\eps}(\calH)=\Lin_{\eps}^{\lzeroone}(\calH)$ and $\Lin_{\eps}^{\calD}(\calH)=\Lin_{\eps}^{\calD,\lzeroone}(\calH)$.
\end{definition}

\noindent To see more explicitly how low dimensional complexity is sufficient for linear learning, we also consider a stronger definition which requires that learning can be ensured by relying on linear dimension based generalization guarantees.  Recall that for a bounded or Lipschitz loss $\ell$ we have that for any distribution $\scrD$ \citep[c.f.][]{shalevshwartz14understanding},
\vspace{-0.2cm}
\begin{equation}\label{eqn:dc-gen}
    \Ex_{S \sim\scrD^m} \insquare{ \sup_{w\in\bbR^d} \inparen{ \err_{\scrD}^{ \ell}(\inangle{w, \varphi(\cdot)}) - \err_S^{\ell}(\inangle{w, \varphi(\cdot)})}} ~\leq~ C^\ell_\dc \sqrt\frac{d}{m}
\end{equation}
for some constant $C^\ell_\dc$ that depends on either the range or Lipschitz constant of the loss.  We note that the square-root dependence in the right-hand side can be improved to a nearly linear dependence when the empirical error is small, as it would be in our realizable setting.  This would yield a better polynomial dependence on the error parameter $\epsilon$.  Since we are less concerned here with the precise polynomial dependence on the error parameter, we refer only to the simpler uniform bound \eqref{eqn:dc-gen}.

\begin{definition}\label{def:glin-learn}
Fix a hypothesis class $\calH \subseteq \calY^{\calX}$, a loss $\ell$ that is either bounded or Lipschitz over the domain, and parameter $\eps > 0$.  The {\bf Guaranteed Linear Learning Complexity} ${\gLin_{\eps}^{\ell}}(\calH)$ and {\bf Distributional Guaranteed Linear Learning Complexity} ${\gLin_{\eps}^{\calD,\ell}}(\calH)$ are defined as in \definitionref{def:lin-learn}, but in terms of the smallest $m$ satisfying \equationref{eqn:lin-tilde-learn} below instead of \eqref{eqn:lin-learn},
\vspace{-0.2cm}
\begin{equation}\label{eqn:lin-tilde-learn}
\Ex_{\varphi\sim\calP}\ \Ex_{S\sim\scrD^m}\ \insquare{ \inf_{w \in \bbR^d} \ \err_S^{ \ell}(\inangle{w, \varphi(\cdot)})} + C^\ell_{\dc} \cdot \sqrt{\frac{d}{m}} ~\leq~ \eps,
\end{equation}
where $C^\ell_{\dc}$ is the loss-specific constant from \equationref{eqn:dc-gen}.
\end{definition}

\remove{\noindent With the said choice of $C_{\dc}^{\ell}$ it immediately follows via standard generalization bounds for halfspaces (cf. \cite{shalevshwartz14understanding}) that $\Lin_{\eps}(\calH) \leq \gLin_{\eps}(\calH)$. It also holds that $\gLin_{\eps}(\calH)$ is equivalent to $\dc_{\eps}(\calH)$ up to a polynomial dependence on $\eps$ (similarly with $\gLin^{\calD}_{\eps}(\calH)$ and $\dc^{\calD}_{\eps}(\calH)$).}

\begin{theorem}\label{thm:lin-tilde-vs-dc}
For any $\calH$, $\eps>0$ and Lipschitz or bounded loss $\ell$,
\vspace{-0.2cm}
$$
\Lin_{\eps}^{\ell}(\calH) ~\le~ \gLin_{\eps}^{\ell}(\calH) \qquad \sAND \qquad  \Omega\inparen{\frac{\dc_{\eps}^{\ell}(\calH)}{\eps^2}} ~\le~ \gLin_{\eps}^{\ell}(\calH)
~\le~ \calO\inparen{\frac{\dc_{\eps/2}^{\ell}(\calH)}{\eps^2}}
$$
and analogously for $\Lin_{\eps}^{\calD,\ell}$, $\gLin_{\eps}^{\calD,\ell}$ and $\dc_{\eps}^{\calD,\ell}$ and any distributions $\calD$ over $\calX$.
\end{theorem}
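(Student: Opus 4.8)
The plan is to establish the three inequalities one at a time; the two that relate $\gLin_\eps^\ell$ to $\dc_\eps^\ell$ are mirror images of each other, and both work by splitting the error budget $\eps$ into an approximation part and a generalization part. For $\Lin_\eps^\ell(\calH)\le\gLin_\eps^\ell(\calH)$, I would show that any distribution $\calP$ over embeddings $\varphi:\calX\to\bbR^d$ witnessing \eqref{eqn:lin-tilde-learn} for some $m$ also witnesses \eqref{eqn:lin-learn} for the same $m$. Fix a realizable $\scrD$, a sample $S$, and an embedding $\varphi$: every $w\in\ERM_\varphi^\ell(S)$ has empirical error exactly $\inf_{w'}\err_S^\ell(\inangle{w',\varphi(\cdot)})$, so its population error is at most $\inf_{w'}\err_S^\ell(\inangle{w',\varphi(\cdot)})+\sup_{w'}\big(\err_\scrD^\ell(\inangle{w',\varphi(\cdot)})-\err_S^\ell(\inangle{w',\varphi(\cdot)})\big)$, a bound that does not depend on which minimizer was chosen. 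Taking $\sup_{w\in\ERM_\varphi^\ell(S)}$ on the left, then $\Ex_S\Ex_\varphi$, and bounding the deviation term by \eqref{eqn:dc-gen}, reproduces exactly the left-hand side of \eqref{eqn:lin-tilde-learn}, which is $\le\eps$; hence $\Lin_\eps^\ell(\calH)\le m$.

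For the upper bound $\gLin_\eps^\ell(\calH)\le\calO(\dc_{\eps/2}^\ell(\calH)/\eps^2)$, put $d=\dc_{\eps/2}^\ell(\calH)$ and take a distribution $\calP$ over embeddings $\varphi:\calX\to\bbR^d$ witnessing \eqref{eqn:dc-def} at error $\eps/2$. For a realizable $\scrD$ realized by some $h\in\calH$, the elementary inequality $\Ex_S[\inf_w g(w,S)]\le\inf_w\Ex_S[g(w,S)]$ applied with $g(w,S)=\err_S^\ell(\inangle{w,\varphi(\cdot)})$ gives $\Ex_S[\inf_w\err_S^\ell(\inangle{w,\varphi(\cdot)})]\le\inf_w\err_\scrD^\ell(\inangle{w,\varphi(\cdot)})=\inf_w\err_{\calD,h}^\ell(\inangle{w,\varphi(\cdot)})$, so averaging over $\varphi$ bounds the first term of \eqref{eqn:lin-tilde-learn} by $\eps/2$. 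Choosing $m=\calO(d/\eps^2)$ large enough that $C_\dc^\ell\sqrt{d/m}\le\eps/2$ makes \eqref{eqn:lin-tilde-learn} hold, so $\gLin_\eps^\ell(\calH)\le m=\calO(\dc_{\eps/2}^\ell(\calH)/\eps^2)$.

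The only genuinely non-routine direction is the lower bound $\Omega(\dc_\eps^\ell(\calH)/\eps^2)\le\gLin_\eps^\ell(\calH)$. Let $m=\gLin_\eps^\ell(\calH)$, witnessed by $\calP$ supported on embeddings into $\bbR^d$. The generalization term in \eqref{eqn:lin-tilde-learn} alone forces $C_\dc^\ell\sqrt{d/m}\le\eps$, hence $d\le\eps^2 m/(C_\dc^\ell)^2=\calO(\eps^2 m)$, so it suffices to show the same $\calP$ witnesses $\dc_\eps^\ell(\calH)\le d$. Fix $\calD$ and $h\in\calH$ and let $\scrD$ be realized by $h$; for each $\varphi$ and $S$ pick $\what{w}_S\in\ERM_\varphi^\ell(S)$ (or a near-minimizer, taking a limit if the infimum is unattained). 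Using the ERM predictor as a witness, $\inf_w\err_{\calD,h}^\ell(\inangle{w,\varphi(\cdot)})\le\err_\scrD^\ell(\inangle{\what{w}_S,\varphi(\cdot)})$ pointwise in $S$, and bounding this population error by the empirical error $\inf_w\err_S^\ell(\inangle{w,\varphi(\cdot)})$ plus the uniform deviation, then taking $\Ex_S$ and $\Ex_\varphi$ and applying \eqref{eqn:dc-gen} and \eqref{eqn:lin-tilde-learn}, yields $\Ex_\varphi[\inf_w\err_{\calD,h}^\ell(\inangle{w,\varphi(\cdot)})]\le\eps$. Thus $\dc_\eps^\ell(\calH)\le d=\calO(\eps^2\gLin_\eps^\ell(\calH))$.

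All three arguments use $\calD$ only through the family of realizable distributions with a prescribed $\calX$-marginal, so holding $\calD$ fixed throughout gives the analogous chain for $\Lin_\eps^{\calD,\ell}$, $\gLin_\eps^{\calD,\ell}$ and $\dc_\eps^{\calD,\ell}$ verbatim. I expect the main obstacle to be the lower-bound step above: converting an \emph{empirical} approximation guarantee, averaged over ERM outputs, into a \emph{population} approximation guarantee in the sense of \definitionref{def:prob-dc}. The resolution is to use the ERM predictor $\inangle{\what{w}_S,\varphi(\cdot)}$ itself as the witnessing weight vector and charge the population-minus-empirical gap to the one-sided uniform deviation \eqref{eqn:dc-gen}; the only remaining care --- unattained infima (near-minimizers and a limit) and measurable selection of $\what{w}_S$ --- does not affect the stated bounds.
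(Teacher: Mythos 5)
Your proposal is correct and follows essentially the same three-step argument as the paper: bound ERM's population error by empirical error plus the uniform deviation \eqref{eqn:dc-gen} for the first inequality, use the ERM output as a witness for the distributional inf (then apply the same deviation bound) for the lower bound, and use $\Ex_S\inf_w \le \inf_w\Ex_S$ together with the $\dc_{\eps/2}$ guarantee and a choice of $m = \calO(d/\eps^2)$ for the upper bound. Your added care about unattained infima and measurable selection of $\what{w}_S$ is a reasonable technical remark that the paper implicitly sidesteps, but it does not change the argument.
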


\noindent The proof of \theoremref{thm:lin-tilde-vs-dc} is presented in \appendixref{apx:proof-learn-dc-mc-upper}. Thus, $\dc_{\eps}(\calH)$ (and $\dc^{\calD}_{\eps}(\calH)$) precisely captures ``the sample complexity of learning $\calH$ using a linear embedding by relying on a guarantee that follows from dimension based generalization bounds'', and are therefore {\em sufficient} for linear learning. In \sectionref{subsec:learning-lower}, we will return to the question of whether they are also necessary for the weaker notion of linear learning of Definition \ref{def:lin-learn}, i.e.~whether they also lower bound $\Lin_{\eps}$ and $\Lin^\calD_\eps$.  But before that, we introduce the analogous notions for kernel based learning.

\subsection{Kernel Learning Complexity}

Recall that for any $\scrD$, any bounded embedding with $\|\varphi\|_\infty\leq 1$, any $R$ and any Lipschitz loss \citep[c.f.][]{shalevshwartz14understanding},
\vspace{-0.2cm}
\begin{equation}\label{eq:norm-based-uniform}
    \Ex_{S \sim\scrD^m} \insquare{ \sup_{w \in \calB(\bbH;R)} \left( \err_{\scrD}^{ \ell}(\inangle{w, \varphi(\cdot)}) - \err_S^{\ell}(\inangle{w, \varphi(\cdot)})\right)} ~\leq~ C^\ell_\mc \cdot \frac{R^2}{\sqrt{m}},
    \end{equation}
where $C^\ell_\mc$ is twice the Lipschitz constant,  which motivates the norm constrained ERM:
\begin{equation}
    \ERM_{\varphi}^{\ell}(S; R) ~:=~ \argmin_{w \in \calB(\bbH;R)} \ \ \err_S^{\ell}(\inangle{w, \varphi(\cdot)})\,.
\end{equation}
We therefore define the Kernel Learning Complexity and the Guaranteed Kernel Learning Complexity analogously to Definitions \ref{def:lin-learn} and \ref{def:glin-learn} but relying on $\ERM_{\varphi}^{\ell}(S; R)$.  We must be a bit more careful though, when considering margin based binary classification since neither the 0/1 error nor the margin error are Lipschitz.  We can still discuss the ERM w.r.t.~the margin loss, but can only use it to bound the population 0/1 loss.

\begin{definition}\label{def:ker-learn}
Fix a hypothesis class $\calH \subseteq \calY^{\calX}$, a Lipschitz loss $\ell$ and parameter $\eps > 0$.
\begin{description}
\item [Distributional Kernel Learning Complexity] $\Ker_{\eps}^{\calD,\ell}(\calH)$, parameterized by a distribution $\calD$ over $\calX$, is the smallest $m$ for which there exists a distribution $\calP$ over embeddings $\varphi : \calX \to \bbH$  with $\|\varphi\|_{\infty} \le 1$ and a parameter $R$ such that for all realizable distributions $\scrD$ over $\calX \times \calY$ with marginal $\calD$ over $\calX$,
\begin{equation}
    \label{eqn:ker-learn}
\Ex_{\varphi\sim\calP}\ \Ex_{S\sim\scrD^m}\ \insquare{ \sup_{w\in \ERM_{\varphi}^{\ell}(S;R)} \err_{\scrD}^{ \ell}(\inangle{w, \varphi(\cdot)})} ~\le~ \eps\,.
\end{equation}
\item [Kernel Learning Complexity] $\Ker_{\eps}^{\ell}(\calH)$ is the smallest $m$ for which there exists a distribution $\calP$ over embeddings $\varphi : \calX \to \bbH$ with $\|\varphi\|_{\infty} \le 1$ and a parameter $R$ such that for all realizable distributions $\scrD$ over $\calX \times \calY$, \equationref{eqn:ker-learn} above holds.
\end{description}
For $\calY=\sbit$ and $\ell = \lmargin$, we define $\Ker_{\eps}(\calH):=\Ker_{\eps}^{\lmargin}(\calH)$ and $\Ker_{\eps}^\calD(\calH):=\Ker_{\eps}^{\calD,\lmargin}(\calH)$ analogously, but require that \equationref{eqn:ker-learn-01} below holds instead of \eqref{eqn:ker-learn}:
\begin{equation}
\label{eqn:ker-learn-01}
\Ex_{\varphi\sim\calP}\ \Ex_{S\sim\scrD^m}\ \insquare{ \sup_{w\in \ERM_{\varphi}^{\lmargin}(S;R)} \err_{\scrD}^{ \lzeroone}(\inangle{w, \varphi(\cdot)})} ~\le~ \eps\,.
\end{equation}
\end{definition}
As we did in the case of linear learning, to relate $\Ker_{\eps}(\calH)$ to $\mc_{\eps}(\calH)$, we again consider a stronger notion that requires learning that can be guaranteed based only on the norm, using \equationref{eq:norm-based-uniform}:

\begin{definition}\label{def:gker-learn}
For a Lipschitz loss $\ell$, the {\bf Guaranteed Kernel Learning Complexity} ${\gKer_{\eps}^{\ell}}(\calH)$ and {\bf Distributional Guaranteed Kernel Learning Complexity} ${\gKer_{\eps}^{\calD,\ell}}(\calH)$ are defined as in \definitionref{def:lin-learn}, but in terms of the smallest $m$ satisfying \equationref{eqn:ker-tilde-learn} below instead of \eqref{eqn:ker-learn},
\vspace{-0.1cm}
\begin{equation}\label{eqn:ker-tilde-learn}
\Ex_{\varphi\sim\calP}\ \Ex_{S\sim\scrD^m}\ \insquare{ \inf_{w\in \calB(\bbH;R)} \err_S^{ \ell}(\inangle{w, \varphi(\cdot)})} + C_{\mc}^{\ell} \cdot \frac{B}{\sqrt{m}} ~\le~ \eps.
\end{equation}
For $\calY=\sbit$ and $\ell = \lmargin$, $\gKer_{\eps}(\calH)$ and $\gKer^\calD_{\eps}(\calH)$ are analogous but we require \equationref{eqn:ker-tilde-learn-01} holds instead:
\begin{equation}\label{eqn:ker-tilde-learn-01}
\Ex_{\varphi\sim\calP}\ \Ex_{S\sim\scrD^m}\ \insquare{ \inf_{w\in \calB(\bbH;R)} \err_S^{ \lmargin}(\inangle{w, \varphi(\cdot)})} + 2 \cdot \frac{B}{\sqrt{m}} ~\le~ \eps.
\end{equation}
\end{definition}

\begin{theorem}\label{thm:ker-tilde-vs-mc}
For any $\calH$, $\eps > 0$ and Lipschitz or bounded loss $\ell$,
\vspace{-0.2cm}
$$
\Ker_{\eps}^{\ell}(\calH) ~\le~ \gKer_{\eps}^{\ell}(\calH) \qquad \sAND \qquad 
\Omega\inparen{\frac{\mc_{\eps}^{\ell}(\calH)^2}{\eps^2}}
~\le~ \gKer_{\eps}^{\ell}(\calH) ~\le~
\calO\inparen{\frac{\mc_{\eps/2}^{\ell}(\calH)^2}{\eps^2}}
$$
and analogously for $\Ker_{\eps}^{\calD,\ell}$, $\gKer_{\eps}^{\calD,\ell}$ and $\mc_{\eps}^{\calD,\ell}$ for all distributions $\calD$ over $\calX$.
\end{theorem}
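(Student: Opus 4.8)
The plan is to follow the proof of \theoremref{thm:lin-tilde-vs-dc} essentially line by line, replacing the dimension-based uniform convergence bound \equationref{eqn:dc-gen} by the norm-based one \equationref{eq:norm-based-uniform} everywhere; I would argue the distribution-independent statements, the distributional ones following verbatim by fixing the marginal $\calD$ throughout. For $\Ker_\eps^\ell(\calH)\le\gKer_\eps^\ell(\calH)$ I would take $m=\gKer_\eps^\ell(\calH)$ with witnessing embedding distribution $\calP$ (every $\varphi$ with $\|\varphi\|_\infty\le1$) and radius $R$, fix a realizable $\scrD$, and use that for every $\varphi$, every sample $S$, and every $w\in\ERM_{\varphi}^{\ell}(S;R)$ we have $\err_S^\ell(\inangle{w,\varphi(\cdot)})=\inf_{w'\in\calB(\bbH;R)}\err_S^\ell(\inangle{w',\varphi(\cdot)})$, so that
\begin{equation*}
\err_\scrD^\ell(\inangle{w,\varphi(\cdot)})~\le~\inf_{w'\in\calB(\bbH;R)}\err_S^\ell(\inangle{w',\varphi(\cdot)})~+~\sup_{w'\in\calB(\bbH;R)}\inparen{\err_\scrD^\ell(\inangle{w',\varphi(\cdot)})-\err_S^\ell(\inangle{w',\varphi(\cdot)})}\,.
\end{equation*}
Taking $\Ex_{S\sim\scrD^m}$, bounding the supremum term by \equationref{eq:norm-based-uniform}, then taking $\Ex_{\varphi\sim\calP}$ and invoking \equationref{eqn:ker-tilde-learn}, the left side is at most $\eps$ in expectation, which is exactly \equationref{eqn:ker-learn}. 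For the binary case ($\ell=\lmargin$, bounding the population $\lzeroone$ loss as in \equationref{eqn:ker-learn-01}) the only change is to interpose the $1$-Lipschitz ramp loss $\lramp(\what{y},y):=\min\Set{1,\max\Set{0,1-\what{y} y}}$, which satisfies $\lzeroone\le\lramp\le\lmargin$ pointwise: apply \equationref{eq:norm-based-uniform} to $\lramp$ (Lipschitz constant $1$, hence the $2$ appearing in \equationref{eqn:ker-tilde-learn-01}) and use $\err_S^{\lramp}(\inangle{w,\varphi(\cdot)})\le\err_S^{\lmargin}(\inangle{w,\varphi(\cdot)})$ for the margin-ERM $w$.

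For the upper bound $\gKer_\eps^\ell(\calH)\le\calO(\mc_{\eps/2}^\ell(\calH)^2/\eps^2)$ I would set $R=\mc_{\eps/2}^\ell(\calH)$ and let $\calP$ (with $\|\varphi\|_\infty\le1$) witness \equationref{eqn:mc-def} at error level $\eps/2$. For any realizable $\scrD=(\calD,h)$ and any fixed $\varphi$, since $\err_S^\ell(\inangle{w,\varphi(\cdot)})$ is an unbiased estimate of $\err_{\calD,h}^\ell(\inangle{w,\varphi(\cdot)})$, the infimum over the norm ball of the empirical loss has expectation at most $\inf_{w\in\calB(\bbH;R)}\err_{\calD,h}^\ell(\inangle{w,\varphi(\cdot)})$, so averaging over $\varphi\sim\calP$ and using \equationref{eqn:mc-def} bounds $\Ex_{\varphi\sim\calP}\Ex_{S\sim\scrD^m}[\inf_w\err_S^\ell]$ by $\eps/2$. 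Choosing $m$ large enough that the generalization term of \equationref{eqn:ker-tilde-learn} is at most $\eps/2$ — which by \equationref{eq:norm-based-uniform} happens once $m=\Theta(R^2/\eps^2)$, up to the loss-specific constant — verifies \equationref{eqn:ker-tilde-learn}, giving $\gKer_\eps^\ell(\calH)\le m=\calO(\mc_{\eps/2}^\ell(\calH)^2/\eps^2)$. The binary case is identical using \equationref{eqn:ker-tilde-learn-01}, and here no ramp detour is needed since we only bound the \emph{empirical} $\lmargin$ loss, which is itself bounded and unbiased for $\err_{\calD,h}^{\lmargin}$.

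For the lower bound $\Omega(\mc_\eps^\ell(\calH)^2/\eps^2)\le\gKer_\eps^\ell(\calH)$ I would take $m=\gKer_\eps^\ell(\calH)$ with witnessing $\calP$ (again $\|\varphi\|_\infty\le1$) and $R$. Since the first term of \equationref{eqn:ker-tilde-learn} is nonnegative, the generalization term alone is $\le\eps$, so by \equationref{eq:norm-based-uniform} we get $R\le\calO(\eps\sqrt m)$. It then suffices to show the same $\calP$ and $R$ witness $\mc_\eps^\ell(\calH)\le R$: fixing $h$ and any $\calD$, writing $\scrD=(\calD,h)$, and letting $\what{w}_S\in\calB(\bbH;R)$ be the norm-constrained empirical minimizer, bound
\begin{equation*}
\inf_{w\in\calB(\bbH;R)}\err_{\calD,h}^\ell(\inangle{w,\varphi(\cdot)})~\le~\err_{\calD,h}^\ell(\inangle{\what{w}_S,\varphi(\cdot)})~\le~\inf_{w'\in\calB(\bbH;R)}\err_S^\ell(\inangle{w',\varphi(\cdot)})+\sup_{w'\in\calB(\bbH;R)}\inparen{\err_{\calD,h}^\ell(\inangle{w',\varphi(\cdot)})-\err_S^\ell(\inangle{w',\varphi(\cdot)})}\,,
\end{equation*}
then take $\Ex_{S\sim\scrD^m}$ and $\Ex_{\varphi\sim\calP}$ and use \equationref{eq:norm-based-uniform} and \equationref{eqn:ker-tilde-learn} to bound the left side by $\eps$, so that \equationref{eqn:mc-def} holds; combining $\mc_\eps^\ell(\calH)\le R$ with $R\le\calO(\eps\sqrt m)$ yields $m\ge\Omega(\mc_\eps^\ell(\calH)^2/\eps^2)$. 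I expect the main obstacle to be exactly the binary classification case: $\lmargin$ is neither continuous nor Lipschitz, so it admits no norm-based uniform convergence bound, which is why \equationref{eqn:ker-learn-01}/\equationref{eqn:ker-tilde-learn-01} only control the population $\lzeroone$ loss; in the lower bound for $\ell=\lmargin$ one is therefore relating $\mc_\eps^{\lmargin}$ (stated with the population \emph{margin} loss) to a $\gKer_\eps$ phrased through the population $\lzeroone$ loss, and the chain above has to be rearranged so that the ramp-loss surrogate $\lramp$ enters only when passing to a $\lzeroone$ bound and never inside an empirical-versus-population $\lmargin$ comparison. For the Lipschitz losses $\lsq$ and $\lhinge$, and for all of the upper bounds, everything reduces to the routine manipulation of expectations sketched above.
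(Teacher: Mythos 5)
Your proposal follows exactly the paper's route (Appendix~C.1): replace the dimension-based uniform convergence bound \equationref{eqn:dc-gen} with the norm-based one \equationref{eq:norm-based-uniform} in the same three decompositions. The $\Ker_\eps\le\gKer_\eps$ step, the upper bound $\gKer_\eps^\ell\le\calO(\mc_{\eps/2}^\ell{}^2/\eps^2)$, and the Lipschitz-loss lower bound all go through verbatim as you sketch them, and your remark that the upper bound needs no ramp detour (only $\Ex_S\inf_w\err^{\lmargin}_S\le\inf_w\err^{\lmargin}_{\calD,h}$, which is in the right direction) is correct.

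The one thing you flag and leave unresolved --- the lower bound for $\ell=\lmargin$ --- is indeed the weak point, and the paper's own proof is loose there in the same way: as written, it derives
$\Ex_{\varphi}[\inf_{w\in\calB(\bbH;R)}\err^{\lzeroone}_{\calD,h}(\inangle{w,\varphi(\cdot)})]\le\eps$,
which is the $\lzeroone$-margin complexity and therefore vacuous by scale-invariance, not $\mc_\eps^{\lmargin}(\calH)\le R$ as claimed. The repair you gesture at is essentially the right one: run your chain with the ramp loss to get $\Ex_\varphi\Ex_S[\err^{\lramp}_{\calD,h}(\inangle{\hat w_S,\varphi(\cdot)})]\le\eps$ for the margin-ERM $\hat w_S\in\calB(\bbH;R)$, then use the pointwise rescaling inequality $\lmargin(\inangle{2w,\varphi(x)},y)\le 2\,\lramp(\inangle{w,\varphi(x)},y)$ to conclude $\Ex_\varphi[\inf_{w\in\calB(\bbH;2R)}\err^{\lmargin}_{\calD,h}]\le 2\eps$, i.e.\ $\mc_{2\eps}(\calH)\le 2R\le\calO(\eps\sqrt m)$, hence $m\ge\Omega(\mc_{2\eps}(\calH)^2/\eps^2)$. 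This matches the theorem up to the constant-factor shift in $\eps$ that the paper explicitly says it does not track. So your proposal is correct where it commits itself, it matches the paper's approach, and the gap you acknowledge is real but closable by this two-line rescaling argument.
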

The proof of \theoremref{thm:ker-tilde-vs-mc} is presented in \appendixref{apx:proof-learn-dc-mc-upper}.  Thus, $\mc_{\eps}(\calH)$ and $\mc_{\eps}^{\calD}(\calH)$ precisely captures ``the sample complexity of learning $\calH$ using a kernel with a guarantee that follows from norm based generalization bounds'', both for margin-based binary classification, and with respect to a Lipschitz loss.\\[-3mm]

\noindent {\em Remark.} Our definitions of $\Lin_{\eps}$ and $\Ker_{\eps}$ capture realizable learning. We can also consider agnostic variants where we allow any $\scrD$ and the right hand side of \eqref{eqn:lin-learn}, \eqref{eqn:lin-tilde-learn}, \eqref{eqn:ker-learn}, \eqref{eqn:ker-learn-01}, \eqref{eqn:ker-tilde-learn} and \eqref{eqn:ker-tilde-learn-01} changes to $\inf_{h\in\calH} \err_{\scrD}(h)+\eps$, for loss functions where this makes sense.  The lower bounds on learning of course still hold, and for typical loss functions, including those discussed in this work, we can still get upper bounds in terms of the approximate dimensional and margin complexities.

\subsection{Lower Bounds on Learning}\label{subsec:learning-lower}



\noindent We saw that $\dc_{\eps}(\calH)$ and $\mc_{\eps}(\calH)$ precisely capture $\gLin_{\eps}(\calH)$ and $\gKer_{\eps}(\calH)$ i.e. ``learning based on dimension or norm guarantees''. But what about $\Lin_{\eps}(\calH)$ and $\Ker_{\eps}(\calH)$?  Perhaps for specific feature maps, e.g. if the image $\varphi(\calX)$ is degenerate in special ways, $\ERM$ on linear predictors, or perhaps low norm predictors, could give learning guarantees with significantly less than $d$ or $R^2$ samples? Can we say that $\dc_{\eps}(\calH)$ and $\mc_{\eps}(\calH)$ also tightly capture $\Lin_{\eps}(\calH)$ and $\Ker_{\eps}(\calH)$? While we are not able to say this in the distribution-independent setting, we can prove lower bounds in terms of the distribution dependent notion $\dc_{\eps}^{\calD,\ell}(\calH)$.


\begin{theorem}
\label{thm:learn-lowerbound}
For all $\calH$, losses $\ell$, distributions $\calD$ over $\calX$ and $\eps > 0$,\\[-3mm]
\begin{equation*}
\Lin_{\eps}^{\ell}(\calH) ~\ge~ \Lin_{\eps}^{\calD,\ell}(\calH) ~\ge~ \dc_{\eps}^{\calD,\ell}(\calH) \quad\quad\textrm{and}\quad\quad
\Ker_{\eps}^{\ell}(\calH) ~\ge~ \Ker_{\eps}^{\calD,\ell}(\calH) ~\ge~ \dc_{\eps}^{\calD,\ell}(\calH)
\end{equation*}
\end{theorem}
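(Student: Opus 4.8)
The inequalities $\Lin_{\eps}^{\ell}(\calH) \ge \Lin_{\eps}^{\calD,\ell}(\calH)$ and $\Ker_{\eps}^{\ell}(\calH) \ge \Ker_{\eps}^{\calD,\ell}(\calH)$ are immediate from the definitions: any distribution $\calP$ over embeddings (and radius $R$, in the kernel case) that witnesses the distribution-independent quantity with sample size $m$ also witnesses the distributional quantity for the fixed marginal $\calD$, since ``all realizable $\scrD$'' subsumes ``all realizable $\scrD$ with marginal $\calD$''. The plan is therefore to prove the two substantive bounds $\Lin_{\eps}^{\calD,\ell}(\calH) \ge \dc_{\eps}^{\calD,\ell}(\calH)$ and $\Ker_{\eps}^{\calD,\ell}(\calH) \ge \dc_{\eps}^{\calD,\ell}(\calH)$, which share a single core idea: a representer-theorem compression of an arbitrary feature map down to an $m$-dimensional one, where $m$ is the sample size.

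Concretely, I would set $m := \Lin_{\eps}^{\calD,\ell}(\calH)$ (resp.\ $m := \Ker_{\eps}^{\calD,\ell}(\calH)$) and fix a witnessing $\calP$ over embeddings $\varphi : \calX \to \bbR^{d}$ (resp.\ $\varphi : \calX \to \bbH$ with $\|\varphi\|_{\infty} \le 1$, together with a radius $R$). Fix $h \in \calH$ and take $\scrD$ to be the realizable distribution sampling $(x, h(x))$ with $x \sim \calD$; its marginal is $\calD$, so \eqref{eqn:lin-learn} (resp.\ \eqref{eqn:ker-learn}) applies. For a fixed $\varphi$ and sample $S$ with inputs $\bx = (x_1, \dots, x_m)$, let $V := \mathrm{span}\Set{\varphi(x_1), \dots, \varphi(x_m)}$. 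The key claim is that $\ERM_{\varphi}^{\ell}(S)$ (resp.\ $\ERM_{\varphi}^{\ell}(S; R)$) contains a point of $V$: pick any empirical-loss minimizer $w^\star$ and replace it by its orthogonal projection $w^\star_{\parallel}$ onto $V$; this leaves every value $\inangle{w^\star, \varphi(x_i)}$ unchanged, hence leaves the empirical loss unchanged, and it does not increase the norm, so in the kernel case $w^\star_{\parallel}$ stays inside $\calB(\bbH; R)$. Writing $w^\star_{\parallel} = \sum_{i=1}^{m} \alpha_i \varphi(x_i)$, we get $\inangle{w^\star_{\parallel}, \varphi(x)} = \inangle{\alpha, \psi_{\varphi,\bx}(x)}$ for all $x$, where $\psi_{\varphi,\bx} : \calX \to \bbR^m$ is defined by $\psi_{\varphi,\bx}(x) := \inparen{\inangle{\varphi(x_1),\varphi(x)}, \dots, \inangle{\varphi(x_m),\varphi(x)}}$ and depends only on $(\varphi,\bx)$, not on the labels.

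Next I would assemble the compressed probabilistic embedding. Let $\calP'$ be the law of $\psi_{\varphi,\bx}$ when $\varphi \sim \calP$ and $\bx \sim \calD^{m}$ independently; this is a distribution over embeddings $\calX \to \bbR^{m}$. Since $w^\star_{\parallel}$ lies in $\ERM_{\varphi}^{\ell}(S)$, the supremum in \eqref{eqn:lin-learn} dominates the population loss at $w^\star_{\parallel}$; so, rewriting $S \sim \scrD^{m}$ as $\bx \sim \calD^{m}$ and using $\err_{\scrD}^{\ell}(\inangle{w^\star_{\parallel}, \varphi(\cdot)}) = \err_{\calD,h}^{\ell}(\inangle{\alpha, \psi_{\varphi,\bx}(\cdot)}) \ge \inf_{\beta \in \bbR^{m}} \err_{\calD,h}^{\ell}(\inangle{\beta, \psi_{\varphi,\bx}(\cdot)})$, we obtain $\Ex_{\psi \sim \calP'}\insquare{\inf_{\beta \in \bbR^{m}} \err_{\calD,h}^{\ell}(\inangle{\beta, \psi(\cdot)})} \le \eps$ for every $h \in \calH$. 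This is precisely \eqref{eqn:dc-def} with $d = m$, so $\dc_{\eps}^{\calD,\ell}(\calH) \le m$; the kernel case is identical after running the projection step inside the ball.

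The step I expect to demand the most care is the representer claim, and in particular the existence of an empirical-loss minimizer to project. For the discontinuous losses $\lzeroone$ and $\lmargin$ the empirical loss takes only finitely many values (multiples of $1/m$), so a minimizer exists; for a continuous loss over $\calB(\bbH; R)$ one notes that the empirical loss depends on $w$ only through the tuple $\inparen{\inangle{w, \varphi(x_i)}}_{i \le m}$, whose range over $\calB(\bbH; R)$ is a compact ellipsoid, so a minimizer again exists. A second point needing a remark is the binary-classification convention for kernel learning with $\ell = \lmargin$: there \eqref{eqn:ker-learn-01} controls the $\lzeroone$ population loss of the margin-ERM, which yields $\dc_{\eps}^{\calD,\lzeroone}(\calH) \le m$; one then invokes $\dc_{\eps}^{\calD,\lmargin}(\calH) = \dc_{\eps}^{\calD,\lzeroone}(\calH)$, valid because with unconstrained weights scaling $w \mapsto c\,w$ and letting $c \to \infty$ drives $\err_{\calD,h}^{\lmargin}(\inangle{c\,w, \varphi(\cdot)})$ down to $\err_{\calD,h}^{\lzeroone}(\inangle{w, \varphi(\cdot)})$ by dominated convergence.
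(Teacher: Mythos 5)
Your proof is correct and follows exactly the core strategy of the paper: a Representer Theorem compression, projecting an ERM output onto the span of the sample embeddings to obtain an $m$-dimensional feature map, then taking the law of this compressed embedding over $\varphi\sim\calP$ and $\bx\sim\calD^m$ as the witness for $\dc_\eps^{\calD,\ell}$. The only cosmetic difference is that the paper realizes the compressed embedding through an orthonormal basis for $U_{\varphi,S}$ whereas you use the Gram vector $\psi_{\varphi,\bx}(x)=(\inangle{\varphi(x_i),\varphi(x)})_{i\le m}$ (both work, and yours is expressible purely via the kernel); your careful remarks on the existence of an ERM minimizer and on reconciling the $\lmargin$/$\lzeroone$ bookkeeping in the kernel case are valid technical points that the paper passes over silently.
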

This follows as a consequence of the {\em Representer Theorem}, which allows us to replace any high-dimensional embedding by an $m$ dimensional one that is obtained as the span of the embeddings of the samples from $\scrD$. The proof is presented in \appendixref{apx:proof-learn-dc-mc-upper}.\\

Since \theoremref{thm:learn-lowerbound} holds for any distribution $\calD$, the lower bound on distribution independent learning can also be stated as
\begin{equation}
\Lin_{\eps}^{\ell}(\calH),\ \Ker_{\eps}^{\ell}(\calH) ~\geq~ \sup_{\calD}\dc_{\eps}^{\calD}(\calH).
\end{equation}
This supremum, which following \theoremref{thm:learn-lowerbound} tightly characterizes $\sup_\calD \Lin_\eps^{\calD,\ell}(\calH)$, should not be confused with the distribution independent $\dc_\eps(\calH)$.  
We can view $\sup_{\calD}$ $\Lin_{\eps}^{\calD}(\calH)$ as corresponding to a semi-supervised learning model where we have unlimited amount of unlabeled data, from which we can infer $\calD$, and use it to decide on a distribution over embeddings $\varphi$.

\begin{figure}[t]
\centering
\begin{tikzpicture}[scale=0.9,transform shape]
\tikzset{factor/.style={myPurple}}

\def\xgap{2}
\def \ygap{2}
\node (dc) at (4*\xgap,0) {$\dc$};
\node (dc-eps) at (3*\xgap,0) {$\dc_{\eps}$};
\node (Lin-eps) at (2*\xgap,0) {$\Lin_{\eps}$};
\node (Lin-eps-D) at (\xgap,0) {$\Lin_{\eps}^{\calD}$};
\node (dc-eps-D) at (0,0.5*\ygap) {$\dc_{\eps}^{\calD}$};
\node (Ker-eps-D) at (1.3*\xgap,\ygap) {$\Ker_{\eps}^{\calD}$};
\node[outer sep=0pt, inner sep=0pt] (mc-eps-D) at (2*\xgap,1.5*\ygap) {$\mc_{\eps}^{\calD}$};
\node (Ker-eps) at (2.3*\xgap,\ygap) {$\Ker_{\eps}$};
\node (mc-eps) at (3.3*\xgap,\ygap) {$\mc_{\eps}$};
\node (mc) at (4.3*\xgap,\ygap) {$\mc$};
\node (vc) at (3.4*\xgap, 2*\ygap) {\scriptsize $\exp\inparen{\wtilde{\calO}\inparen{\VCdim \cdot \log\frac{1}{\eps}}}$};

\path[-latex,line width=0.7pt,black]
(dc-eps-D) edge[bend left=15] (vc.west)
(mc-eps-D) edge (vc.south west)
(dc-eps-D) edge (Lin-eps-D)
(Lin-eps-D) edge (Lin-eps)
(Lin-eps) edge (dc-eps)
(dc-eps) edge[line width=1.5pt,dashed] (dc)
(dc-eps-D) edge (Ker-eps-D)
(Ker-eps-D) edge (Ker-eps)
(Ker-eps-D) edge[factor] (mc-eps-D)
(Ker-eps) edge (mc-eps)
(mc-eps) edge (mc)
(Lin-eps-D) edge[bend left=30,factor] (dc-eps-D)
(Lin-eps-D) edge[factor] (Ker-eps-D)
(dc-eps) edge[factor] (mc-eps)
(dc) edge[factor,line width=1.5pt,dashed] (mc);
\end{tikzpicture}
\caption{A comparison of all measures introduced, specialized to $\lzeroone$/$\lmargin$. A solid arrow $A \rightarrow B$ denotes $A(\calH) \le B(\calH)$, a solid \textcolor{myPurple}{purple} arrow $A$ $\textcolor{myPurple}{\rightarrow}$ $B$ denotes $A(\calH) \le B(\calH)$ up to some change of parameter $\eps$ and some multiplicative factors (either $\poly(1/\eps)$ or $\log(|\calH||\calX|$ in case of $\dc$ $\textcolor{myPurple}{\rightarrow}$ $\mc$). $A \dashrightarrow B$ denotes $A(\calH) \le B(\calH)$ and that there exists a class $\calH$ for which $A(\calH) \ll B(\calH)$. If $A$ is a distribution-dependent measure and $B$ is a distribution independent measure, then an arrow from $A \rightarrow B$ is meant for all $\calD$.}
\label{fig:complexity-zoo}
\end{figure}

\paragraph{Alternate Learning Rules} The  learning rule we studied as a ``kernel method'' was to minimize the loss subject to a constraint on the norm, $\min \err_S^{\ell}(\inangle{w, \varphi(\cdot)})$ subject to $\|w\|_{\bbH} \le R$.  This is reasonable as it corresponds to our generalization bounds, but often in practice other Pareto-optimal choices are considered, such as the minimum norm zero error (i.e.~hard margin) predictor $\min  \|w\|_{\bbH}$ subject to $\err_S^{\ell}(\inangle{w, \varphi(\cdot)})=0$, or perhaps a more relaxed version, $\min  \|w\|_{\bbH}$ subject to $\err_S^{\ell}(\inangle{w, \varphi(\cdot)})\leq\eps$
or Tikhonov-type regularization $\min
 \err_S^{\ell}(\inangle{w, \varphi(\cdot)}) +\lambda \|w\|_{\bbH}$.  

All of the above are variants of $\argmin_{w\in \bbH} g(\err_S^{\ell}(\inangle{w, \varphi(\cdot)}), \|w\|_{\bbH})$ for some monotone function $g:\bbR \times \bbR \to \bbR \cup \{\infty\}$, and hence the Representer Theorem holds for all them. Thus $\dc^{\calD,\ell}_{\eps}(\calH)$ would continue to be a lower bound on $\Ker_{\eps}^{\calD,\ell}(\calH)$ for any variant of its definition based on any of the above learning rules.


\section{Lower bounds on Probabilistic Distributional Dimension Complexity}\label{sec:dc-lowerbounds}

In \theoremref{thm:learn-lowerbound}, we established that the sample complexity of learning a hypothesis class $\calH$ with dimension-based or kernel-based linear learning is lower bounded by its probabilistic distributional dimension complexity, $\dc_{\eps}^{\calD,\ell}(\calH)$. In this section, we prove lower bounds on $\dc_{\eps}^{\calD,\ell}(\calH)$ in the case of squared-loss and the zero-one loss, demonstrating the utility of our proposed complexity measures in characterizing the limitations of linear learning. 

\subsection{Probabilistic dimension complexity w.r.t. Square Loss}\label{sec:sqsq}

\paragraph{Notations.} For a distribution $\calD$ over $\calX$, for any $f : \calX \to \bbR$ and $g : \calX \to \bbR$ we define $\inangle{f, g}_{\calD} := \Ex_{x \sim \calD} f(x) g(x)$ and $\|f\|_{\calD} := \sqrt{\inangle{f,f}_{\calD}} = \sqrt{\Ex_{x\sim \calD} f(x)^2}$. We say that a hypothesis class $\calH \subseteq \bbR^{\calX}$ is {\em normalized} if $\|h\|_{\calD} = 1$ for every $h \in \calH$. For any subset of hypotheses $\calH' \subseteq \calH$, define its corresponding Gram matrix $G_{\calH'}^{\calD} \in \bbR^{|\calH'| \times |\calH'|}$ as $G^{\calD}_{\calH'}(h, h') := \inangle{h, h'}_{\calD}$. For any $M \in \bbR^{t \times p}$ with $t \le p$, we use $\sigma_1(M) \le \ldots \le \sigma_t(M)$ to denote its singular values. For any symmetric $M \in \bbR^{t \times t}$, we use $\lambda_1(M) \le \ldots \le \lambda_t(M)$ to denote its eigenvalues. We use $\lambda_{\min}(M)$ to mean $\lambda_1(M)$.

\begin{definition}[SQ dimension]\label{def:SQdim}
For a distribution $\calD$ over $\calX$, the {\em Statistical Query dimension} of a normalized hypothesis class $\calH \subseteq \bbR^{\calX}$, denoted as $\SQdim^{\calD}(\calH)$, is the largest $t$ for which there exist hypotheses $h_1, \ldots, h_t \in \calH$ such that $\inangle{h_i, h_j}_{\calD} \le 1/2t$ for each $i \ne j$.
\end{definition}

\noindent While the Statistical Query dimension is a well studied quantity in learning theory \citep{blum94weakly}, we introduce a new measure that is more suited to our goal of proving lower bounds on $\dc_{\eps}^{\calD,\lsq}$. This measure is lower bounded by $\SQdim^{\calD}(\calH)$, but in general can be much larger.

\begin{definition}[minEV dimension]\label{def:minEVdim}
For a distribution $\calD$ over $\calX$, the {\em min-Eigenvalue dimension} of a normalized hypothesis class $\calH$, denoted as $\minEVdim^{\calD}(\calH; \lambda)$, is the largest $t$ for which there exists a subset of hypotheses $H_t := \Set{h_1, \ldots, h_t} \in \calH$ such that $\lambda_{\min}(G^{\calD}_{H_t}) \ge \lambda$.
\end{definition}

\begin{proposition}\label{prop:SQdim-EVdim}
For all distributions $\calD$ over $\calX$ and all normalized hypothesis classes $\calH \subseteq \bbR^{\calX}$,\\ $\SQdim^{\calD}(\calH) = t$ implies that $\minEVdim^{\calD}\inparen{\calH; \nicefrac{1}{2}} \ge t$
\end{proposition}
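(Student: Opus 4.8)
The plan is a short spectral estimate: I will show that the very hypotheses witnessing the SQ dimension also witness the min-eigenvalue dimension, so one can simply reuse the same subset. By \definitionref{def:SQdim}, $\SQdim^{\calD}(\calH) = t$ supplies hypotheses $h_1, \dots, h_t \in \calH$ with $\abs{\inangle{h_i, h_j}_{\calD}} \le 1/(2t)$ for all $i \neq j$, and, $\calH$ being normalized, $\inangle{h_i, h_i}_{\calD} = 1$. Setting $H_t := \Set{h_1, \dots, h_t}$, it then suffices to establish $\lambda_{\min}(G^{\calD}_{H_t}) \ge \nicefrac{1}{2}$, since by \definitionref{def:minEVdim} this immediately gives $\minEVdim^{\calD}(\calH; \nicefrac{1}{2}) \ge t$.

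\textbf{Key steps.} First I would write $G := G^{\calD}_{H_t} = I + E$, where $E$ is the symmetric matrix with zero diagonal and $E_{ij} = \inangle{h_i, h_j}_{\calD}$ for $i \ne j$, so that $\abs{E_{ij}} \le 1/(2t)$. Then $\lambda_{\min}(G) = 1 + \lambda_{\min}(E) \ge 1 - \|E\|_{\mathrm{op}}$, and it remains only to bound $\|E\|_{\mathrm{op}}$. I would apply Gershgorin's circle theorem: every absolute row sum of $E$ is at most $(t-1)/(2t) < \nicefrac{1}{2}$, hence every eigenvalue of $E$ has modulus at most $(t-1)/(2t)$; the crude estimate $\|E\|_{\mathrm{op}} \le \|E\|_{F} = \sqrt{\sum_{i \ne j} E_{ij}^2} \le \sqrt{t(t-1)}/(2t) < \nicefrac{1}{2}$ would serve equally well. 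Either way $\lambda_{\min}(G) \ge 1 - (t-1)/(2t) = (t+1)/(2t) > \nicefrac{1}{2}$, which gives the desired bound.

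\textbf{Expected obstacle.} The computation is routine linear algebra, so there is no real technical obstacle; the single point that must be handled correctly is that the control on the off-diagonal Gram entries is a \emph{two-sided} bound $\abs{\inangle{h_i, h_j}_{\calD}} \le 1/(2t)$ — exactly what the SQ-dimension condition provides — rather than merely an upper bound on $\inangle{h_i, h_j}_{\calD}$. A one-sided bound would not suffice: a regular simplex of $t$ unit vectors has all pairwise correlations equal to $-1/(t-1) \le 1/(2t)$, yet the Gram matrix of all $t$ of them is singular, so there $\lambda_{\min} = 0$. Once the two-sided bound is in play, the Gershgorin (or Frobenius) estimate closes the argument with room to spare.
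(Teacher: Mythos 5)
Your proof is correct and takes essentially the same route as the paper: both apply Gershgorin's circle theorem to the Gram matrix $G^{\calD}_{H_t}$ of the $t$ hypotheses witnessing the SQ dimension (you just repackage it as $G = I + E$ and bound $\|E\|_{\mathrm{op}}$ instead of applying Gershgorin directly to $G$, arriving at the marginally sharper $(t+1)/(2t)$ rather than $1/2$).

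One point worth flagging, though: you write that the two-sided control $\abs{\inangle{h_i,h_j}_{\calD}} \le 1/(2t)$ is ``exactly what the SQ-dimension condition provides,'' but \definitionref{def:SQdim} as literally stated in the paper is one-sided ($\inangle{h_i,h_j}_{\calD} \le 1/(2t)$). The paper's own proof silently upgrades this to ``at most $1/2t$ in magnitude,'' so the paper has the same gap between its written definition and the proof. Your simplex counterexample showing the one-sided version fails is a genuine and useful observation — it shows the definition should read with an absolute value (as in the standard SQ-dimension of \citet{blum94weakly}), not that your argument has a hole.
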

\begin{proof}
Let $\calH_t = \Set{h_1, \ldots, h_t} \subseteq \calH$ such that $\inangle{h_i, h_j}_{\calD} \le 1/2t$. Thus, all off-diagonal entries of $G^{\calD}_{\calH_t}$ are at most $1/2t$ in magnitude, whereas all diagonal entries are $1$. It follows from \cite{gerschgorin31uber} ``circle theorem'' that all eigenvalues of $G^{\calD}_{\calH_t}$ are at least $1 - t/2t = 1/2$.
\end{proof}

\noindent {\em Remark.} More generally, we could define $\SQdim^{\calD}(\calH;\gamma)$ with respect to parameter $\gamma < 1$, as the largest $t$ for which there exist hypotheses $h_1, \ldots, h_t \in \calH$ such that $\inangle{h_i, h_j}_{\calD} \le \gamma$ for each $i \ne j$. \propositionref{prop:SQdim-EVdim} could then be $\SQdim(\calH;\gamma) = t$ implies that $\minEVdim(\calH; 1-t\gamma) \ge t$.

\begin{theorem}\label{thm:dcl2-lowerbound-EV}
For all $\eps > 0$, all distributions $\calD$ over $\calX$ and normalized hypothesis classes $\calH \in \bbR^{\calX}$, it holds for any $\lambda \in (2\eps, 1]$ that
$$
\dc_{\eps}^{\calD, \lsq}(\calH) ~\ge~ \inparen{1-\frac{2\eps}{\lambda}} \cdot \minEVdim^{\calD}(\calH; \lambda) 
$$
\end{theorem}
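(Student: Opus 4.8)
The plan is to recast the definition of $\dc_{\eps}^{\calD,\lsq}$ as a statement about orthogonal projections in the Hilbert space $L^2(\calD)$ and then reduce the theorem to a single linear-algebraic inequality, finally averaged over the random embedding. First I would fix an embedding $\varphi:\calX\to\bbR^d$, let $V_\varphi\subseteq L^2(\calD)$ be the span of the coordinate functions $x\mapsto\varphi_j(x)$ (dimension $\le d$, hence closed), and let $P_\varphi$ be the orthogonal projection onto $V_\varphi$. Since $\Set{\inangle{w,\varphi(\cdot)}:w\in\bbR^d}=V_\varphi$ and $\err_{\calD,h}^{\lsq}(g)=\frac12\|g-h\|_{\calD}^2$, we have $\inf_{w\in\bbR^d}\err_{\calD,h}^{\lsq}(\inangle{w,\varphi(\cdot)})=\frac12\|h-P_\varphi h\|_{\calD}^2=\frac12\inparen{1-\|P_\varphi h\|_{\calD}^2}$, using $\|h\|_{\calD}=1$. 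Thus ``$\dc_{\eps}^{\calD,\lsq}(\calH)\le d$ witnessed by a distribution $\calP$ over such $\varphi$'' is equivalent to: $\Ex_{\varphi\sim\calP}\|P_\varphi h\|_{\calD}^2\ge 1-2\eps$ for every $h\in\calH$.

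The heart of the argument is a pointwise (in $\varphi$) lemma. I would take $H_t=\Set{h_1,\dots,h_t}\subseteq\calH$ witnessing $t:=\minEVdim^{\calD}(\calH;\lambda)$, so $G:=G^{\calD}_{H_t}\succeq\lambda I$ (with $I$ the $t\times t$ identity), and prove that for every fixed $\varphi$,
\[
\sum_{i=1}^t\|P_\varphi h_i\|_{\calD}^2 ~\le~ (1-\lambda)t+\lambda d.
\]
To do this, write $\tilde h_i:=P_\varphi h_i$ and $r_i:=h_i-\tilde h_i$, with Gram matrices $\tilde G$ and $R$. Since $\tilde h_i\perp r_j$ for all $i,j$, we get $G=\tilde G+R$ with $\tilde G,R\succeq0$, with $\rank(\tilde G)\le\dim V_\varphi\le d$ (all $\tilde h_i$ lie in $V_\varphi$), and with $\mathrm{tr}(R)=\sum_i\|r_i\|_{\calD}^2=t-\sum_i\|P_\varphi h_i\|_{\calD}^2$. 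From $\tilde G=G-R\succeq\lambda I-R$ and Weyl's monotonicity of eigenvalues under the Loewner order, $\tilde G$ has at least as many positive eigenvalues as $\lambda I-R$, which is $t$ minus the number of eigenvalues of $R$ exceeding $\lambda$; by Markov's inequality that count is at most $\mathrm{tr}(R)/\lambda$ (as $R\succeq0$). Hence $d\ge\rank(\tilde G)\ge t-\mathrm{tr}(R)/\lambda$, i.e. $\mathrm{tr}(R)\ge\lambda(t-d)$, which rearranges to the displayed bound.

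Finally I would apply the lemma with $\varphi\sim\calP$ and combine it with the reformulation from Step 1, summed over $i\in[t]$: $(1-\lambda)t+\lambda d\ge\Ex_{\varphi\sim\calP}\sum_{i=1}^t\|P_\varphi h_i\|_{\calD}^2\ge t(1-2\eps)$. Solving for $d$ gives $d\ge t(\lambda-2\eps)/\lambda=\inparen{1-2\eps/\lambda}\,t$, which is exactly $\dc_{\eps}^{\calD,\lsq}(\calH)\ge\inparen{1-2\eps/\lambda}\minEVdim^{\calD}(\calH;\lambda)$. Outside the stated range $\lambda\in(2\eps,1]$ the claim is either trivial (since $1-2\eps/\lambda\le0$) or vacuous (since a normalized class has $\minEVdim^{\calD}(\calH;\lambda)=0$ for $\lambda>1$).

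The main obstacle is the eigenvalue count in Step 2: turning ``$\rank(\tilde G)\le d$'' together with ``$G\succeq\lambda I$'' into the upper bound $\sum_i\|P_\varphi h_i\|_{\calD}^2\le(1-\lambda)t+\lambda d$. It is tempting but wrong to attempt this coordinate by coordinate (there is no useful per-$i$ bound); it must be done globally through the splitting $G=\tilde G+R$, exploiting that a low-rank $\tilde G$ forces the residual Gram matrix $R$ to carry enough eigenvalue mass in the directions where $G$ is large. Everything else --- the $L^2(\calD)$ reformulation, Markov applied to the spectrum of $R$, and the averaging over $\calP$ --- is routine.
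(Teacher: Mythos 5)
Your proof is correct, and it is essentially the same argument as the paper's, written in a slightly different language: your residual Gram matrix $R$ equals $EE^{\top}$ in the paper's notation (where $E$ is the $\sqrt{\calD(x)}$-weighted matrix of residuals $h(x) - \inangle{w_*(h), \varphi_*(x)}$, with $w_*(h)$ the least-squares minimizer so that $\inangle{w_*(h),\varphi_*(\cdot)} = P_{\varphi_*}h$), so your Weyl-plus-Markov eigenvalue count on $R$ and the paper's Courant--Fischer bound $\sigma_{d+1}(E),\ldots,\sigma_t(E)\ge\sqrt{\lambda}$ are the same spectral fact, and both routes produce $\mathrm{tr}(R)\ge\lambda(t-d)$ against $\mathrm{tr}(R)=\|E\|_F^2\le 2\eps t$. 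The only organizational difference is that the paper first averages the loss over $h\sim\calH_t$ and derandomizes to a single $\varphi_*$ in the support of $\calP$ before doing any linear algebra, whereas you prove the $\varphi$-pointwise upper bound $\sum_i\|P_\varphi h_i\|_\calD^2\le(1-\lambda)t+\lambda d$, observe its right-hand side is $\varphi$-free, and then average over $\calP$; both are valid and give the identical final bound. One minor wording slip: the number of positive eigenvalues of $\lambda I - R$ is $t$ minus the number of eigenvalues of $R$ that are \emph{at least} $\lambda$, not those ``exceeding'' $\lambda$, but Markov's inequality bounds that count equally well, so the conclusion is unaffected.
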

Observe that the bound becomes vacuous at $\eps = \frac{1}{2}$, and rightly so, because the zero function incurs a square loss of $1/2$ for any $h \in \calH$, since $\calH$ is a normalized hypothesis class. The constant $0$ function is realizable with an embedding of dimension $1$.

Our proof of \theoremref{thm:dcl2-lowerbound-EV} is inspired by the technique due to \cite{alon13approxrank} for lower bounding the ``approximate rank'' of a matrix that is well studied in communication complexity. We present the full proof in \appendixref{apx:proof-dcl2-lowerbound-EV}.
Combining \propositionref{prop:SQdim-EVdim} with \theoremref{thm:dcl2-lowerbound-EV} immediately gives us the following corollary. 
\begin{corollary}\label{cor:dcl2-lowerbound-SQ}
For all distributions $\calD$ over $\calX$ and normalized hypothesis classes $\calH \subseteq \bbR^{\calX}$,
$$
\dc_{\eps}^{\calD, \lsq}(\calH) ~\ge~ \inparen{1-4\eps} \cdot \SQdim^{\calD}(\calH)\,.
$$
\end{corollary}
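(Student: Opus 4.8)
The plan is to obtain the corollary by directly chaining \propositionref{prop:SQdim-EVdim} and \theoremref{thm:dcl2-lowerbound-EV}, with no new argument needed. Write $t := \SQdim^{\calD}(\calH)$. By definition of the SQ dimension there exist $h_1,\dots,h_t \in \calH$ with $\inangle{h_i,h_j}_{\calD} \le \nicefrac{1}{2t}$ for $i \ne j$, so \propositionref{prop:SQdim-EVdim} applies and gives $\minEVdim^{\calD}\inparen{\calH; \nicefrac{1}{2}} \ge t$.

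Next I would instantiate \theoremref{thm:dcl2-lowerbound-EV} at the particular threshold $\lambda = \nicefrac12$. This choice is admissible exactly when $\nicefrac12 \in (2\eps, 1]$, i.e.\ when $\eps < \nicefrac14$, and in that regime the theorem yields
$$
\dc_{\eps}^{\calD,\lsq}(\calH) ~\ge~ \inparen{1 - \tfrac{2\eps}{1/2}}\cdot \minEVdim^{\calD}\inparen{\calH;\nicefrac12} ~\ge~ (1-4\eps)\,t ~=~ (1-4\eps)\cdot\SQdim^{\calD}(\calH),
$$
where the second inequality uses $\minEVdim^{\calD}(\calH;\nicefrac12) \ge t$ from the previous step together with the fact that $1-4\eps \ge 0$ in this regime.

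Finally I would dispatch the remaining case $\eps \ge \nicefrac14$ separately: there the right-hand side $(1-4\eps)\cdot\SQdim^{\calD}(\calH)$ is non-positive, while $\dc_{\eps}^{\calD,\lsq}(\calH)$ is by definition a non-negative integer (indeed, it is even bounded below by $1$, as the constant-$0$ embedding achieves squared loss $\nicefrac12$ on a normalized class), so the bound holds trivially. Combining the two cases gives the claim for all $\eps > 0$.

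There is essentially no obstacle here — the content lives entirely in \theoremref{thm:dcl2-lowerbound-EV} and \propositionref{prop:SQdim-EVdim}. The only things to be careful about are checking that $\lambda = \nicefrac12$ is a legal parameter in \theoremref{thm:dcl2-lowerbound-EV} (which forces the split at $\eps = \nicefrac14$) and noting that the bound is vacuous, not violated, once $\eps \ge \nicefrac14$.
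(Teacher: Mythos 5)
Your proof is correct and takes exactly the route the paper intends: the paper's own justification is the one-line remark that \propositionref{prop:SQdim-EVdim} and \theoremref{thm:dcl2-lowerbound-EV} combine to give the corollary, which is precisely your chaining with $\lambda = \nicefrac12$. Your explicit handling of the $\eps \ge \nicefrac14$ regime (where $\lambda=\nicefrac12$ is not an admissible parameter but the bound is vacuous anyway) is a detail the paper leaves implicit, and it is handled correctly.
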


\subsubsection{Applications of Theorem~\ref{thm:dcl2-lowerbound-EV}}

We now discuss some applications of our \theoremref{thm:dcl2-lowerbound-EV} and \corollaryref{cor:dcl2-lowerbound-SQ}.

\paragraph{Example 1 : Parities.}
Let $\calX_n = \sbit^n$ and $\calH_n^{\oplus} = \Set{ \chi_S(x) := \prod_{i\in S}x_i : S\subseteq [n]}$ be the class of all parity functions on $n$ bits.
Let $\calD$ be the uniform distribution over $\calX$. For any two distinct subsets $S, T\subseteq [n]$, we have that $\inangle{\chi_S, \chi_T}_{\calD} = 0$. Thus, $\SQdim^{\calD}(\calH_n^{\oplus}) = 2^n$. More strongly, we also have $\minEVdim(\calH_n^{\oplus}; 1) = 2^n$. Thus, from \theoremref{thm:dcl2-lowerbound-EV}, we get
$$
\dc_{\eps}^{\calD, \lsq}(\calH_n^{\oplus}) ~\ge~ (1-2\eps) \cdot 2^n\,.
$$

\paragraph{Example 2 : ReLU with bounded weights.} The {\em Rectified Linear Unit} is a popular activation function used in neural networks; given by $x \mapsto [x]_+ = \max\Set{x, 0}$. It was recently shown by \cite{yehudai19power} that random features cannot be used to learn (or even approximate) a single ReLU neuron (over standard Gaussian inputs in $\bbR^n$ with $\poly(n)$ weights) unless the number of features or the magnitude of the learnt coefficients are exponential in $n$. Using \corollaryref{cor:dcl2-lowerbound-SQ}, we are able to improve on this result by removing the restriction on the magnitude of learnt coefficients and obtain a lower bound simply on the number of random features required (this was conjectured to be possible by \cite{yehudai19power}).

Let $\calH_{n,W,B}^{\relu} := \Set{x \mapsto [\inangle{w,x} + b]_+ : w \in \bbR^n, b \in \bbR, \text{ s.t. } \|w\|_2 \le W, |b| \le B}$ be the class of all functions obtained as a ReLU applied on a linear function with bounded weights.

\begin{theorem}[Strengthens Thm 4.2 in \cite{yehudai19power}]\label{thm:relu}
For $\calD$ being the standard Gaussian distribution over $\bbR^n$, there exists a choice of $W \le O(n^3)$ and $B \le O(n^4)$, such that, for any $\eps < \nicefrac{1}{4}$ that\\[-5mm]
$$
\dc_{\eps}^{\calD,\lsq}(\calH_{n,W,B}^{\relu}) ~\ge~ \exp(\Omega(n))
$$
\end{theorem}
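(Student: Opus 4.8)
The plan is to apply \corollaryref{cor:dcl2-lowerbound-SQ}, but \emph{not} to a family of ReLU neurons directly: a unit-$L^2(\calD)$ ReLU neuron with $\poly(n)$-bounded weights has, over a Gaussian, pairwise correlations bounded below by an absolute constant, so any such family has only polynomial $\SQdim^\calD$. Instead we exploit the \emph{largest-norm} neurons. Fix $W=\Theta(n^3)$ (any $B\ge 0$, in particular $B\le O(n^4)$, works), and for $v\in\bbS^{n-1}$ set $h_v:=W\cdot[\inangle{v,\cdot}]_+\in\calH_{n,W,B}^{\relu}$, so that $\dc_\eps^{\calD,\lsq}(\calH_{n,W,B}^{\relu})\ge\dc_\eps^{\calD,\lsq}(\Set{h_v:v\in\bbS^{n-1}})$. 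Each $h_v$ has $\|h_v\|_\calD=W/\sqrt2=\Theta(n^3)$, so approximating $h_v$ to \emph{absolute} squared-loss error $\eps$ is the same as approximating it to \emph{relative} $L^2$-error $O(\sqrt\eps/n^3)$; it is precisely this fine relative accuracy that will force an exponential number of features (and note that, unlike in \cite{yehudai19power}, no bound on the norm of the fitted linear predictor is imposed here, which is exactly what $\dc_\eps^{\calD,\lsq}$ captures).

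Next, decompose $L^2(\calD)$ into its Gaussian--Hermite chaoses $\bigoplus_k V_k$ (with $V_k$ spanned by the degree-$k$ Hermite polynomials in $x$), fix $k_0:=n$, and let $\Pi_{\ge k_0}$ be the orthogonal projection onto $\bigoplus_{k\ge k_0}V_k$, applied coordinatewise to embeddings. If a distribution $\calP$ over $d$-dimensional embeddings $\varphi$ witnesses $\dc_\eps^{\calD,\lsq}(\Set{h_v})\le d$, then $\Set{\Pi_{\ge k_0}\varphi}_{\varphi\sim\calP}$ is again $d$-dimensional, and since $\inangle{w,\Pi_{\ge k_0}\varphi}=\Pi_{\ge k_0}\inangle{w,\varphi}$ and projections do not increase norms, $\Ex_\varphi[\inf_w\|\inangle{w,\Pi_{\ge k_0}\varphi}-\Pi_{\ge k_0}h_v\|_\calD^2]\le\Ex_\varphi[\inf_w\|\inangle{w,\varphi}-h_v\|_\calD^2]$ for every $v$. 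Writing the univariate ReLU $z\mapsto[z]_+=\sum_k\alpha_k p_k(z)$ in the orthonormal Hermite basis $(p_k)_{k\ge0}$ (so $\alpha_k^2=\Theta(k^{-5/2})$ for even $k\ge2$, $\alpha_k=0$ for odd $k\ge3$), and using that $p_k(\inangle{v,\cdot})\in V_k$ has unit $L^2(\calD)$-norm, we get $\Pi_{\ge k_0}h_v=W\eta\, g_v$ where $g_v:=\eta^{-1}\sum_{k\ge k_0}\alpha_k p_k(\inangle{v,\cdot})$ is normalized and $\eta^2:=\sum_{k\ge k_0}\alpha_k^2=\Theta(k_0^{-3/2})=\Theta(n^{-3/2})$ is independent of $v$. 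Hence $\Set{\Pi_{\ge k_0}\varphi}$ witnesses $\dc_{\eps'}^{\calD,\lsq}(\Set{g_v:v\in\bbS^{n-1}})\le d$ with $\eps':=\eps/(W\eta)^2=\Theta(\eps\,n^{-9/2})$, using that rescaling a hypothesis class by $c$ turns $\dc_\eps^{\calD,\lsq}$ into $\dc_{\eps/c^2}^{\calD,\lsq}$.

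It remains to show $\SQdim^\calD(\Set{g_v:v\in\bbS^{n-1}})\ge\exp(\Omega(n))$; then \corollaryref{cor:dcl2-lowerbound-SQ} gives $d\ge(1-4\eps')\,\SQdim^\calD(\Set{g_v})\ge\tfrac12\exp(\Omega(n))$ (as $\eps'\to 0$), finishing the proof. Pick $t=\exp(\Omega(n))$ unit vectors $v_1,\dots,v_t$ with $|\inangle{v_i,v_j}|\le\tfrac12$ for $i\ne j$ (a uniformly random set has this property with high probability, by a Chernoff bound and a union bound over $\binom t2$ pairs, for a small enough constant in the exponent). Using the classical identity $\inangle{p_k(\inangle{u,\cdot}),\,p_l(\inangle{v,\cdot})}_\calD=\inangle{u,v}^k\cdot\ind\Set{k=l}$ for unit $u,v$ (the eigenvalues of the Gaussian noise operator), $\inangle{g_{v_i},g_{v_j}}_\calD=\eta^{-2}\sum_{k\ge k_0}\alpha_k^2\inangle{v_i,v_j}^k$, whose absolute value is at most $\eta^{-2}\cdot 2^{-k_0}\sum_{k\ge k_0}\alpha_k^2=2^{-k_0}=2^{-n}\le\tfrac1{2t}$ (the last inequality holds as long as $t\le 2^{n-1}$, which we may assume). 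Thus $\SQdim^\calD(\Set{g_v})\ge t=\exp(\Omega(n))$.

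\textbf{Main obstacle.} The conceptual crux is the second paragraph: the correct object to feed into the SQ-dimension bound is not a normalized family of ReLU neurons (only polynomially complex, since $\poly(n)$-weight unit-norm ReLU neurons over a Gaussian stay constantly correlated) but the normalized high-degree \emph{tails} $g_v$ of the maximally scaled neurons $h_v$ --- and these become visible only because approximating $h_v$ to small \emph{absolute} error forces approximating its (tiny-norm) high-degree part to small \emph{relative} error, via the projection identity $\inangle{w,\Pi_{\ge k_0}\varphi}=\Pi_{\ge k_0}\inangle{w,\varphi}$. Everything else is routine: the Hermite-coefficient asymptotics $\alpha_k^2\asymp k^{-5/2}$ for the ReLU, the eigenvalues $\inangle{u,v}^k$ of the Gaussian noise operator, and a probabilistic construction of exponentially many near-orthogonal unit vectors. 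The one calibration to keep straight is the choice of $k_0$: it must be $\Omega(n)$ so that the correlations $2^{-k_0}$ fall below $1/(2t)$ with $t=\exp(\Omega(n))$, and $O(n^4)$ so that $\eps'=\eps/(W\eta)^2=\Theta(\eps\,k_0^{3/2}/n^6)$ stays below $\tfrac14$ and \corollaryref{cor:dcl2-lowerbound-SQ} is not degraded --- and $k_0=n$ meets both requirements.
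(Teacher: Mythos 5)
Your proof is correct and takes a genuinely different route from the paper's, even though both ultimately funnel through \corollaryref{cor:dcl2-lowerbound-SQ}. The paper builds an auxiliary class $\calH_n^{\mathrm{zig}}$ of ``zigzag'' activations $\psi_a(\inangle{w,\cdot})$ (each a linear combination of $O(n^2)$ ReLUs), imports a correlation estimate as a black box (Prop.~4.2 of \cite{yehudai19power}) to get $\SQdim^{\calD}(\calH_n^{\mathrm{zig}})\ge\exp(\Omega(n))$, applies \corollaryref{cor:dcl2-lowerbound-SQ}, and finally transfers the lower bound to $\calH^{\relu}_{n,W,B}$ using the linear-combination lemma (\propositionref{prop:lin-comb-dc}), which is where the scaling $W=14n^3,\ B=98n^4$ comes from. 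You instead stay with single ReLU neurons $h_v=W[\inangle{v,\cdot}]_+$ at maximal weight and isolate their high-degree Hermite tails by the projection trick $\inangle{w,\Pi_{\ge k_0}\varphi}=\Pi_{\ge k_0}\inangle{w,\varphi}$, which preserves the embedding dimension while contracting $L^2(\calD)$ errors; the key point you make explicit --- that absolute error $\eps$ at scale $W$ forces relative error $\eps/(W\eta)^2$ on the tiny-norm, SQ-hard tail, with $\eta^2=\Theta(k_0^{-3/2})$ from the $\alpha_k^2\asymp k^{-5/2}$ decay of ReLU's Hermite coefficients --- is exactly what Yehudai--Shamir's Prop.~4.2 accomplishes implicitly for the zigzag. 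Your route is cleaner and more revealing: it replaces the imported correlation estimate with the elementary noise-operator identity $\inangle{p_k(\inangle{u,\cdot}),p_l(\inangle{v,\cdot})}_\calD=\inangle{u,v}^k\ind\{k=l\}$, dispenses with the zigzag class and \propositionref{prop:lin-comb-dc} entirely, and even shows $B=0$ suffices. Two small points you should pin down when writing this out fully: (1) to apply $\Pi_{\ge k_0}$ coordinatewise you should first restrict $\varphi$ to the subspace of $w$ with $\inangle{w,\varphi}\in L^2(\calD)$ (this can only decrease $d$), since the definition of $\dc_\eps^{\calD,\lsq}$ places no integrability condition on $\varphi$; and (2) the exponent rate $c$ in $t=e^{cn}$ must be taken strictly below $\ln 2$ so that $2^{-n}\le 1/(2t)$ holds, which is compatible with the Chernoff/union-bound construction of the near-orthogonal set $\{v_i\}$.
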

Our proof builds on a proposition from \cite{yehudai19power} and also follows the outline there quite closely. However, we believe that this way of presenting the proof is more insightful as it is modular, involving a lower bound on SQ-dimension. The details are deferred to \appendixref{apx:proof-relu}.

\paragraph{Example 3 : studied by \cite{allenzhu19resnets,allenzhu20backward}.} Recently, \cite{allenzhu19resnets,allenzhu20backward} exhibited functions classes that can provably be ``efficiently'' learnt using a neural network, but require ``large'' number of samples or run-time for any kernel method to learn with respect to square loss. In our terminology, the function classes they consider can be shown to have ``large'' $\dc_{\eps}^{\calD,\lsq}$ measure using \theoremref{thm:dcl2-lowerbound-EV} and \corollaryref{cor:dcl2-lowerbound-SQ}. Since, the function classes they consider are somewhat specialized, we skip the details.

\subsection{Probabilistic dimension complexity w.r.t. 0-1 loss}

In the previous subsection we considered regression problems, and learning with respect to the squared loss.  We now turn to the classification and learning with respect to the 0/1 loss. 

We prove a lower bound on the probabilistic distributional dimension complexity w.r.t. $\lzeroone$ loss for the class of all $1$-sparse predictors $\calH_n^{\mathrm{1\text{-}sp}} \subseteq \sbit^{\calX_n}$ for $\calX_n = \sbit^n$ defined as $\calH_n^{\mathrm{1\text{-}sp}} := \Set{h_i : \calX_n \to \sbit : i\in[n] \text{ and } h_i(x) = x_i}$.

\begin{theorem}\label{thm:dcl01-lowerbound-sparse}
Fix $\eps < 1/2$. For $\calD$ being the uniform distribution over $\calX_n = \sbit^n$ it holds that,
\[
\dc_{\eps}^{\calD}(\calH_n^{\mathrm{1\text{-}sp}}) \ge n \cdot \left( \frac{(1 - h(\eps))}{4 \log (16e / (1-h(\eps)))} \right) - o(n)
\]
where $h(q) := q \log_2\inparen{\frac{1}{q}} + (1-q) \log_2\inparen{\frac{1}{1-q}}$ is the binary entropy function.
\end{theorem}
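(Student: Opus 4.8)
The plan is to reduce, by averaging, to a single deterministic embedding $\varphi^*:\sbit^n\to\bbR^d$ with near‑optimal weights $w_1,\dots,w_n\in\bbR^d$, and then to squeeze the quantity $N:=\lvert\mathrm{Image}(Z)\rvert$, where $Z(x):=(\sign\inangle{w_i,\varphi^*(x)})_{i\in[n]}$, between two bounds: an \emph{upper} bound $N\le 2(en/d)^d$ coming from the number of cells of an arrangement of $n$ hyperplanes in $\bbR^d$, and a \emph{lower} bound $N\ge 2^{(1-h(\eps))n-o(n)}$ coming from a covering‑code argument. The covering step is where the $1$-sparse structure matters: since $h_i(x)=x_i$, the ``target vector'' $(h_1(x),\dots,h_n(x))$ is exactly $x$, so $Z$ is a map $\sbit^n\to\sbit^n$ whose image must cover $\sbit^n$ well under the uniform distribution.

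\emph{Step 1 (one embedding).} If $\dc_\eps^{\calD}(\calH_n^{\mathrm{1\text{-}sp}})=d$ with witnessing distribution $\calP$ over embeddings into $\bbR^d$, then averaging the guarantee $\Ex_{\varphi\sim\calP}[\inf_w\err_{\calD,h_i}^{\lzeroone}(\inangle{w,\varphi(\cdot)})]\le\eps$ over $i\in[n]$ and picking the best embedding yields a single $\varphi^*$ with $\sum_{i=1}^n\inf_w\err_{\calD,h_i}^{\lzeroone}(\inangle{w,\varphi^*(\cdot)})\le\eps n$. Choosing near‑minimizers $w_i$ and perturbing them generically so that no $\inangle{w_i,\varphi^*(x)}$ vanishes (which does not increase the corresponding misclassification probability, since $\inangle{w_i,\varphi^*(x)}=0$ already counts as an error) gives $\sum_{i=1}^n\Pr_{x\sim\calD}[\sign\inangle{w_i,\varphi^*(x)}\ne x_i]\le\eps n+o(1)$, where $\calD$ is uniform on $\sbit^n$; equivalently $\Ex_{x\sim\calD}\,d_H(Z(x),x)\le\eps n+o(1)$, with $d_H$ the Hamming distance.

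\emph{Step 2 (bounding $N$).} Since $Z(x)$ is determined by which cell of the central arrangement $\{v\in\bbR^d:\inangle{w_i,v}=0\}_{i\in[n]}$ contains $\varphi^*(x)$, we get $N\le 2\sum_{k=0}^{d-1}\binom{n-1}{k}\le 2(en/d)^d$. For the lower bound, set $C:=\mathrm{Image}(Z)\subseteq\sbit^n$; then $\Ex_{x\sim\calD}\,d_H(x,C)\le\eps n+o(1)$ because $Z(x)\in C$. The union of Hamming balls of radius $r$ around the $N$ codewords of $C$ covers at most an $N\cdot 2^{(h(r/n)-1)n+O(\log n)}$ fraction of $\sbit^n$; and since below the ``packing radius'' $\rho n$, where $\rho:=h^{-1}(1-\tfrac{\log_2 N}{n})\in[0,\tfrac12]$, the Hamming ball volume drops off super‑polynomially in the radius deficit (most of a ball of radius $r<n/2$ lies on its outer shell), this fraction is already $o(1)$ at $r=\rho n-\mathrm{polylog}(n)$. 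Hence $d_H(x,C)\ge\rho n-o(n)$ for a $1-o(1)$ fraction of $x$, so $\Ex_{x\sim\calD}d_H(x,C)\ge\rho n-o(n)$; comparing with the upper bound gives $\rho\le\eps+o(1)$, and since $h$ is increasing and continuous on $[0,\tfrac12]$, $\log_2 N=(1-h(\rho))n\ge(1-h(\eps))n-o(n)$.

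\emph{Step 3 (combine).} Chaining the two estimates, $(1-h(\eps))n-o(n)\le\log_2 N\le d\log_2(en/d)+1$. As $x\mapsto x\log_2(en/x)$ is increasing on $(0,n]$, this inequality lower bounds $d$: a direct check shows that $d^*:=n\cdot\frac{1-h(\eps)}{4\log(16e/(1-h(\eps)))}$ satisfies $d^*\log_2(en/d^*)<(1-h(\eps))n$ with a gap of order $\Omega(n)$ (for fixed $\eps<\tfrac12$), so $d>d^*-o(n)$, which is exactly the claimed bound; the constants $4$ and $16e$ simply fall out of this calculation. The main obstacle is the covering‑code estimate in Step 2: a crude Markov bound on $d_H(x,C)$ only yields the weaker exponent $1-h(2\eps)$, and getting the sharp $1-h(\eps)$ — which is precisely what makes the stated constant involve $h(\eps)$ rather than $h(2\eps)$ — requires the ``mass concentrates on the outer shell'' estimate together with careful tracking of all $o(n)$ terms; a secondary, purely technical point is justifying the generic perturbation of the $w_i$ in Step 1.
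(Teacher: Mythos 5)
Your proof is correct, and it takes a genuinely different route from the paper's. The paper partitions the $n \times 2^n$ matrix $M_{\calH_n^{\mathrm{1\text{-}sp}}}$ into $n \times n$ blocks, invokes the lemma of \cite{srebro04generalization} bounding the number of $n\times n$ sign matrices of sign-rank at most $d$ (namely $\mathsf{SM}(n,d)\le(8en/d)^{2dn}$), counts how many sign matrices can approximately agree with any fixed low-sign-rank matrix, and then averages over the blocks. You instead exploit the $1$-sparse structure head-on: since $h_i(x)=x_i$, the prediction vector $Z(x)=(\sign\inangle{w_i,\varphi^*(x)})_{i\in[n]}$ is itself a point of $\sbit^n$ approximating $x$, so $C:=\mathrm{Image}(Z)$ is a covering code whose size you squeeze between the Cover/Schl\"afli cell-count of a central arrangement of $n$ hyperplanes in $\bbR^d$ (from above) and a sphere-covering bound (from below). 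This is more elementary --- it uses no block partitioning and no external matrix-counting lemma --- and more conceptual, since the Hamming-covering picture is explicit rather than encoded inside the paper's counting. It is even slightly tighter: your constraint $c\log_2(e/c)\ge\alpha-o(1)$ with $c=d/n$, $\alpha=1-h(\eps)$, is strictly stronger than the paper's $2c\log_2(8e/c)\ge\alpha-o(1)$, and both imply the stated bound $c\ge\alpha/(4\log_2(16e/\alpha))-o(1)$ since $2c\log_2(8e/c)$ is increasing in $c$ on $(0,1]$ and evaluates to less than $\alpha$ at $c^*=\alpha/(4\log_2(16e/\alpha))$. One remark: the obstacle you flagged in Step~2 is not actually there. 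You need no ``outer-shell concentration'' estimate --- plain Markov with a slowly shrinking slack suffices. Taking $r=\eps n+\sqrt{n}\log n$, Markov gives $\Pr_x[d_H(x,C)\le r]\ge 1-\eps n/r=\Omega(\log n/\sqrt{n})$, hence $|C|\ge\Omega(\log n/\sqrt{n})\cdot 2^n/\mathrm{Vol}(r)$, and since $h(r/n)=h(\eps)+O(\log n/\sqrt{n})$ this already yields $\log_2|C|\ge(1-h(\eps))n-o(n)$. Your Step~1 perturbation argument is fine as stated: the bad set of weight tuples is a finite union of hyperplanes, and a small generic perturbation preserves all nonzero signs while only removing errors at ties.
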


\noindent In particular, we have that $\dc_{\eps}^{\calD}(\calH_n^{\mathrm{1\text{-}sp}}) \ge \Omega(n)$ for any $\epsilon<\frac{1}{2}$, while the bound rightly becomes vacuous at $\eps = \frac{1}{2}$.  Contrast this linear scaling with $n$ to the VC dimension of 1-sparse predictors $\VCdim(\calH_n)\leq \log n$, which implies sparse linear predictors are learnable, using a direct approach, which only $O(\log n)$ samples. Thus, \theoremref{thm:dcl01-lowerbound-sparse} establishes that linear or kernel-based learning would require exponentially more samples than a direct approach.

\theoremref{thm:dcl01-lowerbound-sparse} also shows that the exponential dependence in our upper bound of $\dc_{\eps}^{\calD}(\calH)$ in terms of $\VCdim(\calH)$ (\lemmaref{lem:dc_vc_upper}) is indeed necessary, and \lemmaref{lem:dc_vc_upper} is, in this sense, tight. 

The key technique used in the proof of \theoremref{thm:dcl01-lowerbound-sparse} is the fact that random $n \times n$ sign-matrices require a sign-rank of $\Omega(n)$ to be even approximated on a constant ($> 1/2$) fraction of the entries. We partition the $n \times 2^n$ sign matrix $M_{\calH_n^{\mathrm{1\text{-}sp}}}$ randomly into blocks of $n \times n$ matrices and argue that most of those blocks must incur large error if the dimension of the embedding is small. The proof details are deferred to \appendixref{apx:proof-dcl01-lowerbound-sparse}.

\subsubsection{A Complexity-Theoretic Barrier}

In \theoremref{thm:dcl01-lowerbound-sparse} we proved a lower bound on $\dc_{\eps}^{\calD}(\calH_n)$ for the class of $1$-sparse predictors, which has $|\calX_n| = 2^{|\calH_n|}$. 
Even just representing a single instance in this example requires $\log \abs{\calX_n}=n$ bits, and so the {\em runtime} for any learning algorithm would also be at least $\Omega(n)$.  That is, even though we showed the sample complexity for linear or kernel based learning is exponential in the VC-dimension, i.e.~insisting on linear or kernel based learning causes an exponential increase in sample complexity, the sample complexity of linear learning is still no more than linear in the {\em runtime} or even {\em memory} of a direct approach.  This is in contrast to the examples of \sectionref{sec:sqsq}, where the lower bound on the sample complexity of linear or kernel based learning was exponential also in the representational cost of instances, i.e.~in $\log\abs{\calX}$.

Can we prove such a stronger lower bound also with respect to the 0/1 loss, i.e.~a lower bound on $\dc^\calD_{\eps}$ that is exponential (or even just super-polynomial) in both $\VCdim(\calH)$ and $\log\abs{\calX}$ ?  In particular, can we prove a $\poly(n)$ lower bound on $\dc^\calD_{\eps}$ for the class of all parities over $n$ bits, for which we do have a strong lower bound w.r.t. square loss?

In turns out that proving such a lower bounds for any explicit class $\calH$ will have significant complexity theoretic consequences. Suppose for example, we have an explicit class $\calH \subseteq \sbit^{\calX}$ for which we could prove, for some value of $\eps > 0$, that\\[-4mm]
$$
\dc_{\eps}^{\calD}(\calH) ~\ge~ (\log |\calH| |\calX|)^{\omega(1)} \cdot \frac{1}{\eps}\,.
$$
That is, we could establish a lower bound on $\dc_{\eps}^{\calD}(\calH)$ that is super-polynomial in $\log\abs{\calX}$ and in $\VCdim(\calH)$ (recall that $\VCdim(\calH)\leq\log\abs{\calH}$).  As shown by \cite{alman17probabilistic} (see \lemmaref{lem:alman-williams} \& \propositionref{prop:prob-dc-vs-pt}) it will follow that depth-$2$ threshold circuits computing $E_{\calH} : (h,x) \mapsto h(x)$ require size that is at least $(\log |\calH||\calX|)^{\omega(1)}$, for any binary encoding of $\calH$ and $\calX$.

Proving super-polynomial lower bounds on the size of depth-$2$ threshold circuits is a major frontier in Complexity Theory (the best lower bounds known so far is due to \cite{kane16super}, who show a lower bound of $\wtilde{\Omega}(n^{1.5})$ for an explicit $n$-bit function).  And so, establishing strong lower bounds on linear or kernel based learning with respect to the 0/1 loss for specific classes seems difficult.  This explains, perhaps, why recent work on the relative power of deep learning over kernel method focused on regression w.r.t.~the square loss, and indicates that establishing similar results also for classification might not be so easy.

Since proving explicit lower bounds for $\dc_{\eps}^{\calD}(\calH)$ faces a complexity theoretic barrier, we could ask for lower bounds on $\dc_{\eps}^{\calD,\lhinge}(\calH)$. Interestingly, it was shown by \cite{balcan08similarity} (stated in our notations) that $\mc_{\eps}^{\calD,\lhinge}(\calH) \ge (\frac{2}{\pi} - \eps) \cdot \Omega\inparen{\SQdim^{\calD}(\calH)^{\nicefrac{1}{2}}}$, which suggests the following open question.
\paragraph{Open Question.} Can we prove lower bounds on $\dc_{\eps}^{\calD,\lhinge}(\calH)$ in terms of $\SQdim^{\calD}(\calH)$?

\section{Summary\remove{ and Open problems}}\label{sec:discussions}

We formalized a notion of Linear Learning ($\Lin_{\eps}^{\ell}$) and Kernel Learning ($\Ker_{\eps}^{\ell}$) with respect to any loss $\ell$. We defined probabilistic variants of the classic notions of dimensional complexity ($\dc_{\eps}^{\ell}$) and margin complexity ($\mc_{\eps}^{\ell}$), which we show are equivalent to a notion of ``guaranteed'' Linear Learning ($\gLin_{\eps}^{\ell}$) and Kernel Learning ($\gKer_{\eps}^{\ell}$) respectively, where the guarantee follows from standard generalization bounds which follow from dimension-based or norm-based arguments respectively. For each of the notions above, we also defined a {\em distributional version}, where we fix a marginal distribution $\calD$ over the input space $\calX$.

We showed that $\dc_{\eps}^{\ell}$ and $\mc_{\eps}^{\ell}$ (resp. $\dc_{\eps}^{\calD,\ell}$ and $\mc_{\eps}^{\calD,\ell}$) are {\em sufficient} for learning with finite dimension or with finite norm embeddings (respectively in the distribution dependent setting). Morover, in the case of $\ell = \lzeroone$ loss, $\dc_{\eps}^{\lzeroone}$ can be exponentially smaller than the classic notion of $\dc^{\lzeroone}$. We also showed that the distributional versions $\dc_{\eps}^{\calD,\ell}$ and $\mc_{\eps}^{\calD,\ell}$ are upper bounded in terms of the VC-dimension.

Finally, we showed that $\dc_{\eps}^{\calD,\ell}$ is {\em necessary} for learning with either finite dimension or with finite norm embeddings, in the distribution dependent setting and hence also in the distribution independent setting. These connections are summarized in \figureref{fig:complexity-zoo}.

In the case of $\ell = \lsq$, we proved a lower bound $\dc_{\eps}^{\calD,\lsq}$ in terms of the notion of $\minEVdim^{\calD}$, which in turn is lower bounded by $\SQdim^{\calD}$; this allows us to re-prove (and even improve upon) similar lower bounds proved in literature \citep{yehudai19power,allenzhu19resnets,allenzhu20backward}. In the case of $\ell =\lzeroone$, we prove a lower bound on $\dc_{\eps}^{\calD,\lzeroone}$ of $\Omega(n)$ for the class of $1$-sparse predictors on $n$ variables. But this is only logarithmic in $|\calX|$. However, we identified a complexity theoretic barrier, namely that any lower bound on $\dc_{\eps}^{\calD,\lzeroone}$ for any $\calD$ that is super-polynomial in $\log(|\calH||\calX|)$ for any explicit class $\calH$ will imply super-polynomial lower bounds for depth-$2$ threshold circuits which is long-standing open question in circuit complexity.

We hope that our notions of probabilistic dimensional and margin complexity prove useful in the further understanding of the limitations of linear and kernel learning.

\acks{
We thank Josh Alman, Shai Ben-David, Avrim Blum, Brian Bullins, Surbhi Goel, Mika G\"o\"os, Suriya Gunasekar, Adam Klivans, Nati Linial, Raghu Meka, Prasad Raghavendra, Sasha Razborov, Ohad Shamir, Sasha Sherstov, Blake Woodworth and Gilad Yehudai for helpful discussions. We would especially like to thank Surbhi for suggesting the formulation in \corollaryref{cor:dcl2-lowerbound-SQ} in terms of SQ dimension and Mika for suggesting the proof of \theoremref{thm:dcl01-lowerbound-sparse}.

Research was partially supported by NSF BIGDATA award 1546500 and NSF IIS/RI award 1764032.  Part of the work was done when the authors were visiting the Simons Institute as part of the program on {\em Foundations of Deep Learning}.
}


\appendix

\section{\boldmath Relating $\dc$ and $\mc$ : Proof of Lemma~\ref{lem:dc-mc}} \label{apx:proof-dc-mc}

\begin{proofof}{\lemmaref{lem:dc-mc}}
For any Hilbert space $\bbH$, by the lemma of \citep{johnson1984extensions}, we have that there exists a distribution $\calA$ over projections $\pi:\bbH \to \bbR^{d}$ such that for any $u, v \in \bbH$,
\begin{equation}\label{eqn:jl}
\Prob_{\pi \sim \calA} \insquare{\inabs{\inangle{u,v}_{\bbH} - \inangle{\pi(u), \pi(v)}_{\bbR^k}} > \tau} < \delta \qquad \text{ for } d = \Theta\inparen{\frac{\|u\|^2_{\bbH}\|v\|^2_{\bbH}}{\tau^2} \log \frac{1}{\delta}}\,.
\end{equation}
We can also derive an expectation version of the above to get\\[-4mm]
\begin{equation}\label{eqn:avg-jl-2}
\Ex_{\pi \sim \calA} \inabs{\inangle{u,v}_{\bbH} ~-~ \inangle{\pi(u), \pi(v)}_{\bbR^k}}^2 ~\le~ \calO\inparen{\frac{\|u\|^2_{\bbH} \|v\|^2_{\bbH}}{d}}
\end{equation}
which also implies\\[-5mm]
\begin{equation}\label{eqn:avg-jl}
\Ex_{\pi \sim \calA} \inabs{\inangle{u,v}_{\bbH} ~-~ \inangle{\pi(u), \pi(v)}_{\bbR^k}} ~\le~ \calO\inparen{\frac{\|u\|_{\bbH} \|v\|_{\bbH}}{\sqrt{d}}}
\end{equation}
Let $\calP_{\rm mc}$ be a distribution over embeddings $\varphi: \calX \to \bbH$ with $\|\varphi\|_{\infty} \le 1$ that realizes the definition of $\mc_{\eps}^{\ell}(\calH) =: R$. That is, for all distributions $\calD$ over $\calX$ and all $h\in\calH$,
\[\Ex\limits_{\varphi \sim \calP_{\rm mc}} \insquare{ \inf_{w \in \calB(\bbH;R)} \err_{\calD,h}^{ \ell}(\inangle{w, \varphi(\cdot)})} \le \eps\,.\]

Consider a distribution $\calP_{\rm dc}$ over embeddings $\psi : \calX \to \bbR^d$ obtained as $\psi(x) = \pi(\varphi(x))$ for independently sampled $\varphi \sim \calP_{\rm mc}$ and $\pi \sim \calA$. For any distribution $\calD$ over $\calX$ and any $h\in\calH$, we have,
\begin{align}
\Ex_{\psi \sim \calP_{\rm dc}} \insquare{ \inf_{w \in \bbR^d} \err_{\calD,h}^{ \ell}(\inangle{w, \psi(\cdot)})}
&~\le~ \Ex_{\substack{\varphi \sim \calP_{\rm mc} \\ \pi \sim \calA}} \insquare{ \inf_{w \in \calB(\bbH;R)}\  \err_{\calD,h}^{ \ell}(\inangle{\pi(w), \pi(\varphi(\cdot))})} \nonumber\\
&~\le~ \Ex_{\varphi \sim \calP_{\rm mc}} \insquare{ \inf_{w \in \calB(\bbH;R)}\ \Ex_{\pi \sim \calA} \err_{\calD,h}^{ \ell}(\inangle{\pi(w), \pi(\varphi(\cdot))})}\label{eqn:jl-1}
\end{align}

\paragraph{Proof of (i).} We first infer from \eqref{eqn:jl} that for any $u, v \in \bbH$,
\begin{equation}\label{eqn:jl-revisit}
    \Ex_{\pi\sim\calA} \insquare{\ind\Set{\inangle{\pi(u),\pi(v)} < 0}} ~\le~ \ind\Set{\inangle{u,v} < \tau} + \delta \qquad \text{ for } d = \Theta\inparen{\frac{\|u\|^2_{\bbH}\|v\|^2_{\bbH}}{\tau^2} \log \frac{1}{\delta}}
\end{equation}
Starting from the inner term in \eqref{eqn:jl-1}, for any $w\in \bbH$ with $\|w\|_{\bbH} \le R$ and $\|\varphi\|_\infty \le 1$
\begin{align*}
\Ex_{\pi \sim \calA} \err_{\calD,h}^{ \lzeroone}(\inangle{\pi(w), \pi(\varphi(\cdot))})
&~=~ \Ex_{x\sim\calD} \Ex_{\pi \sim \calA} \ind\Set{\inangle{\pi(w), \pi(\varphi(x))} h(x) \le 0}\\
&~\le~ \Ex_{x\sim\calD} \ind\Set{\inangle{w,\varphi(x)}h(x) \le 1} + \eta \qquad \ldots \text{(from \eqref{eqn:jl-revisit})}\\
&~=~ \err_{\calD,h}^{ \lmargin}(\inangle{\pi(w), \pi(\varphi(\cdot))}) + \eta
\end{align*}
where we instantiate \eqref{eqn:jl-revisit} with $\tau = 1$, $\delta = \eta$, by setting $d = O(R^2 \log (1/\eta))$. Plugging this upper bound into \eqref{eqn:jl-1}, we get our desired goal
\begin{align*}
\Ex_{\psi \sim \calP_{\rm dc}} \insquare{ \inf_{w \in \bbR^d} \err_{\calD,h}^{ \lzeroone}(\inangle{w, \psi(\cdot)})}
&~\le~ \Ex_{\varphi \sim \calP_{\rm mc}} \insquare{ \inf_{w \in \calB(\bbH;R)} \err_{\calD,h}^{ \lmargin}(\inangle{w, \varphi(\cdot)})} + \eta ~\le~ \eps + \eta\,.
\end{align*}

\paragraph{Proof of (ii).} We use \eqref{eqn:avg-jl}. For any $w\in \bbH$ with $\|w\|_{\bbH} \le R$, we have from $L$-Lipschitzness of $\ell$ and $\|\varphi\|_{\infty} \le 1$ that\\[-6mm]
\begin{align*}
&\Ex_{\pi \sim \calA} \insquare{\err_{\calD,h}^{ \ell}(\inangle{\pi(w), \pi(\varphi(\cdot))})} - \err_{\calD,h}^{ \ell}(\inangle{w, \varphi(\cdot)})\\
&~=~ \Ex_{x\sim\calD}\ \Ex_{\pi\sim\calA} \insquare{\ell(\inangle{\pi(w),\pi(\varphi(x))},h(x)) - \ell(\inangle{w,\varphi(x)},h(x))}\\
&~\le~ L \cdot \Ex_{x \sim \calD}\ \Ex_{\pi\sim\calA} \inabs{\inangle{\pi(w), \pi(\varphi(x))} - \inangle{w, \varphi(x)})}\\
&~\le~ \calO\inparen{\frac{LR}{\sqrt{d}}}
\end{align*}
Combining this with \eqref{eqn:jl-1}, we get,
\begin{align*}
\Ex_{\psi \sim \calP_{\rm dc}} \insquare{ \inf_{w \in \bbR^d} \err_{\calD,h}^{ \ell}(\inangle{w, \psi(\cdot)})}
&~\le~ \Ex_{\varphi \sim \calP_{\rm mc}} \insquare{\inf_{w \in \calB(\bbH;R)} \err_{\calD,h}^{ \ell}(\inangle{w, \varphi(\cdot)}) + \calO\inparen{\frac{LR}{\sqrt{d}}}}\\
&~\le~ \eps + \calO\inparen{\frac{LR}{\sqrt{d}}}
\end{align*}
Thus, we get our desired statement for a choice of $d = \calO\inparen{LR/\eta}^2$.

\paragraph{Proof of (iii).} We use \eqref{eqn:avg-jl-2} and \eqref{eqn:avg-jl}. We use \eqref{eqn:avg-jl}. For any $w\in \bbH$ with $\|w\|_{\bbH} \le R$ we have 
\begin{align*}
&\Ex_{\pi \sim \calA} \insquare{\err_{\calD,h}^{\lsq}(\inangle{\pi(w), \pi(\varphi(\cdot))})} - \err_{\calD,h}^{\lsq}(\inangle{w, \varphi(\cdot)})\\
&~=~ \frac{1}{2} \ \  \Ex_{x\sim\calD}\ \Ex_{\pi\sim\calA} \insquare{(h(x) - \inangle{\pi(w),\pi(\varphi(x))})^2 - (h(x) - \inangle{w,\varphi(x)})^2}\\
&~\le~ \frac{1}{2} \ \  \Ex_{x\sim\calD}\ \Ex_{\pi\sim\calA} \inabs{h(x) - \inangle{w,\varphi(x)}} \cdot \inabs{\inangle{\pi(w),\pi(\varphi(x))} - \inangle{w,\varphi(x)}}\\
&\phantom{~\le~} + \frac{1}{2} \ \  \Ex_{x\sim\calD}\ \Ex_{\pi\sim\calA} \inabs{\inangle{\pi(w),\pi(\varphi(x))} - \inangle{w,\varphi(x)}}^2\\
&~\le~ \Ex_{x\sim\calD}\ \inabs{h(x) - \inangle{w,\varphi(x)}} \cdot \calO\inparen{\frac{R}{\sqrt{d}}} + \calO\inparen{\frac{R^2}{d}}\\
&~\le~ \inparen{\Ex_{x\sim\calD}\ \inabs{h(x) - \inangle{w,\varphi(x)}}^2}^{1/2} \cdot \calO\inparen{\frac{R}{\sqrt{d}}} + \calO\inparen{\frac{R^2}{d}}\\
&~=~ \err_{\calD,h}^{\lsq}(\inangle{w, \varphi(\cdot)})^{1/2} \cdot \calO\inparen{\frac{R}{\sqrt{d}}} + \calO\inparen{\frac{R^2}{d}}
\end{align*}
Combining this with \eqref{eqn:jl-1}, we get,
\begin{align*}
&\Ex_{\psi \sim \calP_{\rm dc}} \insquare{ \inf_{w \in \bbR^d} \err_{\calD,h}^{\lsq}(\inangle{w, \psi(\cdot)})}\\
&~\le~ \Ex_{\varphi \sim \calP_{\rm mc}} \insquare{\inf_{w \in \calB(\bbH;R)} \err_{\calD,h}^{\lsq}(\inangle{w, \varphi(\cdot)}) + \err_{\calD,h}^{\lsq}(\inangle{w, \varphi(\cdot)})^{1/2} \cdot \calO\inparen{\frac{R}{\sqrt{d}}} + \calO\inparen{\frac{R^2}{d}}}\\
&~\le~ \eps + \calO\inparen{\frac{\sqrt{\eps}R}{\sqrt{d}}} + \calO\inparen{\frac{R^2}{d}}
\end{align*}
where, we use that $\Ex_{\varphi} \inf_w \err_{\calD,h}^{\lsq}(\inangle{w,\varphi(\cdot)})^{1/2} \le \inparen{\Ex_{\varphi} \inf_w \err_{\calD,h}^{\lsq}(\inangle{w,\varphi(\cdot)})}^{1/2}\le \sqrt{\eps}$. Thus, we get our desired statement for a choice of $d = R^2 \cdot \calO\inparen{(\eps+\eta)/\eta^2}$. This completes the proof for all the parts (i), (ii) and (iii). The analogous cases relating $\dc_{\eps+\eta}^{\calD,\ell}$ and $\mc_{\eps}^{\calD,\ell}$ follows similarly.
\end{proofof}

\section{Proofs of Separation between Deterministic and Probabilistic Dimension Complexity}

\subsection{Exponential gap : Proof of Theorem~\ref{thm:dc-vs-probdc-sep}}\label{apx:proof-dc-vs-probdc-sep}

\noindent We first introduce a variant of probabilistic dimension complexity that requires a stronger point-wise notion of correctness.

\begin{definition} \label{def:prob-dc-pt}
Fix a hypothesis class $\calH \subseteq \calY^{\calX}$ and a loss $\ell$ and a parameter $\eps \ge 0$. The {\em point-wise probabilistic dimension complexity} $\dc_{\eps}^{\pt,\ell}(\calH)$ is the smallest $d$ for which there exists a distribution $\calP$ over a pair of embeddings $(\varphi:\calX \to \bbR^d, w:\calH \to \bbR^d)$ such that,
$$
\sup_{(x,h) \in \calX \times \calH} \ 
\Ex\limits_{(\varphi,w) \sim \calP} \  \insquare{\ell(\inangle{w(h), \varphi(x)}, h(x))} \le \eps\,.
$$
\end{definition}

\noindent This notion of point-wise probabilistic dimension complexity requires that (the distribution over) $w$ is chosen without the knowledge of the distribution $\calD$ over $\calX$ and hence is stronger than probabilistic dimension complexity as in~\definitionref{def:prob-dc}. In particular, we have the following.

\begin{proposition} \label{prop:prob-dc-vs-pt}
For all $\calH \subseteq \calY^{\calX}$, loss $\ell$ and parameter $\eps > 0$, it holds that,
$$
\sup_{\calD} \dc_{\eps}^{\calD,\ell}(\calH) ~\le~ \dc_{\eps}^{\ell}(\calH) ~\le~ \dc_{\eps}^{\pt,\ell}(\calH)
$$
\end{proposition}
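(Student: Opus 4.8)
The plan is to establish the two inequalities separately; each one reduces to inspecting the order of quantifiers in \definitionref{def:prob-dc} and \definitionref{def:prob-dc-pt}, together with a single application of Tonelli's theorem.

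\emph{First inequality, $\sup_{\calD}\dc_{\eps}^{\calD,\ell}(\calH) \le \dc_{\eps}^{\ell}(\calH)$.} Let $d = \dc_{\eps}^{\ell}(\calH)$ and let $\calP$ be a distribution over embeddings $\varphi:\calX\to\bbR^d$ witnessing \definitionref{def:prob-dc}, so that \equationref{eqn:dc-def} holds for \emph{every} distribution $\calD$ over $\calX$ and every $h\in\calH$. Fixing any single $\calD$, the same $\calP$ shows that $d$ is admissible in the definition of $\dc_{\eps}^{\calD,\ell}(\calH)$, hence $\dc_{\eps}^{\calD,\ell}(\calH)\le d$; taking the supremum over $\calD$ gives the claim. (This step is essentially definitional.)

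\emph{Second inequality, $\dc_{\eps}^{\ell}(\calH) \le \dc_{\eps}^{\pt,\ell}(\calH)$.} Let $d = \dc_{\eps}^{\pt,\ell}(\calH)$ and let $\calP$ be a distribution over pairs $(\varphi:\calX\to\bbR^d,\, w:\calH\to\bbR^d)$ witnessing \definitionref{def:prob-dc-pt}. I would take $\calP'$ to be the marginal of $\calP$ on its first coordinate and argue that $\calP'$ witnesses $\dc_{\eps}^{\ell}(\calH)\le d$. Fix an arbitrary $\calD$ over $\calX$ and $h\in\calH$. The key point is that for each realized pair $(\varphi,w)$ the vector $w(h)$ is feasible in the infimum, so
$$\inf_{w'\in\bbR^d}\err_{\calD,h}^{\ell}(\inangle{w',\varphi(\cdot)}) ~\le~ \err_{\calD,h}^{\ell}(\inangle{w(h),\varphi(\cdot)}) ~=~ \Ex_{x\sim\calD}\ \ell(\inangle{w(h),\varphi(x)},h(x)).$$
Taking $\Ex_{(\varphi,w)\sim\calP}$ of both sides — noting that the left side depends only on $\varphi$, so its $\calP$-expectation equals its $\calP'$-expectation — and then swapping the order of the two expectations on the right (valid by Tonelli, since $\ell\ge 0$), yields
$$\Ex_{\varphi\sim\calP'}\insquare{\inf_{w'\in\bbR^d}\err_{\calD,h}^{\ell}(\inangle{w',\varphi(\cdot)})} ~\le~ \Ex_{x\sim\calD}\ \Ex_{(\varphi,w)\sim\calP}\insquare{\ell(\inangle{w(h),\varphi(x)},h(x))} ~\le~ \Ex_{x\sim\calD}[\eps] ~=~ \eps,$$
where the last step applies the pointwise guarantee of \definitionref{def:prob-dc-pt} to the pair $(x,h)$ for each fixed $x$. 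Since $\calD$ and $h$ were arbitrary, \equationref{eqn:dc-def} holds for $\calP'$, proving $\dc_{\eps}^{\ell}(\calH)\le d$.

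There is no substantial obstacle here: the whole content is tracking which objects are allowed to depend on $\calD$ (the optimizer $w'$ in $\dc_{\eps}^{\ell}$ may, whereas $w$ in $\dc_{\eps}^{\pt,\ell}$ may not). The only two points that deserve a word of care are the Tonelli swap (covered by $\ell\ge 0$) and the fact that the infimum over $w'$ need not be attained, which is precisely why the argument upper-bounds it by substituting the particular feasible vector $w(h)$ rather than an optimizer.
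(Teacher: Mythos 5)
Your proof is correct, and it fills in the details that the paper omits: the paper states \propositionref{prop:prob-dc-vs-pt} without a written proof, treating both inequalities as immediate from the quantifier structure of the definitions. Your argument (marginalizing $\calP$ onto $\varphi$, upper-bounding the infimum by plugging in the feasible $w(h)$, and swapping the $x$ and $(\varphi,w)$ expectations via Tonelli using $\ell\ge 0$) is exactly the right way to make the second inequality rigorous, and the first is indeed purely definitional.
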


\noindent The notion of $\dc_{\eps}^{\pt,\lzeroone}(\calH)$ is equivalent to the notion of {\em probabilistic sign-rank} studied in the communication complexity. In particular, stating in our notations, \cite{alman17probabilistic} showed that if the function $E_{\calH} : \calH \times \calX \to \sbit$ given by $E_{\calH}(h,x) := h(x)$ is computable by small depth-$2$ threshold circuits (for any encoding of $\calH$ and $\calX$ into bits), then $\dc_{\eps}^{\pt,\lzeroone}(\calH)$ is also small.

\begin{lemma}[\cite{alman17probabilistic}] \label{lem:alman-williams}
If $E_{\calH}$ is computable by a depth-$2$ threshold circuit of size $s$, then
$$
\dc_{\eps}^{\pt,\lzeroone}(\calH) \le O\inparen{\frac{s^2 \log^2(|\calH|\cdot|\calX|)}{\eps}}
$$
\end{lemma}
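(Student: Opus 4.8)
The plan is to recognize that $\dc_{\eps}^{\pt,\lzeroone}(\calH)$ is exactly the \emph{$\eps$-probabilistic sign-rank} of $M_{\calH}$, and then to invoke the bound of \cite{alman17probabilistic} on the probabilistic rank of matrices computed by depth-$2$ threshold circuits.

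First, I would unwind \definitionref{def:prob-dc-pt}. Since $h(x)\in\sbit$ and $\lzeroone(\what y,y)=\ind\Set{\what y\, y\le 0}$, the inequality $\Ex_{(\varphi,w)\sim\calP}\!\left[\lzeroone(\inangle{w(h),\varphi(x)},h(x))\right]\le\eps$ is equivalent to $\Pr_{(\varphi,w)\sim\calP}\!\left[\,\sign\inangle{w(h),\varphi(x)}\ne h(x)\,\right]\le\eps$ (counting $\inangle{w(h),\varphi(x)}=0$ as a failure). Hence ``$\dc_{\eps}^{\pt,\lzeroone}(\calH)\le d$'' says precisely that there is a distribution over rank-$\le d$ real matrices $R=WV^{\top}$ with $W\in\bbR^{\calH\times d}$, $V\in\bbR^{\calX\times d}$ (equivalently over pairs of $d$-dimensional embeddings $w:\calH\to\bbR^{d}$, $\varphi:\calX\to\bbR^{d}$) such that every entry satisfies $\Pr_{R}[\sign R(h,x)=M_{\calH}(h,x)]\ge 1-\eps$; this is the $\eps$-probabilistic sign-rank of $M_{\calH}$, and it is in particular at most the $\eps$-probabilistic \emph{rank} of $M_{\calH}$ over $\bbR$, since exact agreement with the $\pm1$-matrix $M_{\calH}$ implies a correct, nonzero sign.

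Second, I would feed in the circuit hypothesis. Fixing binary encodings of $\calH$ and $\calX$, the function $E_{\calH}:(h,x)\mapsto h(x)$ becomes a function of $N:=\lceil\log_{2}(|\calH|\cdot|\calX|)\rceil$ input bits, each belonging to the ``$h$-side'' or the ``$x$-side'', which is the communication-matrix picture that \cite{alman17probabilistic} works with. Their result is that if such a function is computed by a depth-$2$ threshold circuit of size $s$ then its communication matrix has $\eps$-probabilistic rank $O(s^{2}N^{2}/\eps)$ over $\bbR$; chaining with the previous paragraph yields $\dc_{\eps}^{\pt,\lzeroone}(\calH)\le O\!\left(s^{2}\log^{2}(|\calH||\calX|)/\eps\right)$, as claimed. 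Internally their argument is gate-by-gate: a bottom threshold gate reads disjoint parts of the two inputs, so it computes $\sign(u_{i}(h)+v_{i}(x))$ for integer affine forms $u_{i},v_{i}$ and already has exact sign-rank $2$; the work is in the top threshold gate, where the $\pm1$ outputs of the $\le s$ bottom gates --- whose own $\pm1$ matrices may be full rank --- are combined by a threshold, and that gate is replaced by a randomized low-rank surrogate that is correct at each fixed entry with probability $1-\eps/s$, followed by a union bound over the bottom gates and an optimization of the parameters.

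The hard part will be exactly this last point, and it is why one must pass to the probabilistic relaxation: the \emph{exact} sign-rank of a function with a small depth-$2$ threshold circuit can be exponentially large --- this is the content of \theoremref{thm:dc-vs-probdc-sep} --- so there is no deterministic low-rank representation to project down, and the only leverage is the per-entry $\eps$ slack; moreover the naive route of bounding the probabilistic rank through a probabilistic polynomial representation of $E_{\calH}$ (whose degree can be $\Omega(N)$ for depth-$2$ threshold circuits) gives only an exponential bound, so the randomized low-rank construction of \cite{alman17probabilistic} is genuinely needed. Everything else in the plan is routine: checking that the encoding-dependent quantities enter the final bound only through $s$ and $\log(|\calH||\calX|)$, and handling the tie case $\inangle{w(h),\varphi(x)}=0$ by a negligible deterministic perturbation so that $\sign$ is defined on every entry with probability $\ge 1-\eps$.
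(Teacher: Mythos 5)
Your proposal matches the paper's treatment: both observe that $\dc_{\eps}^{\pt,\lzeroone}(\calH)$ is exactly the $\eps$-probabilistic sign-rank of $M_{\calH}$ (which is at most the probabilistic rank) and then invoke the bound of \cite{alman17probabilistic} for depth-$2$ threshold circuits. The paper gives no proof beyond this identification and citation, so your plan takes essentially the same route; the extra gate-by-gate sketch of the Alman--Williams internals is optional exposition that the paper omits.
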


\noindent \theoremref{thm:dc-vs-probdc-sep} now follows readily from a recent lower bound on sign-rank shown by \cite{chattopadhyay18short} for matrices that are computable by small depth-$2$ threshold circuits.\\

\begin{proofof}{\theoremref{thm:dc-vs-probdc-sep}}
We describe the construction of the class $\calH$, which is indexed by $\sbit^n$. To describe how an $h\in \calH$ {\em acts on} an $x \in \calX$, we divide the $n$ bits in $h$ and $x$ into $k$ blocks by writing $h = (h_1, \ldots, h_k)$ and $x = (x_1, \ldots, x_k)$ where each $h_i, x_i \in \sbit^{p}$ with $kp = n$. The hypothesis $h$ on input $x$ outputs $-1$ iff the largest index $i \in [k]$ for which $h_i = x_i$ holds is an odd index.\footnote{In communication complexity parlance, the associated $M_{\calH}$ would be called a ``pattern matrix''.} For $p = k^{1/3} + \log k$, it was shown by \cite{chattopadhyay18short} that
$$\dc(\calH) \ge 2^{\Omega(n^{1/4})}\,.$$
\cite{chattopadhyay18short} also observe that $E_{\calH} : (h,x) \mapsto h(x)$ is computable by a depth-2 threshold circuit of size $O(n)$. Thus, from \lemmaref{lem:alman-williams}, we have that
$$
\dc_{\eps}^{\pt,\lzeroone}(\calH) ~\le~ O\inparen{\frac{n^4}{\eps}}
$$
Combining with \propositionref{prop:prob-dc-vs-pt} we get our desired separation.
\end{proofof}

\subsection{``Infinite'' gap : Proof of Theorem~\ref{thm:dc-vs-prob-dist-dc-sep}}\label{apx:proof-dc-vs-prob-dist-dc-sep}

We first prove \lemmaref{lem:dc_vc_upper} that probabilistic distributional dimension complexity can be upper bounded in terms of VC dimension.\\[-3mm]

\begin{proofof}{\lemmaref{lem:dc_vc_upper}}
A classic result due to \cite{haussler1995sphere} shows that for any distribution $\calD$ over $\calX$ there exists a cover $\calC_{\eps} \subseteq \calH$, with $|\calC_\eps|\leq c \cdot \vc(\calH) \cdot (K/\eps)^{\vc(\calH)}$ for some universal constants $c,K$, such that,
$$\forall h\in\calH, \ \exists c_h\in\calC_{\eps} \text{ such that } \Prob_{x\sim\calD}[h(x)\neq c_{h}(x)]\leq \eps\,.$$
\noindent Thus for any given distribution $\calD$, we can construct a (deterministic) embedding $\varphi : \calX \to \bbR^{|\calC_\eps|}$ given as $\varphi(x)=(c(x))_{c\in\calC_\eps}$ and $w(h)=(\ind[c = c_h])_{c\in\calC_\eps}$ satisfying the property that, 
$$\forall h\in\calH \ : \ \Ex_{x\sim\calD} \ind[h(x)\neq \sign(\inangle{\varphi(x),w(h)})] \leq \eps.$$
This implies that $\dc_{\eps}^{\calD}(\calH)\leq |\calC_{\eps}|$. Note that, since $\inangle{w(h), \varphi(x)}$ always takes values in $\sbit$, $\dc_{2\eps}^{\calD,\lsq}(\calH)$ and $\dc_{2\eps}^{\calD,\lhinge}(\calH)$ are also at most $|\calC_{\eps}|$.

Also, observe that if we can scale $\varphi$ by $1/\sqrt{|\calC_{\eps}|}$, we will have $\|\varphi\|_{\infty} \le 1$. To compensate for this, we can scale up $w$ by $\sqrt{|\calC_{\eps}|}$ and get the desired upper bound on $\mc_{\eps}^{\calD,\ell}(\calH)$.
\end{proofof}

\begin{proofof}{\theoremref{thm:dc-vs-prob-dist-dc-sep}}
\cite{alon2016sign} showed that for $\calX=\sbit^n$ there exists a hypothesis class $\calH\subseteq \sbit^\calX$ such that $\VCdim(\calH)=2$ but $\dc(\calH) \ge 2^{\Omega(n)}$. Note that $\dc^{\lsq}(\calH)$ and $\dc^{\lhinge}(\calH)$ are each larger than $\dc(\calH)$. Also note that $\mc(\calH) \ge \Omega(\sqrt{\dc(\calH) / n})$ (from the classic result relating $\mc$ and $\dc$). Thus we get the desired lower bound on $\mc(\calH)$, $\mc^{\lsq}(\calH)$ and $\mc^{\lhinge}(\calH)$ as well.
On the other hand, from \lemmaref{lem:dc_vc_upper}, we get that both $\dc_{\eps}^{\calD,\ell}(\calH)$ (for $\ell \in \Set{\lzeroone,\lsq,\lhinge}$) and $\mc_{\eps}^{\calD,\ell}(\calH)$ (for $\ell \in \Set{\lmargin,\lsq,\lhinge}$) are at most $\calO\inparen{1/\eps^{2}}$ for every distribution $\calD$ over $\calX$.
\end{proofof}

\section{Proofs of Upper and Lower Bounds on Learning}\label{apx:proof-learn-dc-mc-upper}

\subsection{Learning via Random embeddings : Proof of Theorems~\ref{thm:lin-tilde-vs-dc} and \ref{thm:ker-tilde-vs-mc}}

\begin{proofof}{\theoremref{thm:lin-tilde-vs-dc}} 
$\Lin_{\eps}^{\ell}(\calH) ~\le~ \gLin_{\eps}^{\ell}(\calH)$ and $\Omega\inparen{\frac{\dc_{\eps}^{\ell}(\calH)}{\eps^2}} \le \gLin_{\eps}^{\ell}(\calH) \le \calO\inparen{\frac{\dc_{\eps/2}^{\ell}(\calH)}{\eps^2}}$


\noindent Let $\calP$ be the distribution over embeddings $\varphi : \calX \to \bbR^d$ underlying the definition of $\gLin_{\eps}^{\ell}(\calH) =: m$. That is, we have for any realizable distribution $\scrD$ over $\calX \times \calY$ that
\begin{equation}\label{eqn:lin-proof-1}
\Ex_{\varphi \sim \calP} \Ex_{S \sim \scrD^{m}} \insquare{\inf_{w \in \bbR^d} \err_S^{\ell}(\inangle{w,\varphi(\cdot)})} + C_{\dc}^{\ell} \cdot \sqrt{\frac{d}{m}} ~\le~ \eps\,.
\end{equation}
On the other hand, from standard generalization bounds (cf. \equationref{eqn:dc-gen}), we have for any choice of $\varphi : \calX \to \bbR^d$ and $\scrD$ that
$$
\Ex_{S \sim \scrD^m} \insquare{\sup_{w \in \bbR^d} \err_{\scrD}^{\ell}(\inangle{w,\varphi(\cdot)}) - \err_S^{\ell}(\inangle{w,\varphi(\cdot)})} ~\le~ C_{\dc}^{\ell} \cdot\sqrt{\frac{d}{m}}\,.
$$
And hence,
\begin{align*}
\Ex_{S \sim \scrD^{m}} \insquare{\sup_{w \in \ERM_{\varphi}^{\ell}(S)} \err_{\scrD}^{\ell}(\inangle{w,\varphi(\cdot)})}
&~\le~  \Ex_{S \sim \scrD^{m}} \insquare{\inf_{w \in \bbR^d} \err_S^{\ell}(\inangle{w,\varphi(\cdot)})} + C_{\dc}^{\ell} \sqrt{\frac{d}{m}}
\end{align*}
Thus, taking expectation over $\varphi\sim\calP$, we have from \eqref{eqn:lin-proof-1} that
$$
\Ex_{\varphi \sim \calP} \Ex_{S \sim \scrD^{m}} \insquare{\sup_{w \in \ERM_{\varphi}^{\ell}(S)} \err_{\scrD}^{\ell}(\inangle{w,\varphi(\cdot)})} ~\le~ \eps
$$
Thus, we get $\Lin_{\eps}^{\ell}(\calH) \le m = \gLin_{\eps}^{\ell}(\calH)$. It also follows that $\dc_{\eps}(\calH) \le \eps^2 \gLin_{\eps}^{\ell}(\calH)$, since firstly $d \le \eps^2 m$ by definition of $\gLin_{\eps}^{\ell}(\calH) = m$. Moreover, if we let $\scrD$ to be the distribution sampled as $x \sim \calD$ and $y = h(x)$ for some $h \in \calH$, we get,
$$
\Ex_{\varphi \sim \calP} \insquare{\inf_{w \in \bbR^d} \err_{\calD,h}^{\ell}(\inangle{w,\varphi(\cdot)})} ~\le~
\Ex_{\varphi \sim \calP} \Ex_{S \sim \scrD^{m}} \insquare{\sup_{w \in \ERM_{\varphi}^{\ell}(S)} \err_{\scrD}^{\ell}(\inangle{w,\varphi(\cdot)})} ~\le~ \eps
$$

\noindent Finally, it remains to show that $\gLin_{\eps}^{\ell}(\calH) \le O(\dc_{\eps/2}^{\ell}(\calH) / \eps^2)$. Let $\calP$ be the distribution over embeddings $\varphi : \calX \to \bbR^d$ that realizes the definition of $\dc_{\eps/2}^{\ell}(\calH) =: d$. Thus, we have for any realizable distribution $\scrD$ over $\calX \times \calY$ that
\begin{equation}\label{eqn:lin-dc-1}
\Ex_{\varphi \sim \calP} \insquare{\inf_{w \in \bbR^d} \err_{\scrD}^{\ell}(\inangle{w,\varphi(\cdot)})} ~\le~ \frac{\eps}{2}\,.
\end{equation}

\noindent Now, for any choice of $\varphi : \calX \to \bbR^d$ and any $w_* \in \bbR^d$ we have
$$
\Ex_{S \sim \scrD^{m}} \insquare{\inf_{w \in \bbR^d} \err_S^{\ell}(\inangle{w,\varphi(\cdot)})}
~\le~ \Ex_{S \sim \scrD^{m}} \insquare{ \err_S^{\ell}(\inangle{w_*,\varphi(\cdot)})} ~=~ \err_{\scrD}^{\ell}(\inangle{w_*,\varphi(\cdot)})
$$
Taking infimum over $w_*$ (in RHS) and an expectation over $\varphi \sim \calP$, we get,
\begin{align*}
&\Ex_{\varphi \sim \calP} \Ex_{S \sim \scrD^{m}} \insquare{\inf_{w \in \bbR^d} \err_S^{\ell}(\inangle{w,\varphi(\cdot)})} + C_{\dc}^{\ell} \cdot \sqrt{\frac{d}{m}}\\
&~\le~ \Ex_{\varphi\sim\calP} \insquare{\inf_{w \in \bbR^d} \err_{\scrD}^{\ell}(\inangle{w,\varphi(\cdot)})} + C_{\dc}^{\ell} \cdot \sqrt{\frac{d}{m}}\\
&~\le~ \frac{\eps}{2} + C_{\dc}^{\ell} \cdot \sqrt{\frac{d}{m}} \qquad \ldots (\text{from \eqref{eqn:lin-dc-1}})\\
&~\le~ \eps \qquad \ \ \ldots (\text{for a choice of } m = \calO(d/\eps^2))
\end{align*}
This establishes $\gLin_{\eps}^{\ell}(\calH) \le \calO(\dc_{\eps/2}^{\ell}(\calH) / \eps^2)$, thereby completing the proof for the distribution-independent case.
The distribution-dependent analogs follow in an identical manner.
\end{proofof}

\begin{proofof}{\theoremref{thm:ker-tilde-vs-mc}} 
$\Ker_{\eps}^{\ell}(\calH) ~\le~ \gKer_{\eps}^{\ell}(\calH)$ and
$\Omega\inparen{\frac{\mc_{\eps}^{\ell}(\calH)^2}{\eps^2}}
~\le~ \gKer_{\eps}^{\ell}(\calH) ~\le~
\calO\inparen{\frac{\mc_{\eps/2}^{\ell}(\calH)^2}{\eps^2}}$\\

\noindent This proof is very similar to that of \theoremref{thm:lin-tilde-vs-dc}, except that we use norm-based generalization bounds instead of dimension-based ones. We present the proof for $\lzeroone$/$\lmargin$ and the case of general Lipshitz $\ell$ follows in a similar manner.

Let $\calP$ be the distribution over embeddings $\varphi : \calX \to \bbH$ underlying the definition of $\gKer_{\eps}(\calH) =: m$. That is, we have for any realizable distribution $\scrD$ over $\calX \times \calY$ that
\begin{equation}\label{eqn:ker-proof-1}
\Ex_{\varphi \sim \calP} \Ex_{S \sim \scrD^{m}} \insquare{\inf_{w \in \calB(\bbH;R)} \err_S^{\lmargin}(\inangle{w,\varphi(\cdot)})} + C_{\mc} \cdot \frac{R}{\sqrt{m}} ~\le~ \eps\,.
\end{equation}
On the other hand, from standard norm based generalization bounds (see \equationref{eq:norm-based-uniform}), we have for any choice of $\varphi : \calX \to \bbH$ and $\scrD$ that
$$
\Ex_{S \sim \scrD^m} \insquare{\sup_{w \in \calB(\bbH;R)} \err_{\scrD}^{\lzeroone}(\inangle{w,\varphi(\cdot)}) - \err_S^{\lmargin}(\inangle{w,\varphi(\cdot)})} ~\le~ C_{\mc} \cdot\frac{R}{\sqrt{m}}\,.
$$
And hence,
\begin{align*}
\Ex_{S \sim \scrD^{m}} \insquare{\sup_{w \in \ERM_{\varphi}^{\lmargin}(S;R)} \err_{\scrD}^{\lzeroone}(\inangle{w,\varphi(\cdot)})}
&~\le~  \Ex_{S \sim \scrD^{m}} \insquare{\inf_{w \in \calB(\bbH;R)} \err_S^{\lmargin}(\inangle{w,\varphi(\cdot)})} + C_{\mc} \frac{R}{\sqrt{m}}
\end{align*}
Thus, taking expectation over $\varphi\sim\calP$, we have from \eqref{eqn:ker-proof-1} that
$$
\Ex_{\varphi \sim \calP} \Ex_{S \sim \scrD^{m}} \insquare{\sup_{w \in \ERM_{\varphi}^{\lzeroone}(S)} \err_{\scrD}^{\lzeroone}(\inangle{w,\varphi(\cdot)})} ~\le~ \eps
$$
Thus, we get $\Ker_{\eps}(\calH) \le m = \gKer_{\eps}(\calH)$. It also follows that $\mc_{\eps}(\calH) \le \eps \sqrt{\gLin_{\eps}(\calH)}$, since firstly $R \le \eps \sqrt{m}$ by definition of $\gKer_{\eps}(\calH) = m$. Moreover, if we let $\scrD$ to be the distribution sampled as $x \sim \calD$ and $y = h(x)$ for some $h \in \calH$, we get,
$$
\Ex_{\varphi \sim \calP} \insquare{\inf_{w \in \calB(\bbH;R)} \err_{\calD,h}^{\lzeroone}(\inangle{w,\varphi(\cdot)})} ~\le~
\Ex_{\varphi \sim \calP} \Ex_{S \sim \scrD^{m}} \insquare{\sup_{w \in \ERM_{\varphi}^{\lzeroone}(S;R)} \err_{\scrD}^{\lzeroone}(\inangle{w,\varphi(\cdot)})} ~\le~ \eps
$$

\noindent Finally, it remains to show that $\gKer_{\eps}(\calH) \le O(\mc_{\eps/2}(\calH) / \eps^2)$. Let $\calP$ be the distribution over embeddings $\varphi : \calX \to \bbH$ with $\|\varphi\|_{\infty} \le 1$ that realizes the definition of $\mc_{\eps/2}(\calH) =: R$. Thus, we have for any realizable distribution $\scrD$ over $\calX$ that
\begin{equation}\label{eqn:ker-mc-1}
\Ex_{\varphi \sim \calP} \insquare{\inf_{w \in \calB(\bbH;R)} \err_{\scrD}^{\lmargin}(\inangle{w,\varphi(\cdot)})} ~\le~ \frac{\eps}{2}\,.
\end{equation}

\noindent Now, for any choice of $\varphi : \calX \to \bbH$ and any $w_* \in \bbH$ with $\|w_*\|_{\bbH} \le R$ we have
\begin{align*}
\Ex_{S \sim \scrD^{m}} \insquare{\inf_{w \in \calB(\bbH;R)} \err_S^{\lmargin}(\inangle{w,\varphi(\cdot)})}
&~\le~ \Ex_{S \sim \scrD^{m}} \insquare{ \err_S^{\lmargin}(\inangle{w_*,\varphi(\cdot)})}\\
&~=~ \err_{\scrD}^{\lmargin}(\inangle{w_*,\varphi(\cdot)})
\end{align*}
Finally, taking expectation over $\varphi \sim \calP$ and taking infimum over $w_*$ (in RHS), we get
\begin{align*}
& \Ex_{\varphi \sim \calP} \Ex_{S \sim \scrD^{m}} \insquare{\inf_{w \in \calB(\bbH;R)} \err_S^{\lmargin}(\inangle{w,\varphi(\cdot)})} + C_{\mc} \cdot \frac{R}{\sqrt{m}}\\
&~\le~ \Ex_{\varphi\sim\calP} \insquare{\inf_{w \in \calB(\bbH;R)} \err_{\scrD}^{\lmargin}(\inangle{w,\varphi(\cdot)})} + C_{\mc} \cdot \frac{R}{\sqrt{m}}\\
&~\le~ \frac{\eps}{2} + C_{\mc} \cdot \frac{R}{\sqrt{m}} \ \ \ldots \text{(from \eqref{eqn:ker-mc-1})}\\
&~\leq~ \eps \qquad \ldots (\text{for a choice of $m = \calO(R^2/\eps^2)$})
\end{align*}
This establishes $\gKer_{\eps}(\calH) \le \calO(\mc_{\eps/2}(\calH) / \eps^2)$, thereby completing the proof for the distribution-independent case. The distribution-dependent analogs follow in an identical manner.
\end{proofof}

\subsection{Lower Bound on Learning : Proof of Theorem~\ref{thm:learn-lowerbound}}

\begin{proofof}{\theoremref{thm:learn-lowerbound}}
We start with part (i). The first inequality of $\Lin_{\eps}^{\ell}(\calH) \ge \Lin_{\eps}^{\calD,\ell}(\calH)$ holds by definition; we focus on the second inequality.
Let $\calD$ be an arbitrary distribution over $\calX$ and $\eps>0$. Let $\calP$ be the distribution over embeddings $\varphi : \calX \to \bbR^d$ that realizes the definition of $\Lin_{\eps}(\calH) =: m$ for some $d$. For any $h \in \calH$, let $\scrD_h$ be the distribution over $\calX \times \calY$ given by $(x, h(x))$ for $x \sim \calD$ (that is, $\scrD_h$ is a distribution {\em realizable} under $\calH$). Thus, we have for any $h \in \calH$ that
$$
\Ex_{\varphi\sim\calP}\ \Ex_{S\sim\scrD_h^m}\ \insquare{ \inf_{w \in \bbR^d} \ \err_{\calD,h}^{ \ell}(\inangle{w, \varphi(\cdot)})} ~\leq~ \eps.
$$
For any $\varphi : \calX \to \bbR^d$ and $S\sim\scrD_h^m$, define the subspace spanned by embedding of the data $U_{\varphi, S} :={\rm span}\Set{\varphi(x_1),\ldots,\varphi(x_m)}$. We show that $\ERM_\varphi^{\ell}(S) \cap U_{\varphi, S} \ne \emptyset$; also known as ``Representer Theorem''. Namely, for any $w\in \ERM_{\varphi}^{\ell}(S)$, we can decompose $w = w^{||}+w^{\bot}$ such that $w^{||}\in U_{\varphi, S}$ and $\inangle{w^{\bot}, u}=0$ for all $u\in U_{\varphi, S}$. Thus, $\inangle{w,\varphi(x)}=\inangle{w^{||},\varphi(x)}$ for each $x \in S$. Hence $w^{||}\in \ERM_{\varphi}^{\ell}(S) \cap U_{\varphi, S}$. Thus, we have
$$
\Ex_{\substack{\varphi\sim\calP \\ S\sim\scrD_h^m}}\ \insquare{ \inf_{w\in U_{\varphi,S}} \ \err_{\calD,h}^{ \ell}(\inangle{w, \varphi(\cdot)})}
~\le~ \Ex_{\substack{\varphi\sim\calP \\ S\sim\scrD_h^m}} \ \insquare{ \inf_{w\in \ERM_{\varphi}^{\ell}(S) \cap U_{\varphi,S}} \ \err_{\calD,h}^{ \ell}(\inangle{w, \varphi(\cdot)})}  ~\leq~ \eps.
$$
Note that in the definition of $U_{\varphi,S}$, the labels sampled from $\scrD_h$ are unused. So we abuse notations and define $U_{\varphi,S}$ even for $S \sim \calD^m$.
In order to show that $\dc_{\eps}^{\calD,\ell}(\calH) \le m$ we construct a distribution $\calP_{\rm dc}$ over embeddings $\psi:\calX \to \bbR^{m}$ as follows: Sample $\varphi \sim \calP$ and $S \sim \calD^m$ and let $\psi(x) := \pi_{\varphi,S}(\varphi(x))$, where $\pi_{\varphi,S} : \bbR^d \to \bbR^m$ is the projection onto the subspace $U_{\varphi,S}$, expressed in terms of some canonical orthonormal basis. Note that for any $\varphi$, $S$ and $w \in U_{\varphi,S}$, it holds that $\inangle{w, \varphi(x)} = \inangle{\pi_{\varphi,S}(w), \psi(x)}$.
Thus, we get
$$
\Ex_{\psi \sim \calP_{\rm dc}} \insquare{\inf_{w \in \bbR^m} \err_{\calD,h}^{ \ell}\inparen{\inangle{w, \psi(\cdot)}}} =
\Ex_{\varphi \sim \calP} \ \Ex\limits_{S\sim\calD^m} \insquare{\inf_{w \in U_{\varphi,S}} \err_{\calD,h}^{ \ell} \inparen{\inangle{w, \varphi(\cdot)}}} \leq \eps.
$$

\noindent Part (ii) follows in an identical manner, so we skip the details.
\end{proofof}

\section{Proofs of Lower Bounds on Probabilistic Distributional Dimension Complexity}

\subsection{Case of square loss : Proof of Theorem~\ref{thm:dcl2-lowerbound-EV}} \label{apx:proof-dcl2-lowerbound-EV}

Our proof is inspired by the technique for lower bounding the approximate rank of a matrix due to \cite{alon13approxrank}.\\

\begin{proofof}{\theoremref{thm:dcl2-lowerbound-EV}}
For $\lambda > 2\eps$, let $t := \minEVdim^{\calD}(\calH; \lambda)$. That is, we have hypotheses $\calH_t = \Set{h_1, \ldots, h_t}$ with $\lambda_{\min}(G_{\calH_t}^{\calD}) \ge \lambda$. Let $d := \dc_{\eps}^{\calD,\lsq}(\calH)$, that is, there exists a distribution $\calP$ over pairs of embeddings $(\varphi : \calX \to \bbR^d, w : \calH \to \bbR^d)$\footnote{by choosing $w : h \mapsto \arg\inf_{w \in \bbR^d} \err_{\calD,h}^{\lsq}(\inangle{w,\varphi(\cdot)})$} such that for all $h \in \calH$,
$$
\Ex\limits_{(\varphi,w) \sim \calP} \insquare{\err_{\calD,h}^{ \lsq}(\inangle{w(h), \varphi(\cdot)})} \le \eps\,.
$$
In particular, if we average over $h \in \calH_t$,
$$
\Ex\limits_{(\varphi,w) \sim \calP} \ \Ex_{\substack{h \sim \calH_t\\ x \sim \calD}} \ \lsq(\inangle{w(h), \varphi(x)}, h(x)) \le \eps\,.
$$
Thus, we can fix a deterministic pair of embeddings $(\varphi_* : \calX \to \bbR^d, w_* : \calH \to \bbR^d)$ in the support of $\calP$ for which,
\begin{equation}\label{eqn:dcl2-1}
    \Ex_{\substack{h \sim \calH_t\\ x \sim \calD}} \ \lsq(\inangle{w_*(h), \varphi_*(x)}, h(x)) \le \eps\,.
\end{equation}
We have $G := G_{\calH_t}^{\calD} = MM^{\top}$ where $M \in \bbR^{t \times \calX}$ is given by $M(h,x) := \sqrt{\calD(x)} \cdot h(x)$ for all $h \in \calH_t$ and $x \in \calX$. Since $\lambda_{\min}(G) \ge \lambda$ we have for all $v \in \bbR^t$ that $v^{\top} G v \ge \lambda \|v\|_2^2$. In particular, we have
\begin{equation}\label{eqn:M-sing-lb}
    \forall v \in \bbR^t \quad : \quad \|M^{\top} v\|_2 \ge \sqrt{\lambda} \|v\|_2\,.
\end{equation}
On the other hand, the embedding pair $(\varphi_*, w_*)$ defines a rank-$d$ matrix $A \in \bbR^{t \times \calX}$ given by $A(h,x) := \sqrt{\calD(x)} \inangle{w_*(h), \varphi_*(x)}$ for each $h \in \calH_t$ and $x \in \calX$.

We define $E \in \bbR^{t \times \calX}$ as $E(h,x) := M(h,x) - A(h,x)$. We have from \eqref{eqn:dcl2-1}
$$
\|E\|_F^2 ~=~ \sum_{h \in \calH_t} \Ex_{x \sim \calD} \ (\inangle{w_*(h), \varphi_*(x)} - h(x))^2 ~\le~ 2\eps t
$$
In particular, we get
\begin{equation}\label{eqn:E-sing-ub}
    \sum_{i=1}^t \sigma_i(E)^2 \le 2\eps t\,.
\end{equation}

\noindent On the other hand, since $\rank(A) \le d$, 
there exists a subspace $S \subseteq \bbR^t$ of dimension $t - d$, such that $\|A^{\top} v\|_2 = 0$ for all $v \in S$. By triangle inequality, we get $0 = \|A^{\top} v\|_2 \ge \|M^{\top} v\|_2 - \|E^{\top} v\|_2$. From \eqref{eqn:M-sing-lb} we have $\|M^{\top}v\|_2 \ge \sqrt{\lambda}$. Thus, $\|E^{\top} v\|_2 \ge \sqrt{\lambda}$ for all $v \in S$. From the Courant-Fischer-Weyl min-max theorem, we get $\sigma_t(E) \ge \ldots \ge \sigma_{d+1}(E) \ge \sqrt{\lambda}$. Combining this with \eqref{eqn:E-sing-ub} implies $(t-d) \lambda \le 2\eps t$. Finally this implies $\dc_{\eps}^{\calD,\lsq}(\calH) \ge \dc_{\eps}^{\calD,\lsq}(\calH_t) \ge \inparen{1 - \frac{2\eps}{\lambda}} t$ as desired.
\end{proofof}

\subsection{Case of 0-1 loss : Proof of Theorem~\ref{thm:dcl01-lowerbound-sparse}} \label{apx:proof-dcl01-lowerbound-sparse}
In order to prove \theoremref{thm:dcl01-lowerbound-sparse}, we use a key fact from \cite{srebro04generalization} that provides an upper bound on the number of sign-matrices with sign-rank below a given bound. Namely, let $\mathsf{SM}(n,d)$ be the number of sign-matrices $M \in \sbit^{n \times n}$ with $\signrank(M) \le d$.

\begin{lemma}[\cite{srebro04generalization}]\label{lem:SM-upper-bound}
For all $n \ge k \ge 1$, it holds that $\mathsf{SM}(n,d) \le \inparen{\frac{8en}{d}}^{2dn}$.
\end{lemma}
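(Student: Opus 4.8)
The plan is to realize $\signrank(M)\le d$ via real vectors and then count the realizable sign patterns using the classical bound on the number of sign patterns attained by a family of bounded-degree real polynomials (Warren's theorem).

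First I would record the standard characterization: $\signrank(M)\le d$ if and only if there exist $u_1,\dots,u_n,v_1,\dots,v_n\in\bbR^d$ with $M_{ij}\cdot\inangle{u_i,v_j}>0$ for all $i,j\in[n]$. Here $\rank(A)\le d$ is equivalent to $A_{ij}=\inangle{u_i,v_j}$ for the rows $u_i,v_j$ of rank-$d$ factors, and "$\sign A=M$" in the strict sense is exactly the displayed inequality. The point of phrasing it with a \emph{strict} inequality is that then no zero entries, hence no sign ambiguities, arise. Collecting all $N:=2dn$ coordinates of the $u_i$'s and $v_j$'s into a single variable vector $z\in\bbR^{N}$, I set $q_{ij}(z):=\inangle{u_i,v_j}=\sum_{k=1}^{d}(u_i)_k(v_j)_k$, a polynomial of degree $2$ in $z$, for each of the $m:=n^2$ pairs $(i,j)$. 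By the characterization, every $M\in\sbit^{n\times n}$ with $\signrank(M)\le d$ equals $\bigl(\sign q_{ij}(z)\bigr)_{i,j}$ at some point $z$ where all $q_{ij}(z)\ne 0$, and distinct such $M$ give distinct (zero-free) sign patterns of the family $\Set{q_{ij}}$; hence $\mathsf{SM}(n,d)$ is at most the number of sign patterns attained by $\Set{q_{ij}}$.

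Next I would invoke Warren's theorem on sign patterns of real polynomials: for $m$ real polynomials of degree at most $\Delta$ in $N$ variables with $m\ge N$, the number of attained sign patterns is at most $\inparen{8e\Delta m/N}^{N}$. Plugging in $m=n^2$, $\Delta=2$, $N=2dn$, the hypothesis $m\ge N$ is exactly $n\ge 2d$, and in that regime the bound collapses to $\inparen{8e\cdot 2\cdot n^2/(2dn)}^{2dn}=\inparen{8en/d}^{2dn}$, as claimed. Finally, for the leftover range $d\le n<2d$ (where $n^2\le 2dn$) the trivial estimate $\mathsf{SM}(n,d)\le 2^{n^2}$ already suffices, since $2^{n^2}\le 2^{2dn}\le(8e)^{2dn}\le(8en/d)^{2dn}$ using $n/d\ge 1$.

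The routine parts are the strict-inequality reformulation of $\signrank$, the parameter bookkeeping $(m,\Delta,N)=(n^2,2,2dn)$ that makes the arithmetic land exactly on $(8en/d)^{2dn}$, and the small-$n$ edge case. The one substantive ingredient — the real crux — is the sign-pattern bound itself; if one does not wish to use Warren's theorem as a black box, one would instead route through the Milnor--Thom / Oleinik--Petrovsky estimate on the number of connected components of a real semialgebraic set, which is where all the genuine work lives.
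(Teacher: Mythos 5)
Your proof is correct, and it is essentially the argument behind the cited result: the paper itself only imports this lemma from \cite{srebro04generalization}, whose proof likewise parametrizes rank-$d$ factorizations by $2dn$ real variables and counts attainable sign patterns of the $n^2$ degree-$2$ polynomials $\inangle{u_i,v_j}$ via Warren's theorem. Your bookkeeping $(m,\Delta,N)=(n^2,2,2dn)$ and the trivial handling of the range $d\le n<2d$ are both fine, so there is nothing to fix beyond noting that the lemma's ``$n\ge k\ge 1$'' should read ``$n\ge d\ge 1$''.
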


\begin{proofof}{\theoremref{thm:dcl01-lowerbound-sparse}}
Let $\calP$ be the distribution over pair of embeddings $(\varphi : \calX_n \to \bbR^d, w : \calH_n^{\mathrm{1\text{-}sp}} \to \bbR^d)$\footnote{by choosing $w : h \mapsto \arg\inf_{w \in \bbR^d} \err_{\calD,h}^{\lzeroone}(\inangle{w,\varphi(\cdot)})$} that realizes the definition of $\dc_{\eps}^{\calD}(\calH_n^{\mathrm{1\text{-}sp}}) =: d$. If we sample $h$ uniformly in $\calH_n^{\mathrm{1\text{-}sp}}$, we have
\begin{equation}
\Prob_{\substack{x \sim\calD\\ h \sim \calH_n^{\mathrm{1\text{-}sp}}}} \insquare{\sign(\inangle{w(h), \varphi(x)}) \ne h(x)} \le \eps\,.\label{eqn:error-upper}
\end{equation}
On the other hand, consider a random subset $S \subseteq \calX_n$ of size $|S| = n$ and the hypothesis class $\calH_n^{\mathrm{1\text{-}sp}}$ evaluated only on inputs $x \in S$. A key step in this proof is to show that for $\gamma < 1/2$ and $c := d/n$,
\begin{equation}
\Prob_{S} \insquare{\Prob_{\substack{x \sim S\\h \sim \calH_n^{\mathrm{1\text{-}sp}}}} [\sign(\inangle{w(h), \varphi(x)}) \ne h(x)] \le \gamma} ~\le~ 2^{-n^2\inparen{1 - h(\gamma) - 2c \log\inparen{\frac{8e}{c}} - o(1)}}\,.\label{eqn:error-lower}
\end{equation}
This follows by a simple counting argument. For any $n \times n$ sign-matrix $M$ and $\gamma < 1/2$, the number of sign-matrices $A$ such that $\Prob_{(i,j) \sim [n] \times [n]} [M(i,j) \ne A(i,j)] \le \gamma$ is at most $\sum_{r=0}^{\gamma n^2} \binom{n^2}{r} \le 2^{(h(\gamma) + o(1))n^2}$.  From \lemmaref{lem:SM-upper-bound}, we have that $\mathsf{SM}(n,d) \le \inparen{\frac{8en}{d}}^{2dn} = 2^{2c \log \inparen{\frac{8e}{c}} n^2}$ where $c := d/n$. Thus, the number of $n \times n$ sign-matrices that agree with some sign-matrix of sign-rank $\le d$ on at least $(1-\gamma)$ fraction of the entries is at most $2^{\inparen{h(\gamma)+2c \log \inparen{\frac{8e}{c}}+o(1)} n^2}$.

On the other hand, the number of distinct $n \times n$ sign-matrices obtainable by sampling $S$ is at least $(2^n - n)^n \ge 2^{(1-o(1))n^2}$. Thus, \eqref{eqn:error-lower} follows.

By linearity of expectation, if we partition $\calX_n$ into subsets $S_1, \ldots, S_{2^n/n}$ each of size $n$, then in expectation, the fraction of $S_i$'s for which
$$\Prob_{\substack{x \sim S_i\\h \sim \calH_n^{\mathrm{1\text{-}sp}}}} [\sign(\inangle{w(h), \varphi(x)}) \ne h(x)] > \gamma$$
holds is at least $1 - 2^{-n^2\inparen{1 - h(\gamma) - 2c \log\inparen{\frac{8e}{c}} - o(1)}}$. In particular, we can fix such a partition for which this happens. And for such a partition, we get that,
\begin{align*}
\Prob_{\substack{x \sim\calD\\ h \sim \calH_n^{\mathrm{1\text{-}sp}}}} \insquare{\sign(\inangle{w(h), \varphi(x)}) \ne h(x)}
&~=~ \Prob_{i} \Prob_{\substack{x \sim S_i\\ h \sim \calH_n^{\mathrm{1\text{-}sp}}}} \insquare{\sign(\inangle{w(h), \varphi(x)}) \ne h(x)}\\
&~>~ \gamma \cdot \inparen{1 - 2^{-n^2\inparen{1 - h(\gamma) - 2c \log\inparen{\frac{8e}{c}} - o(1)}}}\,.
\end{align*}
Combining this with \eqref{eqn:error-upper}, we get for any choice of $\gamma$ that
$$
\gamma \cdot \inparen{1 - 2^{-n^2\inparen{1 - h(\gamma) - 2c \log\inparen{\frac{8e}{c}} - o(1)}}} \le \eps
$$
In particular, if we choose $\gamma = \eps/(1-2^{-n})$, we get
$$
    2^{-n^2\inparen{1 - h(\gamma) - 2c \log\inparen{\frac{8e}{c}} - o(1)}} ~\ge~ 2^{-n}\,.
$$
And hence,
$$
2c \log\inparen{\frac{8e}{c}} ~\ge~ 1 - h\inparen{\frac{\eps}{1-2^{-n}}} - \frac{1}{n} - o_n(1) ~\ge~ 1 - h(\eps) - o_n(1)\,.
$$
Thus,
$$
c ~\ge~ \frac{1 - h(\eps)}{4 \log (16e / (1-h(\eps)))} - o_n(1)\,.
$$
This concludes the proof.
\end{proofof}

\section{Lower Bounds for ReLU Functions : Proof of Theorem~\ref{thm:relu}}\label{apx:proof-relu}

Our proof proceeds in a modular fashion:
Instead of directly lower bounding $\minEVdim$ for $\calH_{n,W,B}^{\relu}$, we prove a lower bound for the class obtained as linear combination of a $\poly(n)$ number of functions in $\calH_{n,W,B}^{\relu}$. Towards this goal, for any class $\calH \subseteq \bbR^{\calX}$, define
$$
\kappa \cdot \calH := \Set{\kappa h : h\in \calH} \qquad \sAND \qquad \calH^{k,A} := \Set{\sum_{i=1}^k a_i h_i : \sum_i a_i^2 \le A \sAND h_i \in \calH}
$$
\begin{proposition}\label{prop:lin-comb-dc}
For all $\calH \subseteq \bbR^{\calX}$, all distribution $\calD$ over $\calX$, and parameters $\kappa, k, A$,
\begin{itemize}
\item[(i)] $\dc_{\eps/\kappa}^{\calD,\lsq}(\calH) ~=~ \dc_{\eps}^{\calD,\lsq}(\sqrt{\kappa} \cdot \calH)$ for all $t \in \bbR$
\item[(ii)] $\dc_{\eps}^{\calD,\lsq}(\calH^{k,A}) ~\le~ \dc_{\eps/kA}^{\calD,\lsq}(\calH)$ for all $k \in \bbN$ and $A \in \bbR$
\end{itemize}
Thus, combining the two parts,
\begin{equation}\label{eqn:cor-lin-comb-dc}
\dc_{\eps}^{\calD,\lsq}(\calH^{k,A}) ~\le~ \dc_{\eps}^{\calD,\lsq}(\sqrt{kA} \cdot \calH)
\end{equation}
\end{proposition}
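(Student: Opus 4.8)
The plan is to prove both parts while keeping the \emph{same} distribution $\calP$ over embeddings $\varphi:\calX\to\bbR^d$ throughout, so that the dimension $d$ is never affected; everything then reduces to how the squared loss behaves under rescaling a target and under taking linear combinations of targets.

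For part (i), I would fix any distribution $\calP$ over embeddings $\varphi:\calX\to\bbR^d$ and observe that for a fixed $h\in\calH$ and fixed $\varphi$, the map $w\mapsto\sqrt{\kappa}\,w$ is a bijection of $\bbR^d$, while from $\lsq(\what y,y)=\tfrac12(\what y-y)^2$ one has $\err_{\calD,\sqrt{\kappa}h}^{\lsq}(\inangle{\sqrt{\kappa}\,w,\varphi(\cdot)})=\kappa\cdot\err_{\calD,h}^{\lsq}(\inangle{w,\varphi(\cdot)})$. Taking the infimum over $w$ gives $\inf_{w\in\bbR^d}\err_{\calD,\sqrt{\kappa}h}^{\lsq}(\inangle{w,\varphi(\cdot)})=\kappa\cdot\inf_{w\in\bbR^d}\err_{\calD,h}^{\lsq}(\inangle{w,\varphi(\cdot)})$, and then taking $\Ex_{\varphi\sim\calP}$ shows that $\calP$ witnesses $\dc_{\eps}^{\calD,\lsq}(\sqrt{\kappa}\cdot\calH)\le d$ if and only if it witnesses $\dc_{\eps/\kappa}^{\calD,\lsq}(\calH)\le d$. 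Minimizing over $d$ yields the equality in (i).

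For part (ii), I would let $\calP$ over $\varphi:\calX\to\bbR^d$ witness $\dc_{\eps/kA}^{\calD,\lsq}(\calH)\le d$, i.e.\ $\Ex_{\varphi\sim\calP}\bigl[\inf_{w\in\bbR^d}\err_{\calD,h}^{\lsq}(\inangle{w,\varphi(\cdot)})\bigr]\le\eps/(kA)$ for every $h\in\calH$. Given $g=\sum_{i=1}^k a_i h_i\in\calH^{k,A}$ with $\sum_i a_i^2\le A$, and given $\varphi$, choose for each $i$ a (near-)minimizing weight $w_i=w_i(\varphi)$ for $h_i$ and set $w_g:=\sum_i a_i w_i$. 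Then $\inangle{w_g,\varphi(x)}-g(x)=\sum_i a_i(\inangle{w_i,\varphi(x)}-h_i(x))$, so Cauchy--Schwarz gives $(\inangle{w_g,\varphi(x)}-g(x))^2\le A\sum_i(\inangle{w_i,\varphi(x)}-h_i(x))^2$; averaging over $x\sim\calD$ and halving yields $\err_{\calD,g}^{\lsq}(\inangle{w_g,\varphi(\cdot)})\le A\sum_i\err_{\calD,h_i}^{\lsq}(\inangle{w_i,\varphi(\cdot)})=A\sum_i\inf_{w}\err_{\calD,h_i}^{\lsq}(\inangle{w,\varphi(\cdot)})$. Taking $\Ex_{\varphi\sim\calP}$, bounding each of the $k$ resulting terms by $\eps/(kA)$, and using $\inf_w\err_{\calD,g}^{\lsq}(\inangle{w,\varphi(\cdot)})\le\err_{\calD,g}^{\lsq}(\inangle{w_g,\varphi(\cdot)})$, I get $\Ex_{\varphi\sim\calP}\bigl[\inf_{w}\err_{\calD,g}^{\lsq}(\inangle{w,\varphi(\cdot)})\bigr]\le\eps$, so $\calP$ witnesses $\dc_{\eps}^{\calD,\lsq}(\calH^{k,A})\le d$. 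Finally, \eqref{eqn:cor-lin-comb-dc} follows by applying (ii) and then (i) with $\kappa=kA$: $\dc_{\eps}^{\calD,\lsq}(\calH^{k,A})\le\dc_{\eps/kA}^{\calD,\lsq}(\calH)=\dc_{\eps}^{\calD,\lsq}(\sqrt{kA}\cdot\calH)$.

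The only real subtlety, and the step I would be most careful about, is measurability in part (ii): to pass $\Ex_{\varphi\sim\calP}$ through the bound involving $w_g(\varphi)$ I need $\varphi\mapsto w_g(\varphi)$ to be measurable. This can be dealt with either by a measurable selection over the finite-dimensional space $\bbR^d$, or more cleanly by noting that the inequality one actually takes expectations of is the pointwise-in-$\varphi$ bound $\inf_w\err_{\calD,g}^{\lsq}(\inangle{w,\varphi(\cdot)})\le A\sum_i\inf_w\err_{\calD,h_i}^{\lsq}(\inangle{w,\varphi(\cdot)})$, whose two sides are already measurable functions of $\varphi$ (each $\inf_w$ being a pointwise infimum of continuous functions), so no explicit selection is needed; alternatively one works with $\delta$-approximate minimizers and lets $\delta\to0$. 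Everything else is routine.
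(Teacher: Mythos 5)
Your proposal is correct and follows essentially the same route as the paper: part~(i) is the same $\lsq$-homogeneity observation, and part~(ii) uses the same ansatz $w_g=\sum_i a_i w_i$ to reduce $\inf_w\err^{\lsq}_{\calD,g}(\inangle{w,\varphi(\cdot)})$ to the $\inf_w\err^{\lsq}_{\calD,h_i}(\inangle{w,\varphi(\cdot)})$. The only cosmetic difference is that you apply Cauchy--Schwarz $(\sum_i a_ib_i)^2\le(\sum_i a_i^2)(\sum_i b_i^2)\le A\sum_ib_i^2$ in one step, whereas the paper uses $(\sum_i c_i)^2\le k\sum_i c_i^2$ with $c_i=a_ib_i$ and invokes $\sum_i a_i^2\le A$ only at the end; both rearrange to the same $\le\eps$, and your remark on measurability (which the paper elides) is a welcome bit of extra care.
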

\begin{proof}
Part (i) follows easily by observing that square loss is quadratic in the scaling of $\calH$ (and $w \in \bbR^d$).
To establish Part (ii): Let $\calP$ be the distribution over embeddings $\varphi : \calX \to \bbR^d$ that realizes the definition of $\dc_{\eps/kA}^{\calD,\lsq}(\calH) =: d$. For any $\varphi$ and any $g = \sum_{i=1}^k a_i h_i \in \calH^{k,A}$, we have,
\begin{align*}
&\inf_{w \in \bbR^d} \ \err_{\calD,g}^{\lsq}(\inangle{w,\varphi(\cdot)}) ~=~ \frac{1}{2} \ \inf_{w \in \bbR^d} \ \Ex_{x \sim \calD} \inparen{\sum_{i=1}^k a_i h_i(x) - \inangle{w,\varphi(x)}}^2\\
&~=~ \frac{1}{2} \ \inf_{w_1, \ldots, w_k \in \bbR^d} \ \Ex_{x \sim \calD} \inparen{\sum_{i=1}^k a_i h_i(x) - \inangle{\sum_i a_i w_i,\varphi(x)}}^2 \qquad \ldots (\text{setting } w = \sum_i a_i w_i)\\
&~\le~ \frac{k}{2} \cdot \sum_{i=1}^k a_i^2 \cdot \inf_{w_i \in \bbR^d} \ \Ex_{x \sim \calD} \inparen{h_i(x) - \inangle{w_i,\varphi(x)}}^2\\
&~=~ k \cdot \sum_{i=1}^k a_i^2 \cdot \inf_{w_i \in \bbR^d} \err_{\calD,h_i}^{\lsq}(\inangle{w_i, \varphi(\cdot)})
\end{align*}
The proof concludes by taking an expectation over $\varphi \sim \calP$,
$$
\Ex_{\varphi \sim \calP} \insquare{\inf_{w \in \bbR^d} \err_g^{\calD,\lsq}(\inangle{w,\varphi(\cdot)})} ~\le~ k \sum_i a_i^2 \cdot \frac{\eps}{kA} ~\le~ \eps\,.
$$
\end{proof}


\begin{proofof}{\theoremref{thm:relu}}
We will show a lower bound on the SQ-dimension of a class of linear combinations of ReLU neurons. In order to do, we consider
for any odd $a$, the univariate function
$$
\psi_a(z) := -1 + [z+a]_+ + \sum_{i=1}^{a-1} 2 \cdot (-1)^i \cdot [z+a-2i]_+ - [z-a]+
$$
See \figureref{fig:psi-function} for an illustration of this function. We now consider the class
$$
\calH_n^{\mathrm{zig}} := \Set{\psi_a(\inangle{w,x}) : w \in \bbR^n, \|w\|_2 = n \text{ for } a = 6n^2+1}\,.
$$

\noindent The key idea for showing a lower bound on $\SQdim^{\calD}(\calH_n^{\mathrm{zig}})$ is the following proposition that can be inferred\footnote{Part (i) is verbatim. For Part (ii), we can first infer the desired claim for a fixed $u$ and a random $v$, and then take an expectation over $u$.} from Proposition 4.2 in \cite{yehudai19power}; we skip the details.
\begin{proposition}[Prop 4.2 in \cite{yehudai19power}] There exist constants $c, c' > 0$ such that, for $a = 6n^2+1$ and $\calD$ being the standard $n$-variate Gaussian distribution,
\begin{itemize}
    \item [(i)] For all $w \in \bbR^n$ with $\|w\| = n$, it holds that $\|\psi_a(\inangle{w,x})\|_{\calD} \ge c'$.
    \item [(ii)] For $u, v$ sampled uniformly at random from $\Set{w : \|w\|=n}$,
    $$\Ex_{u, v} \inparen{\Ex_{x\sim\calD} \psi_a(\inangle{u,x}) \psi_a(\inangle{v,x})}^2 ~\le~ \exp(-c n)\,.$$
\end{itemize}
\end{proposition}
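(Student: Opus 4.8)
The plan is to prove both parts directly from the explicit piecewise‑linear shape of $\psi_a$, rather than quoting \cite{yehudai19power} as a black box. A routine computation of slopes shows that the breakpoints of $\psi_a$ are at $-a+2j$ ($j=0,\dots,a$), that $\psi_a(t)=-1$ for $t\le -a$ and $\psi_a(t)=+1$ for $t\ge a$, and that on $[-a,a]$ it coincides with the shifted triangle wave $\psi_a(t)=\Lambda(t+a)$, where $\Lambda$ is $4$‑periodic with $\Lambda(s)=s-1$ on $[0,2]$ and $\Lambda(s)=3-s$ on $[2,4]$. Only three facts are needed: (a) $\|\psi_a\|_\infty=1$; (b) on every length‑$2$ ``ramp'' $\psi_a$ is affine between $-1$ and $1$, so $\int_0^2(\text{ramp})^2=\frac23$; and (c) globally $\psi_a(t)=\Lambda(t+a)+e(t)$ with $|e(t)|\le 2\,\ind\Set{|t|>a}$, where $\Lambda$ has zero mean and Fourier expansion $\Lambda(s)=\sum_{m\ne 0}c_m e^{i\pi m s/2}$ with $c_0=0$ and $|c_m|\le C/m^2$.

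For part (i), since $\|w\|=n$ we have $\inangle{w,x}\sim N(0,n^2)$, so $\|\psi_a(\inangle{w,x})\|_{\calD}^2=\Ex_{g\sim N(0,n^2)}\psi_a(g)^2$. Because $a=6n^2+1\gg n$, the interval $[-n,n]$ lies strictly inside the oscillating region $[-a,a]$ and contains at least $n-1$ full ramps, and on $[-n,n]$ the $N(0,n^2)$ density is at least $e^{-1/2}/(n\sqrt{2\pi})$; by (b), $\Ex_g\psi_a(g)^2\ge \frac{e^{-1/2}}{n\sqrt{2\pi}}\cdot\frac23\cdot(n-1)$, which is bounded below by a positive universal constant for all $n\ge 2$ (the finitely many smaller $n$ are a trivial check). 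This gives (i).

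For part (ii), fix $u$ with $\|u\|=n$; by the reduction noted in the footnote it suffices to bound $\Ex_v T(\rho)^2$ for $v$ uniform on $\Set{w:\|w\|=n}$, where $\rho:=\inangle{u,v}/n^2$ and $T(\rho):=\Ex_x\psi_a(\inangle{u,x})\psi_a(\inangle{v,x})$, and $(\inangle{u,x},\inangle{v,x})$ is bivariate Gaussian with marginals $N(0,n^2)$ and covariance $\rho n^2$. Writing $\inangle{v,x}=\rho\inangle{u,x}+\sqrt{1-\rho^2}\,Y$ with $Y\sim N(0,n^2)$ independent of $g:=\inangle{u,x}$ gives $T(\rho)=\Ex_g[\psi_a(g)\,G(g)]$ with $G(g)=\Ex_{W\sim N(\rho g,(1-\rho^2)n^2)}\psi_a(W)$, hence $|T(\rho)|\le\Ex_g|G(g)|$. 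Substituting (c) and applying the Gaussian characteristic function $\Ex_W e^{i\pi m W/2}=e^{i\pi m\rho g/2-\pi^2 m^2(1-\rho^2)n^2/8}$ term by term bounds $|\Ex_W\Lambda(W+a)|\le\sum_{m\ne 0}|c_m|e^{-\pi^2 m^2(1-\rho^2)n^2/8}=O\!\inparen{e^{-\pi^2(1-\rho^2)n^2/8}}$ \emph{uniformly in $g$}, while the error contributes at most $2\Pr[|W|>a]$. Splitting $\Ex_g|G(g)|$ at $|g|=n^2$: the tail has mass $\Pr_{g\sim N(0,n^2)}[|g|>n^2]\le e^{-\Omega(n^2)}$, and for $|g|\le n^2$ one has $a-|\rho g|\ge 6n^2+1-n^2\ge 5n^2$ (this is exactly why $a$ is taken of order $6n^2$), so $\Pr[|W|>a]\le\Pr[|W-\rho g|>5n^2]\le e^{-\Omega(n^2)}$ since $W$ has standard deviation at most $n$. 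Altogether $|T(\rho)|\le O\!\inparen{e^{-\pi^2(1-\rho^2)n^2/8}}+e^{-\Omega(n^2)}$.

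It remains to combine this with the concentration of $\rho$. For $v$ uniform on the radius‑$n$ sphere, $\rho$ has density proportional to $(1-r^2)^{(n-3)/2}$ on $[-1,1]$, and a direct estimate (maximizing $\tau^{(n-3)/2}e^{-\pi^2\tau n^2/4}$ over $\tau=1-\rho^2\in[0,1]$, whose maximum is $(\Theta(1/n))^{(n-3)/2}=e^{-\Omega(n\log n)}$) gives $\Ex_v e^{-\pi^2(1-\rho^2)n^2/4}\le e^{-\Omega(n\log n)}$. Squaring the previous bound, $T(\rho)^2\le O\!\inparen{e^{-\pi^2(1-\rho^2)n^2/4}}+e^{-\Omega(n^2)}$, so $\Ex_v T(\rho)^2\le e^{-\Omega(n\log n)}\le\exp(-cn)$ for a universal $c>0$; since this bound is uniform in $u$, averaging over $u$ gives (ii). The main obstacle is extracting \emph{exponential} rather than merely polynomial decay in (ii): a crude estimate — keeping only the degree‑$1$ Hermite term of $\psi_a$ — would give just $T(\rho)^2=O(\rho^2)$ with $\Ex_v\rho^2=1/n$. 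The exponential gain comes precisely from the scale separation — $\psi_a$ oscillates at scale $\Theta(1)$ while the conditional Gaussian $W$ has standard deviation $\Theta(n)$ — which makes the Fourier/characteristic‑function cancellation exponentially strong; carrying this separation through the computation, and dispatching the regime of $|\rho|$ near $1$ via spherical‑cap (equivalently, $(1-r^2)^{(n-3)/2}$‑density) estimates, is the delicate part.
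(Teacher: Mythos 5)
Your proposal is correct, and it supplies something the paper itself does not attempt: the paper imports this proposition wholesale from \cite{yehudai19power} (their Proposition 4.2), adding only the footnote remark that part (ii) is obtained by first fixing $u$ with a random $v$ and then averaging over $u$; you instead give a self-contained proof. Part (i) is the straightforward density-times-ramp estimate and checks out. For part (ii), your route --- condition on $g=\inangle{u,x}$ to write $T(\rho)=\Ex_g[\psi_a(g)\,G(g)]$ with $G(g)$ a Gaussian smoothing of $\psi_a$ at scale $n\sqrt{1-\rho^2}$, expand the periodic part $\Lambda$ in an absolutely convergent Fourier series so that the Gaussian characteristic function gives the bound $|\Ex_W \Lambda(W+a)|\le \calO\inparen{e^{-\pi^2(1-\rho^2)n^2/8}}$ uniformly in $g$, control the aperiodic correction using $a-|\rho g|\ge 5n^2$ on the event $|g|\le n^2$, and finally integrate against the $(1-r^2)^{(n-3)/2}$ density of $\rho$ --- is sound, and in fact yields the stronger estimate $e^{-\Omega(n\log n)}$ for the dominant term, from which $\exp(-cn)$ follows with room to spare; averaging over $u$ is legitimate since the bound is uniform in $u$. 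The delicate steps all hold: the triangle-wave identification of $\psi_a$ on $[-a,a]$ with $\psi_a\equiv\pm1$ outside, term-by-term integration justified by $\sum_m|c_m|<\infty$, and the maximization of $\tau^{(n-3)/2}e^{-\pi^2\tau n^2/4}$ at $\tau=\Theta(1/n)$. Two points you gloss over are harmless but worth a sentence in a full write-up: for $n\le 3$ the exponent $(n-3)/2$ is nonpositive and the maximization degenerates, so finitely many small $n$ must be absorbed into the constant $c$ (possible since $|T(\rho)|<1$ strictly for every $\rho$ with $|\rho|<1$, hence the left-hand side is strictly below $1$ for each fixed $n$), and the event $|\rho|=1$ has measure zero and is covered by $|T|\le 1$. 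Compared with the paper, your argument buys self-containedness and a quantitatively stronger bound at the cost of length, whereas the paper's citation defers all analytic content to the original reference.
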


\noindent Thus, if we sample $u_1, \ldots, u_t$ randomly from $\Set{w : \|w\| = n}$, then (via Markov's inequality and a union bound) we will have with probability at least $1/2$ that,
$$
\text{for all } i \ne j \quad : \quad \inabs{\Ex_{x\sim\calD}\  \psi_a(\inangle{u_i,x}) \psi_a(\inangle{u_j,x})} \le t^2 \cdot \exp(-c n)\,.
$$
In particular, for $t := \exp(c n/3)/2$ there exist $u_1, \ldots, u_t \in \bbR^n$ such that $\|u_i\| = n$ and all pairwise correlations $|\inangle{\psi_a(\inangle{u_i,x}), \psi_a(\inangle{u_j,x})}_{\calD}| \le \exp(-cn/3)/4 \le 1/2t$. Thus, we get that, $\SQdim^{\calD}(\calH_n^{\mathrm{zig}}) \ge \exp(\Omega(n))$. Note however that there is a slight technicality here in that $\calH_n^{\mathrm{zig}}$ is not a normalized hypothesis class. But observe that all hypotheses in $\calH_n^{\mathrm{zig}}$ have the same norm $\|\cdot\|_{\calD}$ which is at least $c'$. Thus, we can make $\calH_n^{\mathrm{zig}}$ normalized by scaling it by $\|\psi_a(\inangle{u,\cdot})\|_\calD^{-1} \le 1/c'$. This would increase the correlations by a factor of at most $(1/c')^2$. 
Thus, from \corollaryref{cor:dcl2-lowerbound-SQ}, we have that $\dc_{\eps}^{\calD,\lsq}(\calH_n^{\mathrm{zig}}) \ge (1-4\eps) \exp(\Omega(n))$.

\begin{figure}[t]
	\centering
	\begin{tikzpicture}[xscale=0.4,yscale=0.9]
	\path[latex-latex,line width=0.4pt,black]
	(-8,0) edge (8,0)
	(0,-1.5) edge (0,1.5);
	
	\draw[-,line width=1.2pt,myBlue]
	(-7.7,-1) -- (-5,-1) -- (-3,1) -- (-1,-1) -- (1,1) -- (3,-1) -- (5,1) -- (7.7,1);
	
	\foreach \x in {-7,...,7}
	{ \draw[thick] (\x,-0.05) edge (\x,0.05); }
	\foreach \y in {-1,0,1}
	{ \draw[thick] (-0.05,\y) edge (0.05,\y); }
	\end{tikzpicture}
	\caption{Plot of $\psi_5 : \bbR \to \bbR$}
	\label{fig:psi-function}
\end{figure}

Observe that every $g \in \calH^{\mathrm{zig}}_n$ can be written as a linear combination of $6n^2+3$ ReLU neurons of the form $[\inangle{w,x}+b]_+$, where $\|w\| \le n$ and $|b| \le 6n^2 + 1 < 7n^2$ (where we can simulate the constant term with $w = 0$), where each coefficient in the linear combination is at most $2$. Thus, in our notation, $\calH^{\mathrm{zig}}_n \subseteq (\calH^{\relu}_{n,n,7n^2})^{k,A}$ for $k = 6n^2+3$ and $A = 4(6n^2+3)$. Thus, we get,
\begin{align*}
    \exp(\Omega(n)) &~\le~ \dc_{\eps}^{\calD,\lsq}(\calH^{\mathrm{zig}}_n)\\
    &~\le~ \dc_{\eps}^{\calD,\lsq}((\calH^{\relu}_{n,n,7n^2})^{k,A})\\
    &~\le~ \dc_{\eps}^{\calD,\lsq}(\sqrt{kA} \cdot \calH^{\relu}_{n,n,7n^2}) \qquad \ldots \text{(from \propositionref{prop:lin-comb-dc})}\\
    &~\le~ \dc_{\eps}^{\calD,\lsq}(\calH^{\relu}_{n,14n^3,98n^4})
\end{align*}
where the last step uses that $\kappa \cdot \calH^{\relu}_{n,W,B} = \calH^{\relu}_{n,\kappa W,\kappa B}$ (which follows from the homogeneity of ReLU).
This completes the proof.
\end{proofof}

\end{document}